\documentclass{article}
\usepackage{rst_preprint,times}
\usepackage{amsmath,amssymb,amsfonts,amsthm,mathtools}
\usepackage{microtype}
\usepackage{booktabs}
\usepackage{array}
\usepackage{float}
\usepackage{placeins}
\usepackage{longtable}
\usepackage{enumitem}
\usepackage{hyperref}
\usepackage{url}
\usepackage{graphicx}
\usepackage{tikz}
\usetikzlibrary{arrows.meta,positioning,calc}
\usepackage{pgfplots}
\pgfplotsset{compat=1.18}
\hypersetup{colorlinks=true,linkcolor=blue,citecolor=blue,urlcolor=blue}
\setlist{topsep=2pt,itemsep=0pt,parsep=0pt,partopsep=0pt}

\title{RST: Certifying and Constructing Prescribed \mbox{Information} in Variational Autoencoders}
\author{%
Zegu Zhang$^{1}$ \quad Jianhua Peng$^{2}$ \quad Jian Zhang$^{3}$\\[-1mm]
{\small $^{1}$Independent Researcher}\\
{\small $^{2}$School of Computing, Southeast University}\\
{\small $^{3}$Independent Researcher}\\[1mm]
{\small \texttt{zeguzhang@outlook.com} \quad \texttt{2018100913@niit.edu.cn}}\\
{\small \texttt{zhangjian.1993@tsinghua.org.cn}}
}
\date{}

\newtheorem{theorem}{Theorem}
\newtheorem{proposition}{Proposition}
\newtheorem{corollary}{Corollary}

\newcommand{\E}{\mathbb{E}}
\newcommand{\R}{\mathbb{R}}
\newcommand{\N}{\mathcal N}
\newcommand{\KL}{\mathrm{KL}}
\newcommand{\softmax}{\operatorname{softmax}}
\newcommand{\LTS}{L_{\mathrm{TS}}}
\newcommand{\IT}{I_T}

\newcommand{\G}{G_T}
\newcommand{\uphi}{u_\phi}
\newcommand{\uT}{u_T}

\newcommand{\mainback}[2]{\noindent{\footnotesize\emph{\hyperref[#1]{Back to main text: #2}}}\par\vspace{-0.35em}}
\begin{document}
\maketitle
\begin{abstract}
Posterior-collapse diagnostics can establish that a VAE uses its latent representation, but they do not determine whether a specified information view crosses a representation/readout interface fixed before training.  Reconstruction--Student--Teacher (RST) makes that stronger requirement explicit: \textbf{R denotes Reconstruction}, S the fixed Student witness, and T the frozen Teacher specification.  Let $T_X=T(\cdot\mid X)$ be a full-support categorical Teacher, $C\mid X\sim T_X$, $\bar T=\mathbb E_XT_X$, and let an audited representation $Z$ satisfy $C-X-Z$.  With $I_T:=I(C;X)$, $L_{\rm TS}:=\mathbb E\,\KL(T_X\|S(\cdot\mid Z))$, and $G_T:=I_T-L_{\rm TS}$, we prove $I(C;Z)\ge G_T$ and $L_{\rm TS}\ge I_T$ for every input-independent $Z\perp X$, with equality attained by the centered RST witness.  Thus $G_T>0$ certifies transmission of the declared Teacher information through the specified interface.  In the main experiments $Z=u_\phi(X)$ is the deterministic encoder mean, while reconstruction uses $z\sim q_\phi(z\mid x)$.

A centered regular-simplex witness turns the certificate into an explicit construction: a minimum-dimensional closed-form Teacher code, its complete affine solution fiber, exact witness-visible/witness-null routing, a margin--energy path, strict function-space departure from the centered input-independent point, certified preservation cylinders, and orthogonal multi-view composition.  Five-dataset experiments test boundary crossing, stronger KL pressure, Teacher counterfactuals, witness-null routing, escape, and preservation.  On CIFAR-100, Aux codes are probe-readable while their predeclared margins are negative.  Cross-seed Aux-head reuse falls from $36.57/21.88\%$ native accuracy to $5.20/0.94\%$, whereas one fixed RST witness directly reads all five Conv-RST encoders at $43.84/33.58\%$.  The same witness is also certificate-positive on five independently optimized ResVAE RST encoders ($G_c=0.869\pm0.057$, $G_f=1.083\pm0.105$); Conv$\leftrightarrow$Res learned-head reuse is near chance, while a ridge-affine adapter makes all $100/100$ transferred margins positive.  These tests validate a functional consequence of prescribed alignment; they do not claim fixed simplex classifiers or compatible representations as new.
\end{abstract}
\vspace{-1.00em}
\section{Introduction}\label{sec:main-introduction}
A variational autoencoder (VAE)~\citep{kingma2014vae,rezende2014stochastic} can fail to use its representation in more than one way.  The posterior may become input-independent; the code may remain active while discarding a factor of interest; or the factor may be present but only in an arbitrary orientation that must be discovered by fitting a new probe.  These cases are not equivalent.  Ordinary downstream prediction often needs only the last property, but modular or auditable representations can require more: a route and its readout geometry may have to be declared before training and then reused, checked, or composed without learning a new coordinate system.  We call this requirement \emph{prescribed alignment}.

\noindent\textbf{Intuition.}  The frozen Teacher says \emph{what} should be readable; the fixed Student witness says \emph{how} it must be read; and the encoder must make that predeclared reader succeed.  The certificate asks whether this success is better than any input-independent code.  Figure~\ref{fig:main-geometry} then turns the same test into geometry: visible directions carry the declared view, witness-null directions leave residual freedom, and an analytic path moves from the centered reference toward an exact Teacher code.

Formally, RST names three functional relations: \textbf{R=Reconstruction}, the ordinary VAE reconstruction path; \textbf{S=Student}, the fixed witness/readout geometry; and \textbf{T=Teacher}, the frozen information specification.  Throughout, $Z$ denotes the audited representation (main experiments: $Z=Z_\mu:=u_\phi(X)$), while lowercase $z\sim q_\phi(z\mid x)$ is the sampled latent sent to the decoder.  With $T_X:=T(\cdot\mid X)$, $C\mid X\sim T_X$, $\bar T:=\mathbb E_XT_X$, and $C-X-Z$, define
\[
I_T=\mathbb E_X\KL(T_X\|\bar T),\qquad
L_{\mathrm{TS}}=\mathbb E_{X,Z}\KL(T_X\|S(\cdot\mid Z)),\qquad
G_T=I_T-L_{\mathrm{TS}}.
\]
Here $I_T$ measures declared Teacher information, $L_{\mathrm{TS}}$ is fixed-reader mismatch, and $G_T$ is the certificate margin.  The exact decomposition below gives $I(C;Z)\ge G_T$.  Every input-independent representation $Z\perp X$ satisfies $L_{\mathrm{TS}}\ge I_T$, and the centered simplex witness attains equality at the constant origin.  The corresponding best-constant/input-independent boundary was established in the earlier Teacher-guided certificate~\citep{zhang2026teacher}; here it is the reference point for the constructive geometry, not a new boundary claim.  Input-independent means $Z\perp X$, distinct from the later witness-null directions, which are invisible to the fixed witness but need not be input-independent.
Seen through this formulation, exact constant posterior collapse is the canonical extreme case: an input-independent representation cannot transmit any nontrivial declared Teacher view.  Hence $G_T>0$ rules out that collapse on the audited route, while the broader prescribed-information formulation also distinguishes failures that ordinary non-collapse diagnostics can miss.  Posterior collapse is therefore the central input-independent reference case, not the full scope of the theory.

The central question is what becomes possible once this input-independent reference boundary is fixed.  A boundary can tell us that a representation has crossed input independence, but it does not tell us how to realize a chosen Teacher, how much visible capacity that realization needs, what degrees of freedom remain available, how to depart from the input-independent reference and quantify the cost, whether that centered reference has a strict departure direction, how a positive-margin state can survive later optimization, or how several independently specified views can coexist.  For a full-support Teacher with $K$ classes, the regular-simplex construction answers these questions in one geometry:
\begin{center}
\resizebox{0.96\linewidth}{!}{$\boxed{\text{CERTIFY}\rightarrow\text{REALIZE}\rightarrow\text{ROUTE}\rightarrow\text{MOVE/COST}\rightarrow\text{ESCAPE}\rightarrow\text{PRESERVE}\rightarrow\text{COMPOSE}}$}
\end{center}
This sequence is the organizing dependency of the paper, not a list of unrelated claims.  The ingredients---KL decompositions, softmax log-odds, regular simplices, tight frames, and exponential interpolation---are classical; the contribution is the way they are assembled into a fixed prescribed-information interface with exact realizations, auditable witness-null directions, explicit energy budgets, and composable protected routes.  Under centered unit-row minimal-dimensional linear-softmax witnesses, requiring minimum visible dimension, an explicit certificate-invariant residual complement, and minimum normalized worst-case sensitivity selects the regular simplex uniquely up to rotation (Appendix~\ref{app:new-constructive-proofs}); the remaining construction then operates on this same geometry.

\begin{table}[t]
\centering
\footnotesize
\setlength{\tabcolsep}{3.4pt}
\caption{What the exact input-independent boundary leaves open, and what the simplex construction adds.}
\label{tab:novelty-map}
\begin{tabular}{>{\raggedright\arraybackslash}p{0.34\linewidth}>{\raggedright\arraybackslash}p{0.57\linewidth}}
\toprule
Question after certification & Construction developed here \\
\midrule
How can the declared Teacher be realized? & Closed-form minimum-norm Teacher code (denoted $u_T(x)$ below) in the minimal $K-1$ witness-visible dimensions \\
Where can residual variation live? & Complete affine solution fiber and exact certificate invariance along witness-null directions \\
How can one leave the input-independent reference, and at what cost? & Analytic interpolation to the Teacher code, with a certified margin, quadratic mean-prior energy, and a strict local departure direction \\
How can an informative state survive later training? & A visible-space preservation radius with unrestricted witness-null motion \\
How can several declared views coexist? & Orthogonal simplex blocks with simultaneous margins and additive visible-energy budgets \\
\bottomrule
\end{tabular}
\end{table}

The empirical program mirrors this dependency rather than optimizing a single benchmark score.  Five image datasets test boundary crossing; KL stress, Teacher counterfactuals, witness-null routing, escape, and certified refinement probe distinct parts of the theory.  CIFAR-100 then separates
\[
\boxed{\text{latent activity}\;\neq\;\text{learned decodability}\;\neq\;\text{prescribed alignment}.}
\]
The separation is both formal and functional.  Appendix~\ref{app:decodability-separation} gives an exact rotation counterexample.  In matched controls, frozen-encoder probes recover the Teacher while the predeclared Aux interfaces remain certificate-negative; learned Aux heads fail almost completely under both cross-seed and Conv$\leftrightarrow$Res no-refit reuse, but small adapters recover much of the lost performance.  In contrast, the same predeclared RST witness directly reads all five ConvVAE seeds and a second independently optimized ResVAE backbone.  RST therefore targets a narrower no-refit interface contract, not ordinary decodability or compatibility as a field-level novelty.

\vspace{-0.45em}
\noindent\textbf{Main contributions.}\par
\vspace{-0.15em}
\begin{enumerate}[leftmargin=1.55em]
\item \textbf{A route-local certificate written directly on the representation variable.}  For a frozen Teacher and a witness strictly positive on its support we give the exact information decomposition, the attainable input-independent boundary for the centered witness, and a re-instantiation principle for any chosen $Z_m$ satisfying the corresponding Markov condition.  Here $Z_m$ is a mathematical representation variable and is distinct from the R in RST, which denotes Reconstruction.
\item \textbf{A canonical prescribed-interface geometry.}  For a full-support $K$-class Teacher we derive the minimal $K-1$ witness-visible dimension, the unique minimum-norm Teacher code, the complete affine solution fiber, and exact visible/witness-null routing.  Under the stated normalized minimal-interface criterion, the simplex is unique up to rotation; simplices themselves are not claimed as new.
\item \textbf{Movement, cost, escape, and preservation in the same geometry.}  The analytic Teacher ray has a certified margin--energy law and a strictly negative alignment-loss derivative at the centered input-independent point, while a Lipschitz bound gives a positive-margin preservation cylinder around informative anchors.
\item \textbf{Composable prescribed views with functional interface controls.}  Orthogonal simplex blocks provide simultaneous route-wise guarantees and additive energy budgets.  CIFAR-100 recovers the predicted selective sign pattern in all five seeds; frozen probes show information can survive when the declared interface fails, and cross-seed/cross-architecture reader transfer localizes the failure to producer--consumer coordinates.  The same predeclared witness is reused without refitting across ConvVAE and ResVAE RST encoders; small adapters and controlled gauge/zero/swap interventions isolate the corresponding interface behavior.
\end{enumerate}

The claim boundary is intentionally explicit.  A positive $G_T$ is a sufficient certificate relative to the declared Teacher, route, witness, and evaluation distribution.  It does not prove semantic completeness, decoder use of that route, disentanglement, parameter-space convergence, or the absence of every other form of posterior collapse.

\begin{figure}[t]
\centering
\resizebox{0.98\linewidth}{!}{%
\begin{tikzpicture}[>=Latex, font=\scriptsize]
\tikzstyle{box}=[draw,rounded corners,align=center,minimum height=0.72cm,inner sep=4pt]
\tikzstyle{lossbox}=[draw,rounded corners,align=center,minimum height=0.72cm,inner sep=4pt,thick]
\tikzstyle{optbox}=[draw,dashed,rounded corners,align=center,minimum height=0.72cm,inner sep=4pt]
\node[box, minimum width=1.15cm] (x) at (0,0) {Input\\$x$};
\node[box, minimum width=1.55cm] (enc) at (2.45,0){Encoder\\$q_\phi(z\mid x)$};
\node[box, minimum width=1.85cm] (z) at (4.90,0){Latent sample\\$z\sim q_\phi(z\mid x)$\\Witness route\\$Z_\mu=u_\phi(X)$};
\node[box, minimum width=1.65cm] (dec) at (7.35,0){Decoder\\$p_\theta(x\mid z)$};
\node[box, minimum width=1.20cm] (rec) at (9.85,0){R: Reconstruction\\$\hat x$};
\draw[->] (x) -- (enc); \draw[->] (enc) -- (z); \draw[->] (z) -- (dec); \draw[->] (dec) -- (rec);
\node[box, minimum width=3.20cm] (witness) at (4.90,1.65){S: Fixed Student witness\\$S(\cdot\mid Z_\mu)$};
\node[box, minimum width=2.25cm] (teach) at (0,3.12){T: Frozen Teacher\\$T_X=T(c\mid X)$};
\node[lossbox, minimum width=3.10cm] (align) at (4.90,3.12){Alignment\\$L_{\mathrm{TS}}=\mathbb E\,\KL(T_X\Vert S(\cdot\mid Z_\mu))$};
\node[box, minimum width=2.10cm] (stats) at (0,4.50){Precomputed\\$\bar T,I_T$};
\node[lossbox, minimum width=2.20cm] (cert) at (8.10,4.50){Certificate\\$G_T=I_T-L_{\mathrm{TS}}$};
\draw[->] (x.north) -- (teach.south); \draw[->] (teach.north) -- (stats.south); \draw[->] (z.north) -- (witness.south); \draw[->] (witness.north) -- (align.south); \draw[->] (teach.east) -- (align.west); \draw[->] (stats.east) -- (cert.west); \draw[->] (align.north east) -- (cert.south west);
\node[optbox, minimum width=2.15cm] (rhat) at (10.25,1.65){Optional reconstruction-side\\Teacher posterior $R_{\hat x}$};
\node[optbox, minimum width=2.35cm] (optloss) at (10.25,3.12){Diagnostics\\$L_{TR},L_{SR}$};
\draw[dashed,->] (rec.north) -- (rhat.south); \draw[dashed,->] (rhat.north) -- (optloss.south); \draw[dashed,->] (align.east) -- (optloss.west);
\node[lossbox, minimum width=2.45cm] (vae) at (4.90,-1.55){VAE loss\\reconstruction $+$ KL};
\draw[->] (x.south) |- (vae.west); \draw[->] (rec.south) |- (vae.east);
\end{tikzpicture}}
\caption{RST architecture (R=Reconstruction, S=Student, T=Teacher).  The decoder sample $z$ is distinct from the audited mean $Z_\mu=u_\phi(X)$.  Here $R_{\hat x}$ is the frozen Teacher posterior on the reconstruction; $L_{TR}$ and $L_{SR}$ compare Teacher/Student with it.  These dashed diagnostics are defined in Appendix~\ref{app:decoder-route} and do not enter the certificate.}
\label{fig:rst-main-architecture}
\end{figure}

\section{From the exact boundary to constructive simplex geometry}
\label{sec:main-theory}
\paragraph{Exact route-local certificate.}
In plain terms, this asks whether the fixed reader can beat every input-independent representation.
\begin{theorem}[Information decomposition and attainable input-independent boundary]
\label{thm:main-info-decomp}\label{thm:main-boundary}
For any $C-X-Z$ and witness $S(C\mid Z)$ that is strictly positive on the Teacher support,
\begin{equation}
\begin{aligned}
L_{\rm TS}&=I(C;X\mid Z)+\mathbb E_Z\KL(P(C\mid Z)\|S(C\mid Z)),\\
G_T&=I(C;Z)-\mathbb E_Z\KL(P(C\mid Z)\|S(C\mid Z)).
\end{aligned}
\label{eq:cert-decomp}
\end{equation}
Hence $I(C;Z)\ge G_T$.  If $Z\perp X$, then $P(C\mid Z)=\bar T$ and
\[
L_{\rm TS}=I_T+\mathbb E_Z\KL(\bar T\|S(C\mid Z))\ge I_T.
\]
If the witness admits a constant code $u_0$ with $S(\cdot\mid u_0)=\bar T$---as the centered simplex witness below does at $u_0=0$---the boundary is attained.  Thus, for the RST witness,
\[
\boxed{G_T>0\Longrightarrow I(C;Z)\ge G_T>0,\qquad \inf_{Z\perp X}L_{\rm TS}=I_T.}
\]
\end{theorem}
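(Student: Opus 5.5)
The plan is a pointwise chain-rule decomposition of the log-ratio $\log T_X(C)/S(C\mid Z)$, followed by averaging under the joint law of $(X,Z,C)$. The Markov condition $C-X-Z$ gives $P(C\mid X,Z)=T_X$, so $L_{\rm TS}=\mathbb E\log\frac{P(C\mid X,Z)}{S(C\mid Z)}$. Insert $P(C\mid Z)$ in numerator and denominator:
\[
\log\frac{P(C\mid X,Z)}{S(C\mid Z)}=\log\frac{P(C\mid X,Z)}{P(C\mid Z)}+\log\frac{P(C\mid Z)}{S(C\mid Z)}.
\]
Taking expectations, the first term is $I(C;X\mid Z)$. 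For the second, first condition on $Z$; averaging over $C\mid Z$ gives $\mathbb E_Z\KL(P(C\mid Z)\|S(C\mid Z))$. This yields the first line of \eqref{eq:cert-decomp}.

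Strict positivity of $S$ on the Teacher support is what makes every logarithm finite. The support of $P(C\mid Z)$ lies inside the support of $T_X$, since it is a mixture of such $T_X$. Hence all terms are finite whenever $I_T<\infty$, which holds for categorical $C$. I would state this integrability remark explicitly before subtracting terms.

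For the second line, I would use the chain rule for mutual information together with the Markov property. Expanding $I(C;X,Z)$ both ways gives $I(C;X)+I(C;Z\mid X)=I(C;Z)+I(C;X\mid Z)$, and $I(C;Z\mid X)=0$. Therefore $I(C;X\mid Z)=I_T-I(C;Z)$. Substituting this into the first line and rearranging gives $G_T=I(C;Z)-\mathbb E_Z\KL(P(C\mid Z)\|S(C\mid Z))$. Nonnegativity of KL then gives $I(C;Z)\ge G_T$.

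For the input-independent case, $Z\perp X$ together with $C-X-Z$ implies $P(C\mid Z)=\mathbb E[T_X\mid Z]=\mathbb E_XT_X=\bar T$. It follows that $I(C;Z)=0$ and $I(C;X\mid Z)=I_T$. The first line then becomes $L_{\rm TS}=I_T+\mathbb E_Z\KL(\bar T\|S(\cdot\mid Z))\ge I_T$.

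For attainment, take the constant code $Z\equiv u_0$ with $S(\cdot\mid u_0)=\bar T$. This code is trivially independent of $X$ and makes the KL term vanish, so $\inf_{Z\perp X}L_{\rm TS}=I_T$. The fact that the centered simplex witness satisfies $S(\cdot\mid 0)=\bar T$ I would take from its construction, where the bias or log-prior term is set to $\log\bar T$ so that the origin reads out $\bar T$. The boxed implication then follows immediately from $I(C;Z)\ge G_T$.

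The main obstacle is not algebraic but measure-theoretic. For a general $Z$, possibly continuous, one has to justify the conditional distributions and the conditional-expectation identity for $P(C\mid Z)$. Because $C$ is finite, regular conditional probabilities exist, so I would work with these and the finite sum over $c$.
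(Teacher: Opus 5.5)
Your proposal is correct and follows the paper's proof essentially step for step: the same insertion of $P(C\mid Z)$ into $\log\frac{P(C\mid X,Z)}{S(C\mid Z)}$, the same chain-rule identity $I_T=I(C;Z)+I(C;X\mid Z)$ from $I(C;Z\mid X)=0$, the same reduction $P(C\mid Z)=\bar T$ under $Z\perp X$, and attainment by the constant code with $S(\cdot\mid u_0)=\bar T$. The only thing to fix is your finiteness remark. Strict positivity of $S$ makes each logarithm finite pointwise but does not make $L_{\rm TS}$ finite: an unbounded, heavy-tailed $Z$ can give $L_{\rm TS}=+\infty$ under the simplex witness. Nothing breaks, however. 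After conditioning on $Z$ both terms of the split are nonnegative and $I(C;X\mid Z)\le\log K$, so the identities hold in $[0,\infty]$ and the case $G_T=-\infty$ is trivial.
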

A nonpositive margin is inconclusive.
\begin{corollary}[Route-local re-instantiation]\label{cor:main-reinstantiation}
Replacing $(C,Z,T,S)$ by $(C_m,Z_m,T^{(m)},S_m)$ gives the same exact boundary and sufficient information margin for any chosen representation/statistic satisfying $C_m-X-Z_m$, provided the witness has a constant code producing the corresponding marginal.
\end{corollary}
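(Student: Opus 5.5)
The plan is to treat Corollary~\ref{cor:main-reinstantiation} as a direct re-reading of Theorem~\ref{thm:main-info-decomp}. The proof of that theorem uses only four ingredients: a categorical variable $C$ whose conditional law given $X$ is the Teacher; a representation $Z$ with $C-X-Z$; a witness strictly positive on the Teacher support; and, for attainment, a constant code reproducing the marginal. None of these depends on $Z$ being the encoder mean or on the particular Teacher. So I will first list these hypotheses abstractly. Then I check that the substituted tuple $(C_m,Z_m,T^{(m)},S_m)$ satisfies each one. For that tuple I define $I_{T^{(m)}}:=\E_X\KL(T^{(m)}_X\|\bar T^{(m)})$, $L_m:=\E\,\KL(T^{(m)}_X\|S_m(\cdot\mid Z_m))$ and $G_m:=I_{T^{(m)}}-L_m$.

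Second, I will redo the decomposition briefly to make the substitution transparent. Conditioning on $Z_m$ and using $C_m-X-Z_m$ gives $P(C_m\mid X,Z_m)=T^{(m)}_X$. The log-ratio inside $L_m$ then splits as
\[
\log\frac{T^{(m)}_X}{S_m}=\log\frac{T^{(m)}_X}{P(C_m\mid Z_m)}+\log\frac{P(C_m\mid Z_m)}{S_m}.
\]
Taking expectations yields $L_m=I(C_m;X\mid Z_m)+\E\,\KL(P(C_m\mid Z_m)\|S_m)$. The chain rule together with the Markov condition gives $I(C_m;X)=I(C_m;Z_m)+I(C_m;X\mid Z_m)$, with $I(C_m;X)=I_{T^{(m)}}$. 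Subtracting these two identities gives $G_m=I(C_m;Z_m)-\E\,\KL(P(C_m\mid Z_m)\|S_m)\le I(C_m;Z_m)$.

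Third comes the boundary. If $Z_m\perp X$, then $P(C_m\mid Z_m)=\bar T^{(m)}$, so $L_m=I_{T^{(m)}}+\E\,\KL(\bar T^{(m)}\|S_m)\ge I_{T^{(m)}}$. If a constant code $u_0^{(m)}$ has $S_m(\cdot\mid u_0^{(m)})=\bar T^{(m)}$, then the degenerate choice $Z_m\equiv u_0^{(m)}$ is independent of $X$ and satisfies the Markov condition trivially. It makes the KL term vanish, so the infimum equals $I_{T^{(m)}}$.

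The only step needing care, and the one I expect to be the main obstacle, is the positivity and measurability bookkeeping. Strict positivity of $S_m$ on the support of $T^{(m)}$ is what keeps $L_m$ finite and makes the KL split legitimate. Everything else transfers verbatim, so the corollary follows.
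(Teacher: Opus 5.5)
Your proposal is correct and matches the paper's proof, which substitutes $(C_m,Z_m,T^{(m)},S_m)$ into the proof of Theorem~\ref{thm:main-info-decomp}. The paper notes that this argument uses only the Markov relation $C_m-X-Z_m$, the witness support condition, and, for attainment, a constant code producing $\bar T^{(m)}$. Your explicit re-derivation of the decomposition, of the bound $L_m\ge I_{T^{(m)}}$ for $Z_m\perp X$, and of attainment at $Z_m\equiv u_0^{(m)}$ spells out exactly that substitution.
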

The identities also hold exactly on a fixed finite split when $\bar T$, $I_T$, and $L_{\rm TS}$ are computed self-consistently on that empirical distribution; this is not a population-confidence statement.  Proofs and the population extension are in the appendix.

\paragraph{Canonical interface geometry: realization, freedom, and stability.}
The next question is geometric: what is the shortest exact Teacher code, and which directions can vary without changing the fixed readout?  Requiring minimum visible dimension, an explicit witness-null complement, and minimum normalized worst-case sensitivity uniquely selects regular-simplex rows up to rotation among centered unit-row minimal-dimensional witnesses (Appendix~\ref{app:new-constructive-proofs}).  For a realized input $x$, write $T_x:=T(\cdot\mid x)$.  Let $v_1,\dots,v_K$ be unit regular-simplex vertices, $\sum_kv_k=0$, $v_i^\top v_j=-1/(K-1)$ for $i\ne j$, and let $V$ collect their rows.  With gain $\beta>0$ define
\[
S(u)=\softmax(\beta Vu+\log\bar T),\qquad \mathcal V=\operatorname{row}(V),\quad \mathcal N=\ker V=\mathcal V^\perp.
\]
Let $P_{\mathcal V}$ denote the orthogonal projector onto $\mathcal V$.
\begin{proposition}[Canonical simplex interface: minimality, exact fiber, and witness-null invariance]\label{prop:main-minimality}\label{prop:main-minimax-simplex}\label{prop:main-fiber}\label{prop:main-nullspace}
A linear-softmax witness that locally represents all full-support $K$-class categorical distributions needs at least $K-1$ visible dimensions; the simplex attains this minimum.  For full-support $T_x$, define the centered log ratio
\[
r_k(x)=\log\frac{T_{x,k}}{\bar T_k}-\frac1K\sum_j\log\frac{T_{x,j}}{\bar T_j},\qquad
u_T(x)=\frac{K-1}{K\beta}V^\top r(x).
\]
Then
\begin{equation}
\boxed{S(u_T(x))=T_x,\qquad S^{-1}(T_x)=u_T(x)+\ker V,}
\label{eq:fiber}
\end{equation}
$u_T(x)$ is the unique minimum-norm realization, and
\[
\frac12\mathbb E\|u_T(X)\|^2=\frac{K-1}{2K\beta^2}\,\mathcal E_T,
\quad \mathcal E_T=\mathbb E\|r(X)\|^2.
\]
Writing $u_{\mathcal V}:=P_{\mathcal V}u$ and $u_{\mathcal N}:=u-u_{\mathcal V}$, every $u=u_{\mathcal V}+u_{\mathcal N}$ satisfies $S(u)=S(u_{\mathcal V})$: witness-null variation is exactly certificate invariant, although it still pays ordinary VAE KL cost and need not be used by the decoder.  The regular simplex is not required by the certificate theorem; it is a canonical choice because it simultaneously gives minimal visible dimension, class symmetry, an isotropic tight frame, the closed-form inverse/fiber, an orthogonal witness-null complement, and the smallest normalized worst-case sensitivity among unit-norm row-centered minimal-dimensional witnesses.
\end{proposition}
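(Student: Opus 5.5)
The plan is to reduce every claim to linear algebra on the log-odds map $u\mapsto \beta Vu$, modulo constant shifts, using the regular-simplex Gram identities. Two facts drive everything. First, $V^\top V=\frac{K}{K-1}P_{\mathcal V}$; this holds because the rows form an isotropic tight frame in the $(K-1)$-dimensional space $\mathcal V$. Second, $VV^\top=\frac{K}{K-1}\bigl(I-\frac1K\mathbf 1\mathbf 1^\top\bigr)=\frac{K}{K-1}H$, where $H$ is the centering projector on $\R^K$. This follows from $v_i^\top v_i=1$ and $v_i^\top v_j=-1/(K-1)$. Note also that softmax is invariant exactly under adding multiples of $\mathbf 1$.

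\textbf{Minimality.} For minimality, the set of full-support $K$-class distributions is a $(K-1)$-dimensional manifold. Its log-odds parametrization $p\mapsto Hp_{\log}$ is a diffeomorphism onto $\mathbf 1^\perp$. A linear-softmax witness $u\mapsto\softmax(Wu+b)$ with visible dimension $d=\operatorname{rank}W$ has image of dimension at most $d$. Locally representing an open set of the simplex therefore forces $d\ge K-1$. The simplex has $\operatorname{rank}V=K-1$, since $\operatorname{range}V=\mathbf 1^\perp$ by the $VV^\top$ identity, so it attains the bound.

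\textbf{Exact realization.} Here $r(x)=H\log(T_x/\bar T)$, so $r\in\mathbf 1^\perp$. Then
\[
\beta Vu_T=\tfrac{K-1}{K}VV^\top r=Hr=r,
\]
and hence
\[
\beta Vu_T+\log\bar T=\log T_x-c\mathbf 1
\]
for a scalar $c$, which gives $S(u_T(x))=T_x$.

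\textbf{Fiber.} For the fiber, $S(u)=T_x$ iff $\beta Vu+\log\bar T=\log T_x+c\mathbf 1$ for some $c$. Applying $H$ and using $\operatorname{range}V\subseteq\mathbf 1^\perp$, this is equivalent to $\beta Vu=r$. Thus $S^{-1}(T_x)$ is the affine solution set $u_T+\ker V$. Since $u_T\in\operatorname{row}(V)=\mathcal V\perp\ker V$, Pythagoras makes $u_T$ the unique minimum-norm element. The energy identity is then
\[
\|u_T\|^2=\tfrac{(K-1)^2}{K^2\beta^2}\,r^\top VV^\top r=\tfrac{K-1}{K\beta^2}\|r\|^2;
\]
taking half the expectation gives the stated formula.

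\textbf{Witness-null invariance.} Null invariance is immediate: $Vu=Vu_{\mathcal V}$ because $Vu_{\mathcal N}=0$. The remarks about VAE KL cost and decoder use are interpretive and need no proof.

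\textbf{Main obstacle.} The main obstacle is the uniqueness clause, that minimal normalized worst-case sensitivity among centered unit-row minimal-dimensional witnesses selects the regular simplex up to rotation. I would first define the sensitivity as the operator norm of $V^+$, or equivalently $1/\sigma_{\min}(V)$ on $\mathbf 1^\perp$. For unit rows, $\operatorname{tr}(V^\top V)=K$ is spread over $K-1$ nonzero eigenvalues, so $\sigma_{\min}^2\le K/(K-1)$. Equality holds iff $V^\top V=\frac{K}{K-1}P_{\mathcal V}$, i.e. the rows form a tight frame. A centered unit-norm tight frame of $K$ vectors in $\R^{K-1}$ has Gram matrix $\frac{K}{K-1}H$, which forces $v_i^\top v_j=-1/(K-1)$. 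The configuration is therefore a regular simplex, and Gram-equality then yields uniqueness up to orthogonal transformation.
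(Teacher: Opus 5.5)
Your proposal is correct and follows the paper's proof essentially step for step: shift invariance forces visible rank $K-1$, the Gram identity $VV^\top=\frac{K}{K-1}(I-\frac1K\mathbf 1\mathbf 1^\top)$ gives $\beta Vu_T=r$ and the energy formula, centering the logits gives the fiber $u_T+\ker V$, orthogonality of $\mathcal V$ and $\ker V$ gives the unique minimum-norm realization, and canonicality is the same trace-averaging argument on the singular values followed by Gram-matrix rigidity. The one divergence is that you read ``worst-case sensitivity'' as $\|V^+\|=1/\sigma_{\min}$, whereas the paper means the spectral norm $\|V\|_2=\sigma_{\max}$ that enters $L_{\rm simp}$ and the cylinder radius; because $\|V\|_F^2=K$ is fixed, the same argument with ``largest squared singular value $\ge$ average $K/(K-1)$'' in place of ``smallest $\le$ average'' covers the paper's reading, and the same tight frame is the unique extremizer under both readings.
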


\paragraph{Movement, cost, and function-space departure.}
Next we ask how to leave the centered reference, and at what Gaussian mean-prior energy cost.
\begin{proposition}[Analytic margin--energy path]\label{prop:main-frontier}
Along $u_\alpha(x)=\alpha u_T(x)$, $0\le\alpha\le1$, define $G(\alpha):=I_T-L_{\rm TS}(u_\alpha)$ and $E_{\rm mean}(\alpha):=\tfrac12\mathbb E\|u_\alpha(X)\|^2$.  Then
\begin{equation}
S_k(u_\alpha)=\frac{\bar T_k^{1-\alpha}T_{x,k}^{\alpha}}{\sum_j\bar T_j^{1-\alpha}T_{x,j}^{\alpha}},\qquad
\boxed{G(\alpha)\ge\alpha I_T},\qquad
\boxed{E_{\rm mean}(\alpha)=\alpha^2E_{\rm mean}(1)}.
\label{eq:path}
\end{equation}
$G$ is concave and nondecreasing, with $G(0)=0$, $G(1)=I_T$, and initial slope $\mathbb E[J(T_X,\bar T)]$, where $J(P,Q):=\KL(P\|Q)+\KL(Q\|P)$ is the Jeffreys divergence.
\end{proposition}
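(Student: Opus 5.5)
The plan is to reduce everything to the logit identity supplied by Proposition~\ref{prop:main-minimality}, and then treat $G$ as a function of $\alpha$ through a log-partition function.

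\textbf{Step 1: the tilted form of $S(u_\alpha)$.} Since $S(u_T(x))=T_x$, the logits $\beta V u_T(x)+\log\bar T$ equal $\log T_x$ up to an additive constant. I would check this directly from the tight-frame identity $V V^\top=\frac{K}{K-1}\bigl(I-\frac1K\mathbf 1\mathbf 1^\top\bigr)$. Applied to the centered vector $r(x)$, this identity gives $\beta V u_T(x)=r(x)$. Hence $\beta V u_\alpha(x)=\alpha r(x)$. Because $r$ differs from $\log T_x-\log\bar T$ only by a constant, the logits are $(1-\alpha)\log\bar T+\alpha\log T_x$ up to a constant. The softmax then yields the displayed geometric-mixture formula. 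The energy law $E_{\rm mean}(\alpha)=\alpha^2E_{\rm mean}(1)$ is immediate from linearity.

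\textbf{Step 2: $G$ as a log-partition function.} Write $Z_x(\alpha)=\sum_j\bar T_j^{1-\alpha}T_{x,j}^{\alpha}$ and $\ell_x=\log T_x-\log\bar T$. Then $\log S_k(u_\alpha)=\log\bar T_k+\alpha\ell_{x,k}-\log Z_x(\alpha)$. Substituting into the definitions,
\[
\KL(T_x\|S(u_\alpha))=\KL(T_x\|\bar T)-\alpha\,\E_{T_x}\ell_x+\log Z_x(\alpha)=(1-\alpha)\KL(T_x\|\bar T)+\log Z_x(\alpha).
\]
Taking expectations gives $G(\alpha)=\alpha I_T-\E_X\log Z_x(\alpha)$.

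\textbf{Step 3: properties of $A_x(\alpha):=\log Z_x(\alpha)$.} This is a cumulant generating function of $\ell_x$ under $\bar T$, so it is convex in $\alpha$, with $A_x(0)=A_x(1)=0$. Convexity together with these endpoint values gives $A_x(\alpha)\le0$ on $[0,1]$. Therefore $G(\alpha)\ge\alpha I_T$ holds pointwise before averaging. The same convexity of $A_x$ makes $G$ concave, and the endpoints give $G(0)=0$ and $G(1)=I_T$. The derivative $A_x'(\alpha)$ is the mean of $\ell_x$ under the tilted law $S(u_\alpha)$. In particular $A_x'(0)=\E_{\bar T}\ell_x=-\KL(\bar T\|T_x)$, so $G'(0)=\E[\KL(T_X\|\bar T)+\KL(\bar T\|T_X)]=\E J(T_X,\bar T)$. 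Likewise $A_x'(1)=\KL(T_x\|\bar T)$, so $G'(1)=0$. Concavity then forces $G'\ge G'(1)=0$ on $[0,1]$, which gives monotonicity.

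\textbf{Main obstacle.} The delicate point is bookkeeping. I must confirm $\beta V u_T=r$ exactly, with no extra factor, using the frame constant $\frac{K}{K-1}$. I must also justify differentiating under $\E_X$. For the latter, full support makes $\ell_x$ finite. I would assume $\E\|\ell_X\|_\infty<\infty$, or a finite split, so that dominated convergence applies; otherwise I would state the slope as a one-sided limit using monotone difference quotients of the convex $A_x$.
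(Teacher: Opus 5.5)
Your proposal is correct and follows essentially the paper's own argument. The paper likewise writes $\KL(T_x\|S(u_\alpha))=(1-\alpha)D_x+A_x(\alpha)$ with $A_x$ the log-partition (cumulant generating) function of the log-ratio under $\bar T$, uses its convexity and endpoint values for concavity and $G(\alpha)\ge\alpha I_T$, obtains monotonicity from $A_x'(\alpha)\le A_x'(1)=\E_{T_x}\ell_x$ (equivalent to your $G'(1)=0$), and reads off the Jeffreys slope from $A_x'(0)=-\KL(\bar T\|T_x)$; your tight-frame check that $\beta V u_T=r$ and your integrability caveat for differentiating under $\E_X$ (automatic on finite evaluation splits) are sensible details the paper leaves implicit.
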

\begin{corollary}[Strict function-space escape from the centered input-independent solution]\label{cor:main-transverse}
Assume $I_T>0$ and full support.  Within the class of input-independent codes $u(x)\equiv a$ for a constant vector $a$, the centered origin is a global minimizer of $L_{\rm TS}$ with value $I_T$.  In the full space of input-dependent code functions, however,
\[
\left.\frac{d}{d\alpha}L_{\rm TS}(u_\alpha)\right|_{\alpha=0}
=-\mathbb E_X J(T_X,\bar T)<0,
\]
Therefore, for any Teacher--Student weight $\lambda_{\rm TS}>0$ and any finite quadratic mean-prior coefficient, the combined Teacher--Student plus quadratic mean cost still has a strictly negative right derivative at the origin.
\end{corollary}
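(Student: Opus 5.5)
The corollary has three parts: global optimality of the origin among constant codes, a strictly negative derivative along the Teacher ray, and survival of that sign after adding a quadratic mean cost. I would take them in that order, leaning on Theorem~\ref{thm:main-boundary} and Proposition~\ref{prop:main-frontier}.

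\textbf{Constant codes.} A constant code $u(x)\equiv a$ gives $Z=a$, so $Z\perp X$. Theorem~\ref{thm:main-boundary} then yields
\[
L_{\rm TS}(a)=I_T+\KL(\bar T\|S(a))\ge I_T .
\]
At $a=0$ the centered witness gives $S(0)=\softmax(\log\bar T)=\bar T$, so equality holds. Hence the origin is a global minimizer over constants, with value $I_T$.

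\textbf{Derivative along the ray.} By Proposition~\ref{prop:main-frontier}, $S(u_\alpha(x))$ is the geometric mixture $\bar T^{1-\alpha}T_x^\alpha$ divided by $Z_x(\alpha)=\sum_j\bar T_j^{1-\alpha}T_{x,j}^\alpha$. Pointwise in $x$ this gives
\[
\KL(T_x\|S(u_\alpha))=\sum_k T_{x,k}\log T_{x,k}-(1-\alpha)\sum_kT_{x,k}\log\bar T_k-\alpha\sum_kT_{x,k}\log T_{x,k}+\log Z_x(\alpha).
\]
Differentiating in $\alpha$ gives
\[
\sum_kT_{x,k}\log\frac{\bar T_k}{T_{x,k}}+\frac{Z_x'(\alpha)}{Z_x(\alpha)},
\qquad
\frac{Z_x'(\alpha)}{Z_x(\alpha)}=\mathbb E_{S(u_\alpha)}\!\left[\log\frac{T_{x,\cdot}}{\bar T}\right].
\]
At $\alpha=0$ we have $S=\bar T$, so the second term equals $\sum_k\bar T_k\log(T_{x,k}/\bar T_k)=-\KL(\bar T\|T_x)$. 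The first term is $-\KL(T_x\|\bar T)$. The pointwise derivative is therefore $-J(T_x,\bar T)$.

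\textbf{Exchanging derivative and expectation.} To pass to $L_{\rm TS}$ I need the derivative under $\mathbb E_X$. On $[0,1]$ the derivative is bounded by $2\max_k|\log(T_{x,k}/\bar T_k)|$. This is uniformly bounded when the input space is finite, as on an empirical split. In the population case I would assume integrability of $\|r(X)\|$ and apply dominated convergence. Strictness follows because $I_T=\mathbb E\KL(T_X\|\bar T)>0$ forces $T_X\ne\bar T$ with positive probability, so $\mathbb E J>0$. This also matches the initial slope of $G$ in Proposition~\ref{prop:main-frontier}, which I can cite directly as a cross-check.

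\textbf{Adding the mean cost.} The combined cost along the ray is
\[
\lambda_{\rm TS}L_{\rm TS}(u_\alpha)+c\,\alpha^2E_{\rm mean}(1),
\]
using $E_{\rm mean}(\alpha)=\alpha^2E_{\rm mean}(1)$ from \eqref{eq:path}. The quadratic term has zero derivative at $\alpha=0$ whenever $E_{\rm mean}(1)<\infty$, which again follows from integrability of $\|r\|^2$. The right derivative is therefore $-\lambda_{\rm TS}\mathbb E J<0$.

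The main obstacle is the interchange of derivative and expectation, and the finiteness of $E_{\rm mean}(1)$, in the population setting. I would state the needed integrability of $\log(T_X/\bar T)$ explicitly as part of the full-support hypothesis.
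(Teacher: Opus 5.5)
Your proposal is correct and follows the paper's proof step for step. You use the constant-code bound $L_{\rm TS}(a)=I_T+\KL(\bar T\|S(a))\ge I_T$ with equality at $S(0)=\bar T$, the slope $-\mathbb E_XJ(T_X,\bar T)$ from the geometric-mixture path of Proposition~\ref{prop:main-frontier} (your pointwise computation is the one in the appendix), and the fact that the $O(\alpha^2)$ mean-prior cost contributes nothing at first order.

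Your dominated-convergence remark is a harmless refinement that the paper leaves implicit, since it works on a finite evaluation split. The slope is unchanged by adding a constant to $\log(T_{x,\cdot}/\bar T)$, so your dominating bound can be written as $2\max_k|r_k(x)|\le 2\|r(x)\|$; this is what makes integrability of $\|r(X)\|$ the right hypothesis.
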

\begin{proof}
Any input-independent code induces a constant witness distribution $q$, so $L_{\rm TS}=I_T+\KL(\bar T\|q)\ge I_T$, with equality at the centered witness $S(0)=\bar T$.  Proposition~\ref{prop:main-frontier} gives $G'(0)=\mathbb E J(T_X,\bar T)$ and hence the displayed negative derivative of $L_{\rm TS}=I_T-G$.  The quadratic mean-prior cost is $O(\alpha^2)$ along $u_\alpha=\alpha u_T$, so its right derivative at $0$ is zero.
\end{proof}
The function-space result is not merely post-hoc; the remaining question is whether its code-space escape direction survives the encoder parameterization.  For a parameterized encoder,
\[
\nabla_\phi L_{\rm TS}=\mathbb E[J_\phi u_\phi(x)^\top g_u(x)],
\]
where $J_\phi u_\phi(x)$ is the encoder Jacobian and $g_u(x):=\nabla_u\KL(T_x\|S(u))|_{u=u_\phi(x)}$ is the code-space alignment gradient.  Thus the encoder Jacobian filters the code-space escape field into parameter space: saturation, small witness-visible singular values, opposing ELBO gradients, or poor Jacobian conditioning can attenuate the realized update.  We assume no uniform lower bound on the relevant Jacobian singular values; no optimizer-independent convergence claim is made.

\begin{figure}[htbp]
\centering
\begin{minipage}[t]{0.485\linewidth}
\centering
\begin{tikzpicture}
\begin{axis}[
 width=\linewidth,height=4.1cm,
 xlabel={$\alpha$},ylabel={$G(\alpha)$},
 xmin=0,xmax=1,ymin=0,ymax=4.7,
 grid=major,grid style={gray!18},
 legend to name=analyticPathLegend,
 legend columns=4,
 legend style={font=\scriptsize,draw=none,/tikz/every even column/.append style={column sep=0.8em}},
 tick label style={font=\scriptsize},label style={font=\small}
]
\addplot+[mark=*,thick] coordinates {(0,0) (.05,.3175338) (.10,.6273866) (.20,1.2132628) (.40,2.1512680) (.60,2.6512941) (.80,2.8170229) (1,2.8456666)};
\addlegendentry{coarse path}
\addplot+[dashed,thick] coordinates {(0,0) (1,2.8456666)};
\addlegendentry{coarse certified bound}
\addplot+[mark=square*,thick] coordinates {(0,0) (.05,.4111677) (.10,.8195940) (.20,1.6220519) (.40,3.0774768) (.60,4.0313523) (.80,4.3657539) (1,4.4169271)};
\addlegendentry{fine path}
\addplot+[dotted,thick] coordinates {(0,0) (1,4.4169271)};
\addlegendentry{fine certified bound}
\end{axis}
\end{tikzpicture}
\vspace{-0.7em}
{\scriptsize (a) Certificate margin along the analytic path.}
\end{minipage}\hfill
\begin{minipage}[t]{0.485\linewidth}
\centering
\begin{tikzpicture}
\begin{axis}[
 width=\linewidth,height=4.1cm,
 xlabel={$E_{\mathrm{mean}}(\alpha)$},ylabel={$G(\alpha)$},
 xmin=0,xmax=1.48,ymin=0,ymax=4.7,
 grid=major,grid style={gray!18},
 tick label style={font=\scriptsize},label style={font=\small},
 samples=100
]
\addplot+[mark=*,thick] coordinates {(0,0) (.0021413,.3175338) (.0085652,.6273866) (.0342607,1.2132628) (.1370426,2.1512680) (.3083459,2.6512941) (.5481705,2.8170229) (.8565164,2.8456666)};

\addplot+[dashed,thick,domain=0:0.8565164] {2.8456666*sqrt(x/0.8565164)};

\addplot+[mark=square*,thick] coordinates {(0,0) (.0035383,.4111677) (.0141531,.8195940) (.0566124,1.6220519) (.2264496,3.0774768) (.5095115,4.0313523) (.9057983,4.3657539) (1.4153098,4.4169271)};

\addplot+[dotted,thick,domain=0:1.4153098] {4.4169271*sqrt(x/1.4153098)};

\end{axis}
\end{tikzpicture}
\vspace{-0.7em}
{\scriptsize (b) The same path against mean latent energy.}
\end{minipage}
\vspace{0.05em}
\pgfplotslegendfromname{analyticPathLegend}
\caption{Analytic margin--energy path on CIFAR-100 coarse and fine Teacher views.  Left: the exact computed certificate is above the guaranteed linear lower bound $\alpha I_T$.  Right: eliminating $\alpha$ from $E_{\mathrm{mean}}(\alpha)=\alpha^2E_{\mathrm{mean}}(1)$ yields the certified lower curve $G\ge I_T\sqrt{E_{\rm mean}/E_{\rm mean}(1)}$.  These are evaluations of the closed-form construction, not learned encoder trajectories or claims of a globally Pareto-optimal frontier.}
\label{fig:analytic-margin-energy-audit}
\end{figure}
\vspace{-0.55em}
\paragraph{Preservation and composition.}
Finally, we ask how much visible drift is safe and how several prescribed views can coexist.
\begin{theorem}[Certified visible cylinder]\label{thm:main-cylinder}
For the simplex witness, $L_{\rm simp}:=\beta\sqrt{2K/(K-1)}$ satisfies
\[
|\KL(T_x\|S(u'))-\KL(T_x\|S(u))|\le L_{\rm simp}\|P_{\mathcal V}(u'-u)\|.
\]
Hence a positive-margin anchor $u_{\rm anc}$ with $G_{\rm anc}:=I_T-L_{\rm TS}(u_{\rm anc})>0$ remains certified under any refinement whose visible displacement is uniformly below $G_{\rm anc}/L_{\rm simp}$; witness-null displacement is unrestricted.  At $u_\alpha$, for any $0<\gamma<1$, choosing visible radius $\varepsilon_\alpha=\gamma\alpha I_T/L_{\rm simp}$ guarantees $G\ge(1-\gamma)\alpha I_T>0$.
\end{theorem}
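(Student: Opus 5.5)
The plan is to show that $u\mapsto\KL(T_x\|S(u))$ is differentiable, that its gradient always lies in $\mathcal V=\operatorname{row}(V)$, and that this gradient has Euclidean norm at most $L_{\rm simp}$. The mean value theorem along the segment from $u$ to $u'$ then gives the pointwise bound. The certificate statements follow by averaging over $X$ and invoking Proposition~\ref{prop:main-frontier}.

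\textbf{Step 1 (gradient formula).} Write
\[
\KL(T_x\|S(u))=\sum_kT_{x,k}\log T_{x,k}-\sum_kT_{x,k}\log S_k(u),
\]
with
\[
\log S_k(u)=\beta v_k^\top u+\log\bar T_k-\log\sum_j\bar T_j e^{\beta v_j^\top u}.
\]
Since $\sum_kT_{x,k}=1$, differentiating gives
\[
\nabla_u\KL(T_x\|S(u))=\beta V^\top\bigl(S(u)-T_x\bigr).
\]
This vector lies in $\mathcal V$. Consequently, for any $u,u'$, the derivative of $t\mapsto\KL(T_x\|S(u+t(u'-u)))$ equals $\beta\,(S-T_x)^\top V P_{\mathcal V}(u'-u)$. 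This already exhibits the witness-null invariance of Proposition~\ref{prop:main-fiber} infinitesimally.

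\textbf{Step 2 (norm bound).} This is the only step needing a real computation. Set $w=S(u)-T_x$; it satisfies $\mathbf 1^\top w=0$. The simplex Gram identities $v_i^\top v_i=1$ and $v_i^\top v_j=-1/(K-1)$ give
\[
VV^\top=\tfrac{K}{K-1}I-\tfrac1{K-1}\mathbf 1\mathbf 1^\top.
\]
Hence
\[
\|V^\top w\|^2=w^\top VV^\top w=\tfrac{K}{K-1}\|w\|^2 .
\]
Next, for two probability vectors $p,q$ we have
\[
\|p-q\|_2^2\le\|p-q\|_1\,\|p-q\|_\infty\le 2\cdot 1=2,
\]
or more crudely $\|p-q\|_2^2\le\|p\|^2+\|q\|^2\le 2$ when one drops the nonpositive cross term. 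Combining the two displays bounds the gradient norm by $\beta\sqrt{2K/(K-1)}=L_{\rm simp}$. Cauchy--Schwarz on the Step~1 derivative, followed by integration over $t\in[0,1]$, then yields
\[
|\KL(T_x\|S(u'))-\KL(T_x\|S(u))|\le L_{\rm simp}\|P_{\mathcal V}(u'-u)\|.
\]
The main point to check is that the tight-frame identity applies exactly on the sum-zero subspace, so no slack is lost there.

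\textbf{Step 3 (certificates).} Apply Step~2 pointwise at $u'=u'(x)$ and $u=u_{\rm anc}(x)$, and take expectation over $X$ (and over $Z$ if needed). If $\sup_x\|P_{\mathcal V}(u'(x)-u_{\rm anc}(x))\|\le\varepsilon<G_{\rm anc}/L_{\rm simp}$, then
\[
L_{\rm TS}(u')\le L_{\rm TS}(u_{\rm anc})+L_{\rm simp}\varepsilon .
\]
Therefore $G(u')\ge G_{\rm anc}-L_{\rm simp}\varepsilon>0$. Null-space components do not enter the bound, so null displacement is unrestricted. For the path statement, take $u_{\rm anc}=u_\alpha$. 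Proposition~\ref{prop:main-frontier} gives $G(\alpha)\ge\alpha I_T$. With $\varepsilon_\alpha=\gamma\alpha I_T/L_{\rm simp}$ we obtain
\[
G\ge\alpha I_T-\gamma\alpha I_T=(1-\gamma)\alpha I_T,
\]
which is strictly positive whenever $\alpha>0$ and $I_T>0$.
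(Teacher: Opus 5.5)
Your proposal is correct and follows essentially the same proof as the paper. Both compute the gradient $\beta V^\top(S(u)-T_x)\in\mathcal V$, bound $\|S(u)-T_x\|_2\le\sqrt2$, and use the tight-frame norm $\sqrt{K/(K-1)}$; both then apply the mean-value inequality along the segment and average over $X$, with Proposition~\ref{prop:main-frontier} supplying the path radius. Your exact evaluation of $\|V^\top w\|$ on the sum-zero subspace is just a sharper restatement of the paper's spectral-norm bound $\|V\|_2=\sqrt{K/(K-1)}$.
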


\begin{proposition}[Orthogonal multi-view composition]\label{prop:main-multiview}
For view $m$, write $K_m$ for its number of classes; $V_m$ and $u_{T,m}$ for its witness matrix and analytic Teacher code; and $I_m,\mathcal E_m,\beta_m,G_m$ for its Teacher information, log-ratio energy, witness gain, and certificate margin.  For $M$ Teachers placed in pairwise orthogonal visible blocks $\mathcal V_m$ of dimensions $K_m-1$, the code
\begin{equation}
u(x)=\sum_{m=1}^M\alpha_m u_{T,m}(x)+h(x),\qquad h(x)\in\bigcap_m\ker V_m,
\label{eq:multiview}
\end{equation}
obeys $G_m\ge\alpha_m I_m$ simultaneously, with additive visible energy $\sum_m c_m\alpha_m^2$, $c_m=(K_m-1)\mathcal E_m/(2K_m\beta_m^2)$.  Thus latent dimension $d_z\ge\sum_m(K_m-1)$ is a sufficient capacity rule for independent categorical views, and requested margins $\tau_m$ are achieved constructively by $\alpha_m=\tau_m/I_m$.  Dependent or hierarchical views may admit shared geometries; orthogonality is the clean sufficient construction studied here.
\end{proposition}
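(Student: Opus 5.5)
The plan is to reduce the multi-view statement to the single-view results already in hand: Proposition~\ref{prop:main-frontier} for margins and Proposition~\ref{prop:main-minimality} for energy and witness-null invariance. The whole argument rests on one structural fact. For each view $m$, the witness reads the code only through its own visible block,
\[
S_m(u)=\softmax(\beta_mV_mu+\log\bar T^{(m)}),\qquad \operatorname{row}(V_m)=\mathcal V_m,
\]
and the blocks $\mathcal V_1,\dots,\mathcal V_M$ are pairwise orthogonal.

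\textbf{Invariance for a fixed view.} Fix $m$. First observe that $u_{T,m'}(x)=\frac{K_{m'}-1}{K_{m'}\beta_{m'}}V_{m'}^\top r^{(m')}(x)$ lies in $\mathcal V_{m'}$. For $m'\neq m$ this space is orthogonal to $\mathcal V_m$, so $u_{T,m'}(x)\in\ker V_m$. By construction $h(x)\in\bigcap_{m'}\ker V_{m'}\subseteq\ker V_m$. The witness-null invariance in Proposition~\ref{prop:main-minimality} then gives
\[
S_m(u(x))=S_m(\alpha_mu_{T,m}(x)).
\]
Hence $L_{\rm TS}^{(m)}(u)=L_{\rm TS}^{(m)}(\alpha_mu_{T,m})$ and $G_m(u)=G_m(\alpha_m)$. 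Proposition~\ref{prop:main-frontier}, applied to view $m$ alone, yields $G_m\ge\alpha_mI_m$. This holds for every $m$ simultaneously, because the choice of $m$ was arbitrary and nothing else depends on it.

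\textbf{Energy.} The visible part of $u$ is $P_{\oplus\mathcal V_m}u=\sum_m\alpha_mu_{T,m}$. Orthogonality of the blocks makes the cross terms vanish pointwise, so Pythagoras gives
\[
\tfrac12\mathbb E\Bigl\|\sum_m\alpha_mu_{T,m}\Bigr\|^2=\sum_m\alpha_m^2\,\tfrac12\mathbb E\|u_{T,m}\|^2.
\]
The energy identity of Proposition~\ref{prop:main-minimality} evaluates each summand as $c_m=(K_m-1)\mathcal E_m/(2K_m\beta_m^2)$.

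\textbf{Capacity rule and margins.} The capacity rule follows because orthogonal subspaces of dimensions $K_m-1$ can be embedded in $\R^{d_z}$ whenever $d_z\ge\sum_m(K_m-1)$, for instance as consecutive coordinate blocks each carrying a rotated simplex. For requested margins, set $\alpha_m=\tau_m/I_m$. When $\tau_m\le I_m$ this lies in $[0,1]$, so the path proposition applies and gives $G_m\ge\tau_m$. I would state this feasibility restriction explicitly, since the path bound is only proved for $\alpha\le1$.

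The main obstacle is being precise about what ``independent views'' means. The guarantee is per-view and needs no statistical independence between the Teachers. Independence matters only in interpreting the sufficiency rule, so the proof should make clear that nothing in the argument uses it.
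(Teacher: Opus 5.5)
Your proposal is correct and follows the paper's own proof essentially step for step: pairwise orthogonality together with $h(x)\in\bigcap_m\ker V_m$ means witness $m$ sees only $\alpha_m u_{T,m}$, the single-view path bound then gives $G_m\ge\alpha_m I_m$ blockwise, and Pythagoras plus the single-view energy identity gives the additive budget. You also verify explicitly that $u_{T,m'}(x)\in\mathcal V_{m'}\subseteq\ker V_m$, and you require $\alpha_m=\tau_m/I_m\in[0,1]$ (which also tacitly needs $I_m>0$); both make precise what the paper's three-line argument leaves implicit.
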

\begin{corollary}[Requested-margin energy budget]\label{cor:main-budget}
For requested margins $0\le\tau_m\le I_m$, choosing $\alpha_m=\tau_m/I_m$ guarantees all $G_m\ge\tau_m$ and uses visible mean energy $\sum_m c_m(\tau_m/I_m)^2$.
\end{corollary}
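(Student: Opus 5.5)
The corollary follows from Proposition~\ref{prop:main-multiview} together with Proposition~\ref{prop:main-frontier}; the argument is mostly bookkeeping. Fix requested margins $0\le\tau_m\le I_m$. For views with $I_m>0$, set $\alpha_m=\tau_m/I_m$. Because $\tau_m\le I_m$, each $\alpha_m$ lies in $[0,1]$, which is the range on which the analytic ray of Proposition~\ref{prop:main-frontier} is defined. If some view has $I_m=0$, then necessarily $\tau_m=0$. In that case I take $\alpha_m=0$ (reading $0/0$ as $0$), and $G_m\ge0$ holds trivially because the centered origin attains $L_{\rm TS}=I_m$ by Theorem~\ref{thm:main-boundary}.

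Next I build the composite code (\ref{eq:multiview}) with these $\alpha_m$ and any $h(x)$ in $\bigcap_m\ker V_m$; taking $h=0$ is enough. The key point is that the view-$m$ witness reads only its own block. Pairwise orthogonality of the visible blocks gives $V_m u_{T,m'}(x)=0$ for $m'\ne m$, since $u_{T,m'}(x)=\tfrac{K_{m'}-1}{K_{m'}\beta_{m'}}V_{m'}^\top r^{(m')}(x)$ lies in $\mathcal V_{m'}\subset\ker V_m$. Together with $V_m h(x)=0$, this yields $S_m(u(x))=S_m(\alpha_m u_{T,m}(x))$ by the witness-null invariance of Proposition~\ref{prop:main-nullspace}. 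Applying the bound $G(\alpha)\ge\alpha I_T$ from Proposition~\ref{prop:main-frontier} view by view then gives $G_m\ge\alpha_m I_m=\tau_m$.

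For the energy, I decompose $u(x)$ into its visible components $\alpha_m u_{T,m}(x)\in\mathcal V_m$ and the null component $h(x)$. Since these components are mutually orthogonal, the squared norm splits as a sum. The visible mean energy is therefore $\sum_m\tfrac12\alpha_m^2\E\|u_{T,m}(X)\|^2$. By the energy identity of Proposition~\ref{prop:main-minimality}, each term equals $\alpha_m^2(K_m-1)\mathcal E_m/(2K_m\beta_m^2)=c_m\alpha_m^2$. Substituting $\alpha_m=\tau_m/I_m$ gives the budget $\sum_m c_m(\tau_m/I_m)^2$.

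I expect no real obstacle. The only points needing care are checking $\alpha_m\le1$, so that the frontier bound applies, and the degenerate case $I_m=0$. I would also note that the stated energy counts the visible part only; the null component $h$ adds $\tfrac12\E\|h\|^2$ on top, and this vanishes when $h=0$.
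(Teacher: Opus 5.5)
Your proposal is correct and follows the paper's own argument. Orthogonality of the blocks makes witness $m$ see only $\alpha_m u_{T,m}$, the single-view bound $G(\alpha)\ge\alpha I_T$ then gives $G_m\ge\tau_m$, and Pythagoras with the closed-form energy identity gives the budget $\sum_m c_m(\tau_m/I_m)^2$; your checks that $\alpha_m\le 1$ and your treatment of the degenerate case $I_m=0$ are details the paper leaves implicit.
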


\begin{figure}[H]
\centering
\begin{tikzpicture}[x=1.12cm,y=0.68cm,>=Latex,font=\scriptsize]
  \draw[->] (-0.35,0) -- (5.25,0) node[right] {$\mathcal V$};
  \draw[->] (0,-1.75) -- (0,1.8) node[above] {$\mathcal N=\ker V$};
  \fill (0,0) circle (2pt) node[below left] {$0$};
  \fill (2.55,0) circle (2pt) node[below] {$u_\alpha$};
  \fill (4.55,0) circle (2pt) node[below] {$u_T$};
  \draw[->,thick] (0.15,0) -- (2.38,0); \draw[->,thick] (2.72,0) -- (4.38,0);
  \draw[dashed,thick] (4.55,-1.45)--(4.55,1.45);
  \node[right,align=left] at (4.65,1.0) {exact fiber\\$u_T+\ker V$};
  \fill[gray!14] (2.12,-1.28) rectangle (2.98,1.28);
  \draw[dashed] (2.12,-1.42)--(2.12,1.42); \draw[dashed] (2.98,-1.42)--(2.98,1.42);
  \node[align=center] at (2.55,0.95) {certified\\cylinder};
  \node[below,align=center] at (0,-1.42) {$G=0$\\centered input-indep. point};
  \node[below,align=center] at (2.55,-1.42) {$G\ge\alpha I_T$\\certified point};
\end{tikzpicture}
\caption{One geometry supports realization, routing, movement, and preservation.  The analytic ray lies in the witness-visible space; all exact Teacher realizations share the affine witness-null fiber.  Around a positive-margin anchor, only visible displacement is bounded.}
\label{fig:main-geometry}
\end{figure}
\vspace{-1.10em}
\paragraph{Training objective.}
RST adds the fixed Teacher--Student term to an otherwise standard VAE,
\vspace{-0.25em}
\[
\mathcal L=\mathcal L_{\rm rec}+\beta_{\rm KL}\mathcal L_{\rm KL}+\lambda_{\rm TS}L_{\rm TS}.
\]
\vspace{-0.35em}
Here $\mathcal L_{\rm rec}$ and $\mathcal L_{\rm KL}$ are the standard reconstruction and posterior-KL terms, and $\beta_{\rm KL},\lambda_{\rm TS}\ge0$ are their weights.  The Teacher and witness are frozen in the certified stage; certification is evaluated against the independently defined boundary $I_T$.  A routed implementation may write $u_\phi(x)=\alpha_\star u_T(x)+Nh_\phi(x)$, where $\alpha_\star$ is a fixed or prefit visible scale and the columns of $N$ form a basis of $\ker V$ (hence $VN=0$), controlling the visible route while leaving the common witness-null component learnable.  Teacher construction, full objectives, gradient diagnostics, and implementation details are reported in the appendix.

\vspace{-1.00em}
\section{Experimental evidence}
\label{sec:main-experiments}
\vspace{-0.35em}
We organize the evidence by the claim it tests.  All reported certificates use frozen Teachers and witnesses restricted to the declared representation.  The broad single-view studies use the PCA--GMM Teacher construction described in the appendix; the CIFAR-100 coarse/fine study uses label-defined Teachers specifically to isolate routing and composition rather than Teacher discovery.
\vspace{-0.25em}
\begin{table}[H]
\centering
\footnotesize
\setlength{\tabcolsep}{3.3pt}
\caption{Five-dataset boundary crossing (mean $\pm$ sample s.d.\ over five seeds).  Two fixed RST variants are shown for every dataset, avoiding per-dataset best-variant selection.  $G_T>0$ is sufficient; $G_T\le0$ is inconclusive.}
\label{main:tab-five}\label{tab:five-dataset}
\begin{tabular}{lrrr}
\toprule
Dataset & VAE $G_T$ & RST-$\alpha$-prefit $G_T$ & RST-$\alpha$-logit-prefit $G_T$ \\
\midrule
Fashion-MNIST & $-0.277\pm0.328$ & $\mathbf{1.503\pm0.060}$ & $\mathbf{1.626\pm0.054}$ \\
CIFAR-10 & $-1.987\pm0.360$ & $\mathbf{1.841\pm0.062}$ & $\mathbf{1.836\pm0.041}$ \\
CIFAR-100 & $-2.074\pm0.242$ & $\mathbf{1.488\pm0.056}$ & $\mathbf{1.542\pm0.039}$ \\
SVHN & $-1.088\pm0.133$ & $\mathbf{1.943\pm0.026}$ & $\mathbf{1.967\pm0.029}$ \\
Tiny-ImageNet-200 & $-2.279\pm0.360$ & $\mathbf{1.321\pm0.036}$ & $\mathbf{1.317\pm0.101}$ \\
\bottomrule
\end{tabular}
\end{table}
\vspace{-0.45em}
\paragraph{Boundary crossing, stress tests, escape, and preservation.}
Across five image datasets, vanilla VAE is certificate-negative while both fixed RST variants in Table~\ref{main:tab-five} are positive in every case; stronger KL pressure preserves the sign separation.  Standard warm-up, cyclical annealing, and free-bits controls remain certificate-negative, whereas witness-null routing preserves the predicted logits while leaving reconstruction comparable.  Teacher degradation lowers the margin, shuffled Teachers are weaker, and random Dirichlet Teachers are near zero, confirming that the certificate tracks the \emph{specified} Teacher rather than semantic correctness.  Starting in the collapse neighborhood, escape runs reach $G\approx2.0$ on CIFAR-10, $1.6$--$1.7$ on CIFAR-100, and $1.31$--$1.43$ on Tiny-ImageNet-200; encoder-gradient diagnostics show the nonzero code-space field survives the parameterization at practical scale.  Under later strong-KL refinement, an unconstrained route falls to $G_\tau=-0.671$ with AU$=0$, while certified refinement keeps $G_\tau\ge2.735$ with AU$=64$.  The closed-form CIFAR-100 path is numerically exact to floating-point precision.  Full breadth statistics are in Appendix~\ref{app:main-breadth}.

\paragraph{Activity, decodability, and prescribed alignment.}
Matched CIFAR-100 controls use the same coarse/fine Teachers and $55$-epoch setup.  The fixed-variance $\delta$ model is fully active yet fails both declared-view certificates.  More sharply, Aux-full and Aux-block learn the Teachers without ever using the simplex witness during training; after freezing each encoder, a fresh linear probe achieves positive Teacher-relative margins, while the predeclared simplex margins remain negative in every seed.  RST is positive under both the post-hoc probe and the same predeclared witness.  Thus the information is present in the Aux codes, but not bound to the declared interface.

\begin{table}[H]
\centering
\footnotesize
\setlength{\tabcolsep}{2.7pt}
\caption{CIFAR-100 matched controls (five seeds).  $G_{\rm pre}$ uses the predeclared simplex; $G_{\rm probe}$ fits a linear probe only after freezing the encoder.  A dash means the probe was not part of this diagnostic.}
\label{tab:controls}\label{tab:new-exp3}
\resizebox{\linewidth}{!}{%
\begin{tabular}{lrrr}
\toprule
Method & AU & $G_{\rm pre}^{c/f}$ & $G_{\rm probe}^{c/f}$ \\
\midrule
VAE & $167.8\!\pm\!2.8$ & $-3.344\!\pm\!1.131~/~-4.162\!\pm\!0.396$ & -- \\
InfoVAE & $172.2\!\pm\!10.3$ & $-3.435\!\pm\!0.913~/~-4.650\!\pm\!0.340$ & -- \\
Fixed-var. $\delta$ & $192.0\!\pm\!0.0$ & $-4.718\!\pm\!0.368~/~-5.944\!\pm\!0.193$ & -- \\
Aux-full & $192.0\!\pm\!0.0$ & $-3.089\!\pm\!0.779~/~-4.892\!\pm\!0.377$ & $\mathbf{1.078\!\pm\!0.013~/~1.710\!\pm\!0.028}$ \\
Aux-block & $186.8\!\pm\!2.2$ & $-2.697\!\pm\!1.059~/~-5.136\!\pm\!0.758$ & $\mathbf{0.986\!\pm\!0.013~/~1.619\!\pm\!0.037}$ \\
RST-joint & $191.8\!\pm\!0.4$ & $\mathbf{1.057\!\pm\!0.027~/~1.477\!\pm\!0.026}$ & $\mathbf{1.162\!\pm\!0.016~/~1.957\!\pm\!0.026}$ \\
\bottomrule
\end{tabular}%
}
\end{table}
\vspace{-0.35em}

\paragraph{No-refit reuse across seeds and architectures.}
Within the ConvVAE Aux-block control, native learned-head accuracy is $36.57/21.88\%$ (coarse/fine), but cross-seed no-adapter reuse falls to $5.20/0.94\%$; Procrustes makes $39/40$ off-diagonal margins positive and ridge-affine makes $40/40$.  The fixed RST witness, in contrast, directly reads every Conv-RST seed.  We then change the producer architecture: five ResVAE RST models are trained independently with the same frozen coarse/fine Teachers, the same fixed witness, and the same $d_z=192$ block allocation, but without regression to $u_T(x)$ or the hard routed parameterization.  A predeclared $45$k/$5$k train/validation split selects checkpoints by the fixed final validation objective; the official test is accessed only after selection.

\begin{table}[H]
\centering
\scriptsize
\setlength{\tabcolsep}{2.35pt}
\caption{Cross-architecture fixed-interface audit on CIFAR-100 (five seeds).  ``Fixed acc.'' uses the same predeclared witness; ``native Aux'' uses the jointly trained control head.  Conv checkpoints predate the $45$k/$5$k Res split, so absolute backbone performance is not compared.}
\label{tab:cross-architecture-main}
\resizebox{\linewidth}{!}{%
\begin{tabular}{llrrr}
\toprule
Producer & Method & $G_{\rm pre}^{c/f}$ & fixed acc. $c/f$ (\%) & native Aux acc. $c/f$ (\%) \\
\midrule
ConvVAE & RST & $\mathbf{1.057\pm0.027~/~1.477\pm0.026}$ & $\mathbf{43.84\pm1.30~/~33.58\pm0.56}$ & -- \\
ResVAE & RST & $\mathbf{0.869\pm0.057~/~1.083\pm0.105}$ & $\mathbf{37.28\pm1.94~/~23.87\pm2.20}$ & -- \\
ConvVAE & Aux-block & $-2.697\pm1.059~/~-5.136\pm0.758$ & $4.77\pm0.71~/~1.01\pm0.17$ & $36.57\pm0.85~/~21.88\pm2.24$ \\
ResVAE & Aux-block & $-2.543\pm1.004~/~-5.314\pm0.883$ & $4.90\pm0.75~/~0.85\pm0.18$ & $20.03\pm1.11~/~10.38\pm1.09$ \\
\bottomrule
\end{tabular}%
}
\end{table}
\vspace{-0.45em}

\paragraph{Cross-architecture learned-reader transfer.}
Swapping learned Aux heads across backbones is near chance without refitting: Conv$\to$Res gives $5.11/0.91\%$ and Res$\to$Conv $4.98/0.96\%$ coarse/fine accuracy, with $98/100$ pairwise margins negative.  Adapters are fitted on $20{,}000$ training examples only.  Procrustes makes $89/100$ margins positive, whereas ridge-affine makes all $100/100$ positive and reaches $21.81/14.74\%$ (Conv$\to$Res) and $24.43/10.47\%$ (Res$\to$Conv).  Because Procrustes is incomplete, the mismatch is not explained by a pure rotation.  Full per-direction margins, aggregation, and protocol are in Appendix~\ref{app:cross-architecture-interface}.

\paragraph{Controlled interface audit.}
Two interventions connect the functional result back to the exact geometry.  Rotating an RST visible block while keeping the original witness fixed drops coarse/fine accuracy to $4.81/1.00\%$; co-rotating the readout restores $43.84/33.58\%$ with maximum logit error below $5\times10^{-6}$.  Zeroing either visible block returns only that route to the input-independent boundary ($|G|<5\times10^{-7}$) while the orthogonal route is numerically unchanged; swapping a block makes the target interface follow the donor while the non-target interface follows the recipient.  These are latent-interface audits, not decoder-causality claims; full per-seed results are in Appendix~\ref{app:functional-gauge-intervention}.

\paragraph{Selective multi-view routing and composition.}
CIFAR-100 provides a 20-way coarse and 100-way fine label view.  We smooth each label Teacher with $\epsilon=0.02$ and allocate simplex-visible blocks of dimensions $19$ and $99$ inside $d_z=192$, leaving a $74$-dimensional common witness-null complement.  Five seeds give the selective pattern
\[
\boxed{\text{baseline }(-,-),\quad \text{coarse }(+,-),\quad \text{fine }(-,+),\quad \text{joint }(+,+),}
\]
with joint-null also $(+,+)$ in every seed.  Table~\ref{tab:compose} gives the mean margins and route accuracies.  The experiment intervenes on preassigned routes; it does not test unsupervised semantic discovery.

\begin{table}[H]
\centering
\small
\setlength{\tabcolsep}{3.5pt}
\caption{Five-seed selective protection on CIFAR-100.  Accuracies use the corresponding fixed witnesses.}
\label{tab:compose}
\renewcommand{\arraystretch}{1.08}
\begin{tabular}{lrrrr}
\toprule
Method & $G_c$ & $G_f$ & coarse acc. & fine acc. \\
\midrule
VAE & $-1.660\pm0.964$ & $-1.728\pm0.324$ & $5.45\%$ & $0.85\%$ \\
Coarse only & $\mathbf{0.619\pm0.017}$ & $-2.519\pm0.416$ & $34.54\%$ & $0.88\%$ \\
Fine only & $-2.302\pm1.167$ & $\mathbf{0.579\pm0.013}$ & $5.36\%$ & $20.99\%$ \\
Joint & $\mathbf{0.686\pm0.019}$ & $\mathbf{0.655\pm0.019}$ & $36.94\%$ & $22.54\%$ \\
Joint + witness-null & $\mathbf{0.693\pm0.017}$ & $\mathbf{0.632\pm0.014}$ & $37.70\%$ & $23.59\%$ \\
\bottomrule
\end{tabular}
\end{table}
\vspace{-0.35em}
\section{Related work}
\label{sec:main-related}
\paragraph{Posterior collapse and information objectives.}
Collapse has been linked to decoder dominance or latent underuse~\citep{bowman2016generating,chen2017lossy}, inference lag and optimization~\citep{he2019lagging,dai2020usual}, linear-Gaussian structure~\citep{lucas2019understanding}, oversmoothing~\citep{takida2021oversmoothing}, non-identifiability~\citep{wang2021nonidentifiability}, and explicit mode-wise thresholds~\citep{dang2024beyond,ichikawa2024dynamics,hirn2026spectral,li2026phase}.  Remedies alter KL schedules or free bits~\citep{fu2019cyclical,kingma2016iaf}, inference, posterior constraints, skip paths, decoder regularity, local reconstruction, or reparameterization noise~\citep{he2019lagging,razavi2019delta,zhao2019infovae,tolstikhin2018wae,dieng2019skip,kinoshita2023inverse,song2026local,demisse2026lambda}.  These methods target useful latent information in general.  RST instead fixes a particular Teacher-relative information route.  Variational information bounds~\citep{tishby1999ib,oord2018cpc,poole2019variational,barber2003agakov} supply related information-theoretic tools; the earlier Teacher-guided certificate~\citep{zhang2026teacher} established the best-constant/input-independent boundary used here.

\paragraph{Fixed readouts, probing, and compatible representations.}
Linear probes ask what a separately fitted readout can recover from frozen features~\citep{alain2016probes,hewitt2019probes}.  Neural collapse identifies simplex-ETF geometry in terminal supervised representations~\citep{papyan2020neuralcollapse}.  Separately, fixed-classifier work freezes the terminal readout and lets the feature extractor adapt, including generic fixed classifiers, regular-polytopal/simplex or ETF frames, and hierarchy-aware frames~\citep{hoffer2018fix,pernici2022polytope,yang2022fixedetf,liang2023haframe}.  Compatible-representation work studies direct or transformed reuse across independently trained or updated models~\citep{shen2020backward,gygli2021reusable,biondi2024stationary,ricci2025lambdaorth,biondi2026stationary}, whereas model stitching explicitly learns a connector between frozen networks~\citep{bansal2021stitching}.  Concept-bottleneck models likewise make an intermediate concept interface explicit and intervenable~\citep{koh2020cbm}.  RST does \emph{not} claim fixed simplex classifiers, stationary/compatible representations, or adapter-free reuse as new.  Its distinct object is a Teacher-relative certificate inside a VAE: a frozen possibly soft Teacher specifies the view, a predeclared witness specifies the audited interface, and the same geometry yields an attainable boundary, exact realization/fiber, margin--energy movement, preservation, and multi-view composition.

\paragraph{Simplex geometry.}
Regular simplices and tight frames are classical~\citep{benedetto2003frames}.  The contribution is not the simplex itself, but the RST-specific chain obtained once that classical geometry is coupled to the Teacher-relative certificate: minimum visible dimension, closed-form soft-Teacher inverse, complete witness-null fiber, certified movement/energy, escape, preservation, and composition.  Under the normalized minimal-interface criterion stated in Appendix~\ref{app:new-constructive-proofs}, the classical regular simplex is the canonical realization up to rotation.

\section{Discussion and scope}\label{sec:main-discussion}
\paragraph{What prescribed alignment adds.}
If a downstream task may refit an arbitrary probe, ordinary decodability can suffice; the new readout can absorb coordinate changes.  RST addresses the narrower case in which route identity and readout are part of the specification.  The formal rotation counterexample (Appendix~\ref{app:decodability-separation}) and the cross-seed/cross-architecture transfer studies show why these notions differ: Aux information is present and recoverable, yet direct producer--consumer reuse fails without coordinate adaptation, while one predeclared RST witness remains readable across two producer backbones.  This is a certificate-level interface claim, not a claim of intrinsic semantic coordinates, universal architecture invariance, or a first compatible-representation method.  The functional contrast is external specification versus post-hoc negotiation: RST fixes the consumer geometry before producer training, whereas Aux lets the producer and learned head co-adapt.  Thus the cross-architecture result tests portability of a fixed contract for the tested ConvVAE/ResVAE pair, not representation identity.

\paragraph{Why a certificate rather than another compatibility objective?}
Compatibility objectives can encourage direct reuse, but reuse alone does not quantify which frozen Teacher information crosses which declared route.  RST separates specification from training: the Teacher, witness, and reference information $I_T$ are fixed before the audited optimization, so $G_T>0$ has an information-theoretic meaning that does not depend on fitting a downstream adapter.  The construction then turns the same certificate into an explicit interface contract---what must be visible, what may vary in the witness-null complement, how a positive state may move while remaining certified, and how several declared contracts compose.  The compatibility-style experiment above is evidence for this contract, not the definition of the contribution.

\paragraph{Scope.}
The guarantee is relative to the frozen Teacher; unsupervised Teacher discovery is separate.  $G_T\le0$ is inconclusive, re-instantiation requires the stated Markov condition, and the analytic ray/escape field are function-space statements rather than optimizer-independent convergence guarantees.  Preservation begins from a positive-margin anchor, and neither witness-null invariance nor certification proves decoder use.  RST therefore claims a sufficient, Teacher-relative, route-specific information guarantee---not semantic completeness or universal collapse resolution.

\section{Conclusion}\label{sec:main-conclusion}
RST turns an established Teacher-relative input-independent boundary into a constructive prescribed-information interface.  One certificate/geometry supports exact realization, residual freedom, movement and cost, escape, preservation, composition, and---empirically---no-refit reuse across independently trained producers and a second backbone when the interface is predeclared.  The novelty is therefore neither the classical simplex nor compatibility as a problem, but a certifiable bridge from a declared (possibly soft) information view to a constructive, reusable VAE interface.

\clearpage
\appendix
\raggedbottom
\linespread{0.95}\selectfont
\setlength{\parskip}{1.6pt plus 0.15pt minus 0.15pt}
\setlength{\textfloatsep}{5.5pt plus 0.5pt minus 1.5pt}
\setlength{\floatsep}{4.5pt plus 0.5pt minus 1.5pt}
\setlength{\intextsep}{5pt plus 0.5pt minus 1.5pt}
\setlength{\abovedisplayskip}{4.2pt plus 0.6pt minus 1.8pt}
\setlength{\belowdisplayskip}{4.2pt plus 0.6pt minus 1.8pt}
\setlength{\abovedisplayshortskip}{3pt plus 0.5pt minus 1.5pt}
\setlength{\belowdisplayshortskip}{3pt plus 0.5pt minus 1.5pt}
\setlist{topsep=2pt,itemsep=1pt,parsep=0pt,partopsep=0pt}
\sloppy
\begin{center}{\Large\bfseries APPENDIX}\end{center}
\vspace{-0.35em}
\section{Reader map, notation, and claim--evidence map}
\label{app:reader-map}
This opening block is a lookup and navigation layer.  It defines notation once, maps each main claim to its proof/evidence, and points to the five evidence blocks below.

\subsection*{Notation and terminology}
The main text follows a definition-before-use convention for nonstandard symbols.  The tables below are a lookup aid; symbols used only inside a specialized appendix derivation are defined locally there.

\paragraph{RST name and core probabilistic notation.}
RST always expands to \textbf{Reconstruction--Student--Teacher}: R is the ordinary VAE reconstruction branch, S is the fixed Student witness/readout, and T is the frozen Teacher specification.  R is never used as the mathematical representation variable.  This Teacher/Student terminology differs from knowledge distillation~\citep{hinton2015distill}: here the Teacher is a frozen information specification and the Student is a fixed code-only witness, not a learned predictor compressed from another model.
\begin{center}
\scriptsize
\setlength{\tabcolsep}{4.2pt}
\renewcommand{\arraystretch}{1.10}
\begin{tabular}{p{0.19\linewidth}p{0.25\linewidth}p{0.47\linewidth}}
\toprule
Symbol / term & Name & Meaning in this paper \\
\midrule
$X,x$ & input random variable / realization & Data sample and a realized input. \\
$T_X,T_x$ & frozen Teacher view & $T_X:=T(\cdot\mid X)$ and $T_x:=T(\cdot\mid x)$; a full-support categorical distribution over the declared view. \\
$C$ & Teacher variable & Auxiliary categorical variable sampled as $C\mid X\sim T_X$. \\
$\bar T$ & Teacher marginal & $\mathbb E_X[T_X]$; the optimal constant prediction for the declared Teacher view. \\
$Z,Z_m$ & audited representation & Deterministic or stochastic representation/statistic being audited; distinct from R in RST. \\
$Z_\mu$ & mean representation & $Z_\mu:=u_\phi(X)$; the deterministic encoder mean audited in the main experiments. \\
$z$ & sampled VAE latent & $z\sim q_\phi(z\mid x)$; the sample passed to the decoder $p_\theta(x\mid z)$.  Lowercase $z$ is not used for the deterministic witness code. \\
$S(\cdot\mid Z)$ & Student / fixed witness & Predeclared code-only readout used to audit the chosen representation interface. \\
$I_T$ & Teacher information / boundary & $I_T:=I(C;X)=\mathbb E_X\KL(T_X\|\bar T)$; the attainable input-independent loss boundary for the centered witness. \\
$L_{\rm TS}$ & Teacher--Student loss & $\mathbb E_{X,Z}\KL(T_X\|S(\cdot\mid Z))$. \\
$G_T$ & certificate margin & $G_T:=I_T-L_{\rm TS}$; $G_T>0$ implies $I(C;Z)\ge G_T>0$. \\
input-independent reference & audited reference class & $Z\perp X$; every such representation obeys $L_{\rm TS}\ge I_T$. \\
constant collapse & deterministic special case & $Z\equiv u_0$, hence input-independent. \\
prescribed alignment & fixed-interface requirement & The declared Teacher view must be readable at the predeclared representation/readout interface; $G_T>0$ is a sufficient certificate, not the definition itself. \\
learned decodability & post-hoc readability & The view can be recovered after fitting a new head; the readout coordinates may change. \\
adapter-free interface & functional interpretation & Evaluation or reuse with the predeclared witness, without refitting an adapter/head. \\
\bottomrule
\end{tabular}
\end{center}

\paragraph{Simplex, optimization, and diagnostic notation.}
{
\scriptsize
\setlength{\tabcolsep}{4.2pt}
\renewcommand{\arraystretch}{1.06}
\begin{longtable}{p{0.19\linewidth}p{0.25\linewidth}p{0.47\linewidth}}
\toprule
Symbol & Name & Meaning in this paper \\
\midrule
$K$ & number of Teacher classes & A $K$-class full-support categorical Teacher needs $K-1$ simplex-visible dimensions. \\
$v_k,V,\beta$ & simplex witness geometry & $v_k$ are unit regular-simplex vertices, $V$ stacks them as rows, and $\beta>0$ is the fixed witness gain. \\
$u$ & deterministic witness code & Generic deterministic input to the simplex witness $S(u)=\softmax(\beta Vu+\log\bar T)$. \\
$\mathcal V,P_{\mathcal V}$ & witness-visible space / projector & $\mathcal V=\operatorname{row}(V)$; $P_{\mathcal V}$ is its orthogonal projector. \\
$\mathcal N$ & witness-null space & $\mathcal N=\ker V=\mathcal V^\perp$; geometric directions invisible to the witness, not the same as $Z\perp X$. \\
$r(x)$ & centered log-ratio vector & Centered coordinates of $\log(T_{x,k}/\bar T_k)$ used by the analytic inverse. \\
$u_T(x)$ & analytic Teacher code & Unique minimum-norm visible code satisfying $S(u_T(x))=T_x$; all exact solutions are $u_T(x)+\ker V$. \\
$\mathcal E_T$ & Teacher log-ratio energy & $\mathbb E\|r(X)\|^2$; determines the closed-form mean-prior cost. \\
$u_\alpha,\alpha$ & analytic Teacher path & $u_\alpha(x)=\alpha u_T(x)$, $0\le\alpha\le1$. \\
$G(\alpha)$ & path margin & $I_T-L_{\rm TS}(u_\alpha)$. \\
$E_{\rm mean}(\alpha)$ & mean-prior energy & $\tfrac12\mathbb E\|u_\alpha(X)\|^2$. \\
$N,h_\phi$ & witness-null residual parameterization & Columns of $N$ span $\ker V$; $Nh_\phi(x)$ can vary without changing the fixed witness logits. \\
$\mathcal L_{\rm rec},\mathcal L_{\rm KL}$ & standard VAE terms & Reconstruction loss and posterior-to-prior KL; weights are $\beta_{\rm KL}$ and $\lambda_{\rm TS}$ for KL and Teacher--Student alignment. \\
$d_z$ & latent dimension & Dimension of the VAE latent/mean code used in an experiment. \\
$G_\tau$ & safety-slack margin & $G_\tau:=G_T-\tau$ when a reserved positive slack $\tau$ is imposed in preservation tests. \\
$G_c,G_f$ & coarse / fine margins & CIFAR-100 certificate margins for the 20-way coarse and 100-way fine Teacher views. \\
AU & active units & Generic latent-activity diagnostic; not equivalent to prescribed alignment. \\
$R_{\hat x}$ & reconstruction-side Teacher posterior & Frozen Teacher evaluated on the reconstruction $\hat x$; an optional decoder-side diagnostic object. \\
$L_{TR},L_{SR}$ & reconstruction-side diagnostics & Teacher--Reconstruction and Student--Reconstruction KL diagnostics; neither enters $G_T$. \\
\bottomrule
\end{longtable}
}

\subsection*{Appendix roadmap}
The appendix is evidence-first and split into six blocks.  Appendix~\ref{app:reader-map} provides notation and a claim--evidence map.  Appendix~\ref{app:core-proofs} places the core proofs and constructive geometry in dependency order.  Appendix~\ref{app:interface-separation} joins the exact decodability--alignment separation with the matched interface experiments.  Appendix~\ref{app:full-evidence} contains full empirical tables and reproducibility details.  Appendix~\ref{app:mechanistic-audits} collects mechanistic and numerical audits.  Appendix~\ref{app:extensions-scope} states route re-instantiations and limitations, with prospective extensions explicitly separated from validated claims.

\subsection*{Claim--evidence map}
The table separates exact mathematical claims from the experiment that probes each consequence.  The analytic path audit checks implementation of closed-form identities; it is not treated as evidence that SGD follows the analytic ray.
\begin{center}
\scriptsize
\setlength{\tabcolsep}{4pt}
\renewcommand{\arraystretch}{1.08}
\begin{tabular}{p{0.30\linewidth}p{0.25\linewidth}p{0.35\linewidth}}
\toprule
Main claim & Formal support & Empirical check / role \\
\midrule
$G_T$ lower-bounds prescribed information & Thm.~\ref{thm:main-info-decomp} & Positive margins on five datasets \\
$I_T$ is the exact attainable input-independent boundary for the centered RST witness & Thm.~\ref{thm:main-boundary} & Constant/input-independent controls attain the boundary; RST crosses it \\
The certificate re-instantiates for a chosen route/statistic & Cor.~\ref{cor:main-reinstantiation} & Block-wise studies; log-variance cross-statistic sanity check \\
$K-1$ dimensions are minimal; simplex is normalized minimax-stable & Props.~\ref{prop:main-minimality}, \ref{prop:main-minimax-simplex} & Geometric statement; no learned-witness premise \\
Exact Teacher solutions form $u_T+\ker(V)$ and $u_T$ is minimum norm & Prop.~\ref{prop:main-fiber} & Closed-form inverse numerical audit; routed runs \\
Analytic path has concave margin and quadratic energy & Prop.~\ref{prop:main-frontier} & Analytic--computation audit only (Fig.~\ref{fig:analytic-margin-energy-audit}) \\
Witness-null motion is invariant; positive anchors have certified visible cylinders & Prop.~\ref{prop:main-nullspace}, Thm.~\ref{thm:main-cylinder} & Witness-null-route runs; trajectory-mined certified refinement \\
Multiple orthogonal views compose with additive energy budgets & Prop.~\ref{prop:main-multiview}, Cor.~\ref{cor:main-budget} & CIFAR-100 coarse/fine selective sign pattern; moving-shape block test \\
Generic activity does not imply the prescribed certificates & Certificate definition plus matched controls & $\delta$ control: $192/192$ AU but both margins negative \\
Learned decodability does not imply prescribed alignment & Prop.~\ref{prop:decodability-separation} plus fixed-witness definition & Exact rotation counterexample; frozen probes are positive while fixed margins are negative; cross-seed/cross-architecture reuse fails and adapters recover \\
\bottomrule
\end{tabular}
\end{center}
\noindent\textit{Fast verification route.}  Core mathematical support is in Appendix~\ref{app:core-proofs}; interface separation in Appendix~\ref{app:interface-separation}; full empirical evidence in Appendix~\ref{app:full-evidence}; implementation-facing audits in Appendix~\ref{app:mechanistic-audits}; and extensions/scope in Appendix~\ref{app:extensions-scope}.

\FloatBarrier
\section{Core proofs and constructive geometry}
\label{app:core-proofs}
\mainback{sec:main-theory}{Section~\ref*{sec:main-theory} (constructive theory)}
This block follows the main theoretical dependency: certificate and boundary $\rightarrow$ canonical simplex realization $\rightarrow$ exact fiber $\rightarrow$ path/energy $\rightarrow$ preservation $\rightarrow$ multi-view composition.  Classical geometric facts are used as ingredients; the claims here are the RST-specific consequences and their proofs.

\subsection{Certificate, boundary, and core construction proofs}
This section records the proofs that are only stated compactly in the main text.  It is deliberately separate from the later auxiliary identities and numerical audits.

\paragraph{Theorem~\ref{thm:main-info-decomp} (information decomposition and input-independent boundary).}
Because $C-X-Z$, $P(C\mid X,Z)=P(C\mid X)=T(C\mid X)$.  Insert $P(C\mid Z)$ into the alignment loss:
\[
\begin{aligned}
L_{\rm TS}
&=\mathbb E\log\frac{P(C\mid X,Z)}{S(C\mid Z)}\\
&=\mathbb E\log\frac{P(C\mid X,Z)}{P(C\mid Z)}
+\mathbb E\log\frac{P(C\mid Z)}{S(C\mid Z)}\\
&=I(C;X\mid Z)+\mathbb E_Z\KL(P(C\mid Z)\|S(C\mid Z)).
\end{aligned}
\]
Also $I_T=I(C;X)$.  Since $I(C;Z\mid X)=0$, the chain rule gives
$I(C;X)=I(C;Z)+I(C;X\mid Z)$.  Subtracting the displayed identity from $I_T$ yields
\[
G_T=I(C;Z)-\mathbb E_Z\KL(P(C\mid Z)\|S(C\mid Z))\le I(C;Z).
\]
If $Z\perp X$, then $Z\perp C$ and $P(C\mid Z)=\bar T$, so
\[
L_{\rm TS}=I_T+\mathbb E_Z\KL(\bar T\|S(C\mid Z))\ge I_T.
\]
For the centered RST witness, the constant code $u=0$ satisfies $S(\cdot\mid0)=\bar T$, hence the boundary is attained.

\paragraph{Corollary~\ref{cor:main-reinstantiation} (route-local re-instantiation).}
Replace $(C,Z,T,S)$ by $(C_m,Z_m,T^{(m)},S_m)$ in the preceding argument.  No step uses a special property of the posterior mean; only the stated Markov relation and witness support condition are required.

\paragraph{Proposition~\ref{prop:main-fiber} (minimal realization, exact fiber, and witness-null invariance).}
Softmax is invariant to a common logit shift, so identifiable categorical logits lie in $\mathbf1^\perp$, a $(K-1)$-dimensional space.  A linear-softmax witness that represents an open set of full-support $K$-class distributions therefore needs visible rank at least $K-1$.  The regular simplex has exactly this rank and satisfies
\[
VV^\top=\frac{K}{K-1}\left(I-\frac1K\mathbf1\mathbf1^\top\right),\qquad
V^\top V=\frac{K}{K-1}P_{\mathcal V}.
\]
For the centered log-ratio vector $r(x)\in\mathbf1^\perp$, the definition
$u_T(x)=\frac{K-1}{K\beta}V^\top r(x)$ gives $\beta Vu_T(x)=r(x)$ and hence $S(u_T(x))=T_x$.
Conversely, if $S(u)=T_x$, centering the equal logits removes the common shift and gives $V(u-u_T)=0$, so the full solution set is $u_T+\ker V$.  Because $u_T\in\mathcal V$ and $\ker V=\mathcal V^\perp$, $u_T$ is the unique minimum-norm realization.  Finally, $Vh=0$ for every $h\in\ker V$, so $S(u+h)=S(u)$ and the certificate is exactly invariant to witness-null motion.

\paragraph{Proposition~\ref{prop:main-frontier} (analytic margin--energy path).}
For one sample define $D_x=\KL(T_x\|\bar T)$, $a_k(x)=\log(T_{x,k}/\bar T_k)$, and
$A_x(\alpha)=\log\sum_k\bar T_ke^{\alpha a_k(x)}$.  Along $u_\alpha=\alpha u_T$,
\[
\ell_x(\alpha):=\KL(T_x\|S(u_\alpha))
=(1-\alpha)D_x+A_x(\alpha),
\qquad
\ell_x''(\alpha)=\operatorname{Var}_{S(u_\alpha)}a(x)\ge0.
\]
Thus $L_{\rm TS}(\alpha)$ is convex and nonincreasing, so
$G(\alpha)=I_T-L_{\rm TS}(\alpha)$ is concave and nondecreasing.  Since
$G(0)=0$ and $G(1)=I_T$, concavity gives $G(\alpha)\ge\alpha I_T$.
Differentiating at the origin yields
\[
G'(0)=\mathbb E_X\!\left[\KL(T_X\|\bar T)+\KL(\bar T\|T_X)\right].
\]
The energy identity follows immediately from $u_\alpha=\alpha u_T$:
$E_{\rm mean}(\alpha)=\alpha^2E_{\rm mean}(1)$.

\paragraph{Theorem~\ref{thm:main-cylinder} (certified visible cylinder).}
For $\ell_x(u)=\KL(T_x\|S(u))$,
\[
\nabla_u\ell_x(u)=\beta V^\top(S(u)-T_x)\in\mathcal V.
\]
Since $\|S(u)-T_x\|_2\le\sqrt2$ and $\|V\|_2=\sqrt{K/(K-1)}$,
$\|\nabla_u\ell_x(u)\|_2\le L_{\rm simp}:=\beta\sqrt{2K/(K-1)}$.
The mean-value inequality therefore gives
\[
|\ell_x(u')-\ell_x(u)|
\le L_{\rm simp}\|P_{\mathcal V}(u'-u)\|_2.
\]
Averaging over $x$ yields the stated margin bound.  If the displacement lies in
$\mathcal N=\ker V$, the right-hand side is zero exactly.

\paragraph{Proposition~\ref{prop:main-multiview} and Corollary~\ref{cor:main-budget} (orthogonal composition).}
Pairwise orthogonality makes witness $m$ insensitive to every other visible block and to the common witness-null complement.  Applying the single-view path bound blockwise gives
$G_m\ge\alpha_m I_m$ simultaneously.  Pythagoras gives additive visible mean energy
$\sum_m c_m\alpha_m^2$.  Setting $\alpha_m=\tau_m/I_m$ therefore achieves every requested margin
$G_m\ge\tau_m$ with the stated additive budget.

\subsection{Constructing the simplex vertices}
\label{app:simplex}

The simplex geometry is fixed once $K$ and $d_z$ are chosen. It is not learned. In $\R^K$, let $e_1,\ldots,e_K$ be the standard basis and define
\[
u_k=e_k-\frac1K\mathbf 1.
\]
Then $\sum_k u_k=0$ and the points lie in the $(K-1)$-dimensional hyperplane orthogonal to $\mathbf 1$. After normalization,
\[
\tilde v_k=\frac{u_k}{\|u_k\|},
\]
one obtains
\[
\tilde v_i^\top\tilde v_j=
\begin{cases}
1,&i=j,\\
-\frac1{K-1},&i\ne j.
\end{cases}
\]
If $d_z>K-1$, the simplex is embedded into any fixed $(K-1)$-dimensional subspace of $\R^{d_z}$, with the remaining coordinates set to zero. Thus the difficult design choice is the teacher posterior $T_x$, not the simplex vertices.

\subsection{Path, inverse, and certified-cylinder proofs}
\label{app:new-constructive-proofs}

\subsubsection{Tight-frame identities and the minimum-norm inverse}
For completeness, the regular-simplex Gram matrix is
\[
VV^\top=\frac{K}{K-1}\left(I-\frac1K\mathbf1\mathbf1^\top\right).
\]
Its nonzero eigenvalues are all $K/(K-1)$, so the rows form a tight frame for $\mathcal V$ and $V^\top V=\frac{K}{K-1}P_{\mathcal V}$.  These identities simultaneously give the inverse in Proposition~\ref{prop:main-fiber}, its energy, and the Lipschitz constant in Theorem~\ref{thm:main-cylinder}.  They also show why an arbitrary learned linear witness does not generally retain the same closed-form geometry: unequal singular values destroy the scaled-isometry property.

The normalized optimality can be stated as an equivalence that makes the design choice non-arbitrary.
\begin{proposition}[Canonicality under the normalized minimal-interface criterion]
\label{prop:canonical-simplex-equivalence}
Let $W\in\R^{K\times(K-1)}$ satisfy $W^\top\mathbf1=0$, $\operatorname{rank}(W)=K-1$, and unit row norms.  The following are equivalent: (i) $\|W\|_2^2=K/(K-1)$, the minimum possible spectral norm under these constraints; (ii) all nonzero singular values equal $\sqrt{K/(K-1)}$; (iii) $WW^\top=\frac{K}{K-1}(I-\frac1K\mathbf1\mathbf1^\top)$; and (iv) the rows are regular-simplex vertices up to a right orthogonal transformation.  Hence the regular simplex is the unique normalized minimal-dimensional witness geometry, up to rotation, that minimizes the worst-case linear sensitivity entering the certified-cylinder radius.
\end{proposition}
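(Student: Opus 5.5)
The plan is a trace argument on the Gram matrix $M:=WW^\top\in\R^{K\times K}$. Unit row norms give $\operatorname{tr}M=K$. The constraint $W^\top\mathbf1=0$ gives $M\mathbf1=0$, so $\operatorname{range}(M)\subseteq\mathbf1^\perp$. Since $\operatorname{rank}(W)=K-1$, $M$ has exactly $K-1$ positive eigenvalues $\sigma_1^2,\dots,\sigma_{K-1}^2$, and its eigenspace for them is $\mathbf1^\perp$. These eigenvalues sum to $K$, so $\|W\|_2^2=\max_i\sigma_i^2\ge K/(K-1)$, which proves that $K/(K-1)$ is the minimum. Equality holds iff every $\sigma_i^2=K/(K-1)$. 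This gives (i)$\Leftrightarrow$(ii), and it also proves the minimality claim inside (i).

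For (ii)$\Rightarrow$(iii): $M$ is symmetric with the single nonzero eigenvalue $K/(K-1)$ on the eigenspace $\mathbf1^\perp$. Hence $M=\frac{K}{K-1}P_{\mathbf1^\perp}=\frac{K}{K-1}(I-\frac1K\mathbf1\mathbf1^\top)$. Conversely, (iii) gives the spectrum directly, so (iii)$\Rightarrow$(ii).

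For (iii)$\Leftrightarrow$(iv): (iii) says exactly that $w_i^\top w_i=1$ and $w_i^\top w_j=-1/(K-1)$ for $i\ne j$, which is the regular-simplex Gram matrix. So (iv)$\Rightarrow$(iii) is immediate, because right orthogonal transformations preserve the Gram matrix. For (iii)$\Rightarrow$(iv), I will use the standard fact that two full-column-rank matrices $W,V\in\R^{K\times(K-1)}$ with $WW^\top=VV^\top$ satisfy $W=VQ$ for an orthogonal $Q$. Take $Q:=V^+W$. Then $VQ=P_{\operatorname{range}V}W=W$, since the two column spaces coincide (both equal $\operatorname{range}M$). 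Also $Q^\top Q=W^\top (V^+)^\top V^+W$, and this reduces to $I$ using $(V^+)^\top V^+=(VV^\top)^+$ and $W^\top M^+W=I$. The last identity holds because $W^\top M^+ W$ is the orthogonal projector onto $\operatorname{row}(W)=\R^{K-1}$. Here $V$ is the simplex from Appendix~\ref{app:simplex}, expressed in an orthonormal basis of $\mathbf1^\perp$.

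The final "hence" sentence then follows. By the proof of Theorem~\ref{thm:main-cylinder}, the Lipschitz constant, and therefore the cylinder radius, scales with $\beta\|V\|_2\sqrt2$. Minimizing $\|W\|_2$ under the normalization constraints therefore selects exactly the class in (iv).

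I expect the main obstacle to be the orthogonal-equivalence step in (iii)$\Rightarrow$(iv), specifically handling the pseudoinverse cleanly. If that becomes messy, I would fall back on polar/SVD factorizations: write $W=U\Sigma A^\top$ and $V=U\Sigma B^\top$, which share $U$ and $\Sigma$ because $M=U\Sigma^2U^\top$ (choosing the same eigenbasis $U$), and set $Q=BA^\top$. The remaining steps are routine spectral bookkeeping.
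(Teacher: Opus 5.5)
Your proof is correct and follows the paper's argument. The trace ($\|W\|_F^2=K$) spread over $K-1$ nonzero squared singular values forces $\|W\|_2^2\ge K/(K-1)$, with equality iff they are all equal, and centering plus full rank then pin $WW^\top$ to $\frac{K}{K-1}P_{\mathbf1^\perp}$, the regular-simplex Gram matrix. You go slightly further than the paper at (iii)$\Rightarrow$(iv): the paper only remarks that right orthogonal multiplication ``accounts for the remaining coordinate freedom,'' whereas your $Q=V^+W$ (or SVD) argument actually proves that full-column-rank factors with equal Gram matrices differ by a right orthogonal factor.
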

\begin{proof}
The unit-row constraint gives $\|W\|_F^2=K$, while rank $K-1$ gives $K-1$ nonzero singular values.  Therefore their largest square is at least their average $K/(K-1)$, with equality iff all are equal.  Centering and full rank make $\mathbf1$ the unique left-null direction, so equality forces $WW^\top=\frac{K}{K-1}P_{\mathbf1^\perp}$, whose diagonal entries are one and off-diagonal entries are $-1/(K-1)$.  This is the regular-simplex Gram matrix.  Conversely that Gram matrix has the stated singular values and spectral norm.  Right multiplication by an orthogonal matrix preserves the Gram matrix and accounts for the remaining coordinate freedom.
\end{proof}
This classical tight-frame characterization is not claimed as a new frame-theoretic result.  Its role here is to show that the same normalized geometry selected by the RST interface criteria is also the geometry that supports the closed-form inverse/fiber and the explicit preservation constant.

\subsubsection{Jeffreys slope and concavity of the certificate path}
For one sample let $D_x=\KL(T_x\|\bar T)$ and $a_k=\log(T_{x,k}/\bar T_k)$.  With
\[
A_x(\alpha)=\log\sum_k\bar T_k e^{\alpha a_k},
\]
we have
\[
\ell_x(\alpha)=(1-\alpha)D_x+A_x(\alpha),\qquad
\ell_x''(\alpha)=\operatorname{Var}_{q_{\alpha,x}}a\ge0.
\]
At the origin,
\[
A_x'(0)=\sum_k\bar T_k\log\frac{T_{x,k}}{\bar T_k}
=-\KL(\bar T\|T_x),
\]
so
\[
-\ell_x'(0)=D_x+\KL(\bar T\|T_x)=J(T_x,\bar T).
\]
At $\alpha=1$, $q_{1,x}=T_x$ and $\ell_x'(1)=0$.  Hence the improvement is steepest at the collapsed endpoint and gradually saturates toward exact Teacher matching.

\subsubsection{A closed-form certificate--energy achievability bound}
For a requested margin $0<\tau<I_T$, Proposition~\ref{prop:main-frontier} gives the explicit choice $\alpha=\tau/I_T$.  The resulting code satisfies $G(\alpha)\ge\tau$ and has exactly
\[
E_{\mathrm{mean}}(\alpha)=
\left(\frac{\tau}{I_T}\right)^2E_{\mathrm{mean}}(1).
\]
This is an achievability statement for the analytic ray, not a claim that the displayed energy is globally minimal over arbitrary nonlinear code functions.

\subsubsection{Average-radius variant of the certified cylinder}
The main theorem uses a uniform radius for a hard per-sample guarantee.  The same proof gives
\[
G(u_0+\delta)\ge G_0-L_{\mathrm{simp}}\E_X\|P_{\mathcal V}\delta(X)\|_2.
\]
Thus an average visible-displacement budget also preserves the empirical/population margin in expectation.  The uniform version is preferable when the goal is to define a strict feasible set for every sample.

\subsection{Scaled analytic targets and the margin-energy tradeoff}
\label{app:alpha-target}

The closed-form code $\uT(x)$ gives exact matching, $S(\uT(x))=T_x$. It is often preferable to use a scaled target
\[
u(x)=\alpha \uT(x),\qquad 0\le \alpha\le1,
\]
which trades certificate strength against proximity to the Gaussian prior center. Along this ray,
\[
S_k(\alpha \uT(x))=
\frac{T_{x,k}^{\alpha}\bar T_k^{1-\alpha}}
{\sum_j T_{x,j}^{\alpha}\bar T_j^{1-\alpha}}.
\]
Thus $\alpha=0$ gives the constant predictor $\bar T$, while $\alpha=1$ gives the teacher $T_x$. Let
\[
Z_\alpha(x)=\sum_j T_{x,j}^{\alpha}\bar T_j^{1-\alpha}.
\]
Then
\[
\KL(T_x\|S(\alpha \uT(x)))
=(1-\alpha)\KL(T_x\|\bar T)+\log Z_\alpha(x).
\]
Hence $L_{\mathrm{TS}}(0)=I_T$ and $L_{\mathrm{TS}}(1)=0$.

The path is continuous. It is also nonincreasing in $\alpha$ on $[0,1]$. To see this, write
\[
\ell_k(x)=\log\frac{T_{x,k}}{\bar T_k},
\qquad
A_x(\alpha)=\log\sum_j \bar T_j\exp(\alpha \ell_j(x)).
\]
Then
\[
S_\alpha(k\mid x)=\bar T_k\exp\{\alpha\ell_k(x)-A_x(\alpha)\},
\]
and the log-partition function satisfies
\[
A_x'(\alpha)=\E_{S_\alpha(x)}\ell(x),\qquad
A_x''(\alpha)=\operatorname{Var}_{S_\alpha(x)}(\ell(x))\ge0.
\]
Thus $A_x'$ is nondecreasing. Since $S_1(x)=T_x$, we have $A_x'(1)=\E_{T_x}\ell(x)$, and therefore $A_x'(\alpha)\le A_x'(1)$ for $0\le\alpha\le1$. Differentiating the KL expression gives
\[
\frac{d}{d\alpha}\KL(T_x\|S(\alpha \uT(x)))
=
-\E_{T_x}\ell(x)+A_x'(\alpha)
\le0.
\]
In nondegenerate cases the inequality is strict except on flat intervals where the relevant log-odds are constant.

For any margin $0<\tau<I_T$, at least one $\alpha$ satisfies
\[
L_{\mathrm{TS}}(\alpha)\le I_T-\tau.
\]
Hence the safety scale $\alpha_\star(\tau)$ defined above is well defined; it is the minimum-energy point on this analytic ray that still achieves the desired certificate margin. Its mean-KL energy is
\[
\frac12\E_x\|\alpha_\star(\tau) \uT(x)\|^2
=\alpha_\star(\tau)^2\frac{K-1}{2K\beta^2}\mathcal E_T.
\]
The design interpretation is direct: move only as far from the prior center as needed to beat the constant baseline.

The same interpretation can be used to quantify partial non-collapse inside the protected witness subspace. The scalar $\alpha$ does not create new active latent directions; it only scales the teacher directions already present in $\uT(x)$. Thus, for $\alpha>0$, the rank of the mean-code covariance along this ray is the same as the rank of $\operatorname{Cov}(\uT(x))$. What improves continuously is not full dimensional usage, but the amount of teacher information expressed by the fixed witness. This distinction is important: $L_{\mathrm{TS}}\approx0$ means $S(u(x))\approx T_x$, not that every coordinate or every posterior dimension is noncollapsed.

\subsection{Distance to the analytic target and its effect on alignment}
\label{app:target-distance}

Let
\[
d(x)=\uphi(x)-\uT(x).
\]
Since $S(\uT(x))=T_x$, the witness-logit perturbation caused by $d(x)$ is
\[
\delta a_k(x)=\beta v_k^\top d(x).
\]
Therefore
\[
\KL(T_x\|S(\uT(x)+d(x)))
=
\log\sum_j T_{x,j}e^{\delta a_j(x)}-
\sum_k T_{x,k}\delta a_k(x).
\]
For small $d(x)$,
\[
\KL(T_x\|S(\uT(x)+d(x)))
\approx
\frac12\delta a(x)^\top
\left[\operatorname{Diag}(T_x)-T_x T_x^\top\right]
\delta a(x).
\]
A simple upper bound follows from the bounded curvature of the softmax log-partition:
\[
L_{\mathrm{TS}}
\le
\frac{\beta^2K}{2(K-1)}
\E_x\|\uphi(x)-\uT(x)\|^2.
\]
Thus controlling the witness-relevant distance to $\uT(x)$ controls the alignment loss. The converse is not generally true in the full latent space, since directions orthogonal to the simplex span may not change $S$.

\subsection{General linear-witness safe tubes and trajectory-mined preservation}
\label{app:general-safe-tube}
The fixed simplex gives the explicit constant $L_{\mathrm{simp}}$.  A corresponding statement holds for any fixed linear-softmax witness $S(u)=\softmax(Wu+b)$.  Row-center the matrix,
\[
W_c=\left(I-\frac1K\mathbf1\mathbf1^\top\right)W.
\]
Common-logit shifts are irrelevant, and
\[
\nabla_u\KL(T_x\|S(u))=W_c^\top(S(u)-T_x),
\qquad
\|\nabla_u\KL(T_x\|S(u))\|_2\le\sqrt2\|W_c\|_2=:K_W.
\]
If an anchor $\mu_0$ has safety margin $G_0=I_T-L_{\mathrm{TS}}(\mu_0)-\tau>0$, every refinement $\mu=\mu_0+\delta$ with $\sup_x\|\delta(x)\|\le \rho$ satisfies
\[
G_\tau(\mu)\ge G_0-K_W\rho.
\]
For a Gaussian encoder whose variance is held fixed during this refinement, $\|\delta(x)\|_2$ is also the pointwise $W_2$ distance between the source and refined diagonal posteriors.  Thus $\rho<G_0/K_W$ defines a posterior tube that excludes complete input-independent posterior collapse without requiring an escape gradient and without putting the witness loss in the refinement objective.

\subsubsection{Validation-only anchor selection and final stress test}
One ordinary CIFAR-100 ConvVAE trajectory used latent dimension 64 and was trained from random initialization for 50 epochs with $\beta_{\mathrm{KL}}=10^{-4}$, followed by strong-KL stress with $\beta_{\mathrm{KL}}=5$.  Candidate checkpoints were predeclared.  On each candidate, the encoder was frozen, the witness was fitted on the training split, and validation-only selection maximized the normalized certified radius subject to a positive margin and a PSNR tolerance.  Test data were not used to select the anchor.

The selected epoch was 50.  A quick scan gave $G_{0,\mathrm{val}}=1.90583$, normalized radius $0.0227936$, and PSNR $20.5282$ dB.  After the checkpoint was fixed, the final three-seed witness protocol gave
\[
G_{0,\mathrm{val}}=2.77111,\qquad \rho=0.0403527,
\]
with a held-out finite-split analytical lower bound $1.35285>0$.  The chosen radius is one half of the final certified radius $G_0/K_W$.

\begin{table}[htbp]
\centering
\small
\caption{Detailed trajectory-mined preservation experiment.}
\label{tab:app-certified-tube-final}
\begin{tabular}{lrrrrrr}
\toprule
Branch & Final $G_\tau$ & Min. $G_\tau$ & KL & AU & Max disp. & Stress obj.\\
\midrule
Frozen anchor & 2.73842 & 2.73842 & 54.2750 & 64 & 0 & 271.386\\
Unconstrained & -0.670815 & -0.670815 & $1.67735\times10^{-6}$ & 0 & 16.0919 & 0.0700051\\
Certified tube & 2.73512 & 2.73490 & 54.0151 & 64 & 0.0403527 & 270.086\\
\bottomrule
\end{tabular}
\end{table}

The tube had zero constraint violations.  Relative to the frozen source it reduced the exact stress objective by $1.29947$, with reconstruction-MSE increase $2.95431\times10^{-5}$, KL reduction $0.2599$, and PSNR change $-0.011927$ dB.  The residual was strongly sample dependent:
\[
\|\E\delta(X)\|_2=0.00310921,\qquad
\text{centered residual RMS}=0.0402327,
\]
with centered energy fraction $0.994063$, 64 active residual dimensions, and effective rank $59.0465$.  A common translation would have centered energy fraction zero.

\paragraph{Interpretation.}
This experiment establishes preservation rather than creation.  The early VAE trajectory creates the first informative checkpoint; validation-only mining discovers it; the certificate converts it into a nonzero region that allows further optimization while excluding complete Teacher-relative collapse.  The unconstrained branch demonstrates why the constraint is nontrivial: under strong KL pressure it crosses the certificate boundary and becomes completely collapsed under the declared diagnostics.

\subsection{Two-view transfer and information--energy--stability Teacher design}\label{app:two-view-transfer}
\label{app:view-gap-restored}
A useful extension considers the case in which the Teacher used for training is computed from a different controlled view than the one used for evaluation.  Let $Q_x$ be the training-view Teacher and $P_x$ the evaluation-view Teacher, with
\[
\begin{aligned}
I_Q&=\E\KL(Q_x\|\bar Q), & I_P&=\E\KL(P_x\|\bar P),\\
L_Q&=\E\KL(Q_x\|S(u(x))), & L_P&=\E\KL(P_x\|S(u(x))).
\end{aligned}
\]
Then, defining
\[
\Delta_{XT}:=|I_P-I_Q|+|L_P-L_Q|,
\]
the evaluation-view margin obeys the deterministic inequality
\[
I_P-L_P\ge I_Q-L_Q-\Delta_{XT}.
\]
For the analytic code that exactly matches $Q_x$, $L_Q=0$ and $L_P=\E\KL(P_x\|Q_x)$, so the pre-training quantity
\[
\Delta_{XT}^{\mathrm{analytic}}
=|I_P-I_Q|+\E\KL(P_x\|Q_x)
\]
is directly computable.  A practical Teacher-search score can therefore balance
\[
\text{information } I_Q,
\qquad
\text{minimum exact energy }\frac{K-1}{2K\beta^2}\mathcal E_Q,
\qquad
\text{view stability }\Delta_{XT}^{\mathrm{analytic}}.
\]
This score is a design heuristic rather than a theorem about downstream performance.  The main experiments use the cleaner same-view case, where the gap is zero.

\subsection{A population concentration bound for a frozen marginal}
\label{app:population-concentration}
The main benchmark claims are exact on the empirical evaluation distribution.  If a population statement is desired, a simple bound is available under an explicit positivity condition.  Assume the population teacher marginal $P_C=\bar T$ is fixed and, for some $\eta\in(0,1)$,
\[
\bar T_c\ge\eta,\qquad S(c\mid Z)\ge\eta\quad\text{almost surely for every }c.
\]
For i.i.d.\ evaluation pairs $(X_i,Z_i)$ define
\[
Y_i=\sum_c T(c\mid X_i)\log\frac{S(c\mid Z_i)}{\bar T_c},\qquad
\widetilde G_n=\frac1n\sum_{i=1}^n Y_i.
\]
Then $\E Y_i=G_T$ by the Barber--Agakov form and $|Y_i|\le \log(1/\eta)$.  Hoeffding's inequality therefore gives, with probability at least $1-\delta$,
\[
G_T\ge \widetilde G_n-\log(1/\eta)\sqrt{\frac{2\log(2/\delta)}{n}}.
\tag{A.1}
\]
Thus a strictly positive right-hand side certifies positive population teacher information.  This bound is deliberately separate from the main empirical-split certificate: it assumes a frozen population marginal rather than replacing it by the same-split empirical $\bar T_n$.  More refined sample-splitting or empirical-Bernstein analyses are possible but are not needed for the paper's benchmark claims.

\subsection{Certificate identities and consistency checks}
\paragraph{Direct Barber--Agakov form.}
Using $\IT=\E\log(T(C\mid X)/P_C(C))$ and the definition of $\LTS$ gives directly
\[
\G=\E_{C,Z}\log\frac{S(C\mid Z)}{P_C(C)},
\]
which is the variational MI lower bound.  Theorem~\ref{thm:main-info-decomp} refines it by identifying the exact calibration gap.

\paragraph{Empirical-distribution exactness.}
For a fixed evaluation set $\{x_i\}_{i=1}^n$, take $P_X$ uniform over those points and define $\bar T=n^{-1}\sum_iT_{x_i}$.  All main identities then hold exactly on this finite probability space.  This distinguishes exact certification on the benchmark split from generalization to an unseen population.

\label{app:new-info-proof}

\subsubsection{The constant boundary follows directly from the information identity}
If $Z\equiv u_0$, then $I(C;Z)=0$.  Theorem~\ref{thm:main-info-decomp} therefore gives
\[
G_T=-\KL(\bar T\|S(\cdot\mid u_0))\le0,
\]
which is identical to the direct KL decomposition in Theorem~\ref{thm:main-boundary}.  The centered witness at $u_0=0$ makes the right-hand side zero.  Thus the word ``exact'' can be read either from the best-constant-predictor decomposition or from the information identity.

\subsubsection{Why the Teacher information equals the input--Teacher mutual information}
By construction $P(C=c\mid X=x)=T_x(c)$ and $P_C(c)=\bar T_c$, so
\[
I(C;X)=\E_X\sum_c P(c\mid X)\log\frac{P(c\mid X)}{P_C(c)}
=\E_X\KL(T_X\|\bar T)=I_T.
\]
Thus teacher information is ordinary mutual information of the induced random variable $C$, not an analogy or proxy definition.

\subsubsection{Consistency with data processing}
Because $C\leftarrow X\rightarrow Z$ and the experimental route is generated from $X$, data processing gives $I(C;Z)\le I(C;X)=I_T$.  Together with the main theorem,
\[
G_T\le I(C;Z)\le I_T.
\]
Hence a positive population certificate naturally lies in $0<G_T\le I_T$.  A substantial numerical value $G_T>I_T$ should therefore trigger an implementation, normalization, or estimation audit.

\subsubsection{First-order alignment versus second-order mean-prior cost}
The simplex energy identity gives
\[
\frac12\E\|u_T(X)\|^2=\frac{K-1}{2K\beta^2}\mathcal E_T.
\]
For $u_\alpha=\alpha u_T$, the standard-Gaussian mean-prior cost is exactly
\[
\frac12\E\|u_\alpha(X)\|^2
=\alpha^2\frac{K-1}{2K\beta^2}\mathcal E_T.
\]
Meanwhile Proposition~\ref{prop:main-frontier} gives $L'_{\mathrm{TS}}(0)<0$ whenever $I_T>0$.  The witness-alignment improvement is therefore first order at the origin while the mean-prior penalty has no first-order term.  This is the clean local content of the escape-force statement; the reconstruction term of the full ELBO can still contribute at first order.

\FloatBarrier
\section{Decodability versus prescribed alignment}
\label{app:interface-separation}
\noindent{\footnotesize\emph{\hyperref[sec:main-discussion]{Back to main text: Discussion and scope}; \hyperref[sec:main-experiments]{Experimental evidence}.}}\par\vspace{-0.35em}
This block isolates the distinction between information that a newly fitted head can decode and information that is available through the predeclared interface.  It places the exact rotation separation next to the matched Aux/RST controls, cross-seed and cross-architecture reuse tests, adapter recovery, and controlled gauge checks.

\subsection{Why learned decodability does not imply prescribed alignment}
\label{app:decodability-separation}
The distinction used in the main text has a simple exact form.  A learned probe is allowed to absorb a change of latent coordinates; a prescribed interface is not.  The following construction shows that the implication can fail even when Teacher recovery is exact.

\begin{proposition}[Decodability--alignment separation]
\label{prop:decodability-separation}
Let the centered simplex witness be
\[
S_V(u)=\softmax(\beta Vu+\log\bar T),\qquad \mathcal V=\operatorname{row}(V),\quad \mathcal N=\ker V,
\]
and let $u_T(x)\in\mathcal V$ be the exact Teacher code, so $S_V(u_T(x))=T_x$.  For every orthogonal matrix $Q$, the transformed code $\tilde u(x)=Q u_T(x)$ is still exactly decodable by the correspondingly transformed head
\[
S_Q(\tilde u)=\softmax(\beta VQ^\top\tilde u+\log\bar T),
\qquad S_Q(\tilde u(x))=T_x.
\]
If $\dim\mathcal N\ge\dim\mathcal V=K-1$, one can choose $Q$ with $Q\mathcal V\subseteq\mathcal N$.  For that choice,
\[
S_V(\tilde u(x))=\bar T,\qquad L_{\rm TS}(S_V,\tilde u)=I_T,\qquad G_T(S_V,\tilde u)=0,
\]
while the rotated learned head still has exact Teacher alignment, $L_{\rm TS}(S_Q,\tilde u)=0$.
\end{proposition}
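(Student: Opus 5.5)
The plan is to verify the two claims separately: exact decodability under co-rotation, and then collapse of the fixed witness after choosing a rotation that carries $\mathcal V$ into $\mathcal N$. Both rest on Proposition~\ref{prop:main-fiber} and Theorem~\ref{thm:main-boundary}.

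\textbf{Co-rotation.} Because $Q^\top Q=I$, we have
\[
\beta VQ^\top\tilde u(x)=\beta VQ^\top Qu_T(x)=\beta Vu_T(x).
\]
The logits of $S_Q(\tilde u(x))$ therefore coincide with those of $S_V(u_T(x))$, so $S_Q(\tilde u(x))=T_x$ by Proposition~\ref{prop:main-fiber}. It follows that $\KL(T_x\|S_Q(\tilde u(x)))=0$ for every $x$, and averaging over $X$ gives $L_{\rm TS}(S_Q,\tilde u)=0$.

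\textbf{Constructing $Q$.} Write $k=K-1=\dim\mathcal V$, and use $\mathbb R^{d_z}=\mathcal V\oplus\mathcal N$ orthogonally. Choose an orthonormal basis $e_1,\dots,e_k$ of $\mathcal V$. Since $\dim\mathcal N\ge k$, we can choose orthonormal vectors $f_1,\dots,f_k\in\mathcal N$.

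Define $Q$ to map $e_i\mapsto f_i$ and $f_i\mapsto e_i$, and to act as the identity on the orthogonal complement of $\operatorname{span}\{e_i,f_i\}$. The family $\{e_i\}\cup\{f_i\}$ is orthonormal because $\mathcal V\perp\mathcal N$. Hence $Q$ is a product of $k$ commuting reflections, each swapping one pair, and is orthogonal with $Q\mathcal V=\operatorname{span}\{f_i\}\subseteq\mathcal N$.

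The statement says ``orthogonal matrix'', so a reflection is admissible. If a proper rotation is wanted instead, rotate each plane $\operatorname{span}\{e_i,f_i\}$ by $\pi/2$. This also sends $e_i\mapsto f_i$ and gives $\det Q=1$.

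This construction is the only step with any real content, and the substep I expect to need care is the orthonormality and disjointness of the bases, which follows from $\mathcal V\perp\mathcal N$.

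\textbf{Collapse under the fixed witness.} Since $u_T(x)\in\mathcal V$, we have $\tilde u(x)\in\mathcal N=\ker V$, so $V\tilde u(x)=0$. Then
\[
S_V(\tilde u(x))=\softmax(\log\bar T)=\bar T,
\]
using that $\bar T$ sums to one. Thus $\KL(T_x\|S_V(\tilde u(x)))=\KL(T_x\|\bar T)$, and averaging gives $L_{\rm TS}(S_V,\tilde u)=\E_X\KL(T_X\|\bar T)=I_T$. Consequently $G_T(S_V,\tilde u)=I_T-I_T=0$.

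This is exactly the attained input-independent value from Theorem~\ref{thm:main-boundary}, even though $\tilde u$ is input-dependent and carries all of the Teacher information.
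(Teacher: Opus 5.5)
Your proof is correct and follows the paper's argument: $Q^\top Q=I$ restores the original logits for the co-rotated head, and $Q\mathcal V\subseteq\ker V$ reduces the fixed witness to $\softmax(\log\bar T)=\bar T$, so $L_{\rm TS}=I_T$ and $G_T=0$. The one addition is your explicit construction of $Q$ (pairwise swaps, or $\pi/2$ rotations in each plane), whereas the paper only asserts that such a $Q$ exists when $\dim\mathcal N\ge\dim\mathcal V$.
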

\begin{proof}
Orthogonality gives $Q^\top\tilde u=u_T$, so the transformed head reproduces the original logits.  When $Q\mathcal V\subseteq\ker V$, $V\tilde u=0$ and the original fixed witness reduces to $\softmax(\log\bar T)=\bar T$.  The constant-boundary identity then gives $L_{\rm TS}=I_T$ and $G_T=0$.
\end{proof}
This is a logical separation, not a claim that unconstrained learned representations typically rotate exactly into $\ker V$.  Its role is to identify what a post-hoc probe can hide: coordinate changes can preserve perfect decodability while destroying a previously declared interface.  The Aux-full/Aux-block controls in Table~\ref{tab:controls} test the corresponding practical question without constructing $Q$ by hand.

\subsection{Auxiliary-supervision and functional-interface controls}
\label{app:aux-control-full}
\mainback{sec:main-experiments}{Section~\ref*{sec:main-experiments} (functional-interface evidence)}
Table~\ref{tab:aux-control-perseed} records all ten auxiliary-control runs.  The learned-head columns measure the task each control actually optimizes.  The fixed-witness columns are adapter-free post-training audits using the same predeclared simplex geometry as RST; no new readout is fitted for these columns.  Thus a gap between the two is informative rather than contradictory: it indicates that the labels are readable in the latent code but not in the independently prescribed orientation.

\begin{table}[htbp]
\centering
\scriptsize
\caption{Per-seed auxiliary-supervision controls.  $a^{\mathrm{aux}}$ is accuracy of the learned auxiliary head; $a^{\mathrm{pre}}$ and $G$ use the predeclared fixed simplex witness after training, with no refitted adapter.}
\label{tab:aux-control-perseed}
\begin{tabular}{llrrrrrrr}
\toprule
Method & Seed & $a_c^{\mathrm{aux}}$ & $a_f^{\mathrm{aux}}$ & $G_c$ & $G_f$ & $a_c^{\mathrm{pre}}$ & $a_f^{\mathrm{pre}}$ & AU \\
\midrule
Aux-full & 0 & 0.4033 & 0.2737 & -3.8858 & -4.7483 & 0.0599 & 0.0093 & 192 \\
Aux-full & 1 & 0.4035 & 0.2765 & -2.5292 & -4.5347 & 0.0365 & 0.0100 & 192 \\
Aux-full & 2 & 0.3943 & 0.2566 & -3.0721 & -4.6013 & 0.0449 & 0.0123 & 192 \\
Aux-full & 3 & 0.4163 & 0.2915 & -3.8300 & -5.1843 & 0.0631 & 0.0065 & 192 \\
Aux-full & 4 & 0.4084 & 0.2692 & -2.1256 & -5.3920 & 0.0602 & 0.0129 & 192 \\
\midrule
Aux-block & 0 & 0.3727 & 0.2202 & -2.7727 & -4.4938 & 0.0564 & 0.0082 & 184 \\
Aux-block & 1 & 0.3680 & 0.2308 & -1.3214 & -4.7288 & 0.0380 & 0.0119 & 186 \\
Aux-block & 2 & 0.3605 & 0.2380 & -3.8074 & -4.5395 & 0.0523 & 0.0105 & 187 \\
Aux-block & 3 & 0.3737 & 0.2244 & -3.6090 & -5.8882 & 0.0444 & 0.0086 & 190 \\
Aux-block & 4 & 0.3536 & 0.1805 & -1.9743 & -6.0291 & 0.0475 & 0.0114 & 187 \\
\bottomrule
\end{tabular}
\end{table}

\begin{table}[htbp]
\centering
\small
\caption{Five-seed summary for the auxiliary controls and the existing joint-RST reference.}
\label{tab:aux-control-summary-app}
\begin{tabular}{lrrrrr}
\toprule
Method & $G_c$ & $G_f$ & KL & AU & PSNR \\
\midrule
Aux-full & $-3.089\pm0.779$ & $-4.892\pm0.377$ & $55.74\pm1.12$ & $192.0\pm0.0$ & $20.004\pm0.052$ \\
Aux-block & $-2.697\pm1.059$ & $-5.136\pm0.758$ & $58.32\pm2.36$ & $186.8\pm2.2$ & $20.021\pm0.130$ \\
RST-joint & $1.057\pm0.027$ & $1.477\pm0.026$ & $56.89\pm0.82$ & $191.8\pm0.4$ & $20.108\pm0.102$ \\
\bottomrule
\end{tabular}
\end{table}

For Aux-full, only $8.99\%\pm0.68\%$ of the coarse-head weight energy lies in the RST coarse block, while $52.47\%\pm1.60\%$ of the fine-head weight energy lies in the RST fine block.  Aux-block has fraction one by construction.  These diagnostics are secondary: the central control is the independent fixed-witness audit, which remains negative for both views in all ten auxiliary runs.  We do not use auxiliary-control runtime in the efficiency claim of Experiment~3.

\subsubsection{Functional-interface evaluation: probes, cross-seed reuse, and adapters}
\label{app:functional-interface-full}
All encoders/decoders are frozen for this evaluation.  A true post-hoc linear softmax probe is fitted on the CIFAR-100 training split and evaluated on the test split using the same soft Teacher KL and $I_T$ scale as the certificate.  Aux-block and RST probes read the declared $19/99$-dimensional blocks; Aux-full reads all $192$ mean coordinates.  The probe is diagnostic only and never alters the saved representation.

\begin{table}[htbp]
\centering
\scriptsize
\caption{Frozen-encoder route-feasibility summary (five seeds).  Positive $G_{\rm probe}$ with negative $G_{\rm pre}$ means Teacher information is readable by a newly fitted route but not certified on the predeclared interface.}
\label{tab:functional-probe-app}
\begin{tabular}{llrrrr}
\toprule
Method & View & $G_{\rm probe}$ & probe acc. (\%) & $G_{\rm pre}$ & predeclared acc. (\%) \\
\midrule
Aux-block & coarse & $0.986\pm0.013$ & $38.91\pm0.52$ & $-2.697\pm1.059$ & $4.77\pm0.71$ \\
Aux-block & fine & $1.619\pm0.037$ & $29.31\pm0.70$ & $-5.136\pm0.758$ & $1.01\pm0.17$ \\
Aux-full & coarse & $1.078\pm0.013$ & $43.10\pm0.33$ & $-3.089\pm0.779$ & $5.29\pm1.16$ \\
Aux-full & fine & $1.710\pm0.028$ & $32.09\pm0.54$ & $-4.892\pm0.377$ & $1.02\pm0.26$ \\
RST-joint & coarse & $1.162\pm0.016$ & $45.35\pm0.37$ & $1.057\pm0.027$ & $43.84\pm1.30$ \\
RST-joint & fine & $1.957\pm0.026$ & $37.48\pm0.54$ & $1.477\pm0.026$ & $33.58\pm0.56$ \\
\bottomrule
\end{tabular}
\end{table}

For cross-seed reuse, source seed $s$ contributes its jointly trained Aux-block head and target seed $t$ contributes its frozen encoder.  Diagonal pairs $(s=t)$ are native; off-diagonal pairs use no adapter.  To avoid treating the $20$ ordered off-diagonal pairs as independent replicates, we first average the four source heads for each target, then report mean $\pm$ sample s.d. over the five target seeds.  Every one of the $40$ off-diagonal coarse/fine Teacher margins is negative.

For recovery, paired deterministic train features fit an origin-preserving orthogonal Procrustes map
\[
R^\star=\arg\min_{R^\top R=I}\|X_tR-X_s\|_F^2
\]
using $20{,}000$ examples~\citep{schonemann1966procrustes}; a ridge-affine map is a more flexible supplementary diagnostic.  Procrustes makes $39/40$ transfer margins positive, while ridge-affine makes all $40/40$ positive.  Because affine recovery is consistently stronger, the evidence supports a general coordinate/interface mismatch and should not be read as showing that independently trained Aux codes differ only by a rotation.

\begin{table}[htbp]
\centering
\small
\caption{Cross-seed reuse and adapter recovery.  Accuracy statistics use the per-target aggregation described above.  $G$ columns use the same Teacher-relative margin.}
\label{tab:functional-adapter-app}
\begin{tabular}{llrr}
\toprule
View & Interface & Accuracy (\%) & $G$ \\
\midrule
coarse & Aux native & $36.57\pm0.85$ & $0.866\pm0.021$ \\
coarse & no adapter & $5.20\pm0.54$ & $-0.967\pm0.438$ \\
coarse & +Procrustes & $28.88\pm3.87$ & $0.540\pm0.213$ \\
coarse & +ridge-affine & $35.70\pm0.43$ & $0.826\pm0.006$ \\
fine & Aux native & $21.88\pm2.24$ & $1.178\pm0.116$ \\
fine & no adapter & $0.94\pm0.10$ & $-1.075\pm0.245$ \\
fine & +Procrustes & $16.59\pm1.04$ & $0.939\pm0.052$ \\
fine & +ridge-affine & $21.17\pm0.29$ & $1.132\pm0.018$ \\
\bottomrule
\end{tabular}
\end{table}

\subsubsection{Cross-architecture fixed-interface and reader-transfer audit}
\label{app:cross-architecture-interface}
The cross-architecture audit keeps the CIFAR-100 Teacher definitions, smoothing $\epsilon=0.02$, witness gain $\beta=5$, latent dimension $192$, and the $19/99$ coarse/fine visible blocks fixed.  Five ResVAE seeds are trained for both RST-joint and Aux-block.  For the new ResVAE runs, each fine class contributes $450$ training and $50$ validation examples (a deterministic $45{,}000/5{,}000$ split with split seed $20260905$); the official $10{,}000$-image test set is not accessed until the minimum predeclared validation objective has locked the checkpoint.  Training uses AdamW, 55 epochs, five-epoch KL warmup, and cosine learning rate $2\times10^{-4}\to10^{-5}$.  RST uses only reconstruction, KL, and the two fixed-witness alignment losses with $\lambda_c=\lambda_f=2$: no analytic-code regression and no hard parameterization $u=\alpha u_T+Nh$ are used.  The earlier ConvVAE checkpoints were trained on the original $50$k training set and are reused unchanged; therefore this audit tests whether a second architecture can satisfy the same fixed interface, not a matched-data Conv-versus-Res performance ranking.

\begin{table}[htbp]
\centering
\scriptsize
\caption{Cross-architecture fixed-interface summary (mean $\pm$ sample s.d. over five seeds).  Native Aux columns use each control's jointly trained head; fixed columns use the common predeclared simplex witness.}
\label{tab:cross-architecture-app}
\resizebox{\linewidth}{!}{%
\begin{tabular}{llrrrrrr}
\toprule
Producer & Method & $G_c$ & $G_f$ & fixed $a_c$ (\%) & fixed $a_f$ (\%) & native $a_c$ (\%) & native $a_f$ (\%) \\
\midrule
ConvVAE & RST & $1.057\pm0.027$ & $1.477\pm0.026$ & $43.84\pm1.30$ & $33.58\pm0.56$ & -- & -- \\
ResVAE & RST & $0.869\pm0.057$ & $1.083\pm0.105$ & $37.28\pm1.94$ & $23.87\pm2.20$ & -- & -- \\
ConvVAE & Aux-block & $-2.697\pm1.059$ & $-5.136\pm0.758$ & $4.77\pm0.71$ & $1.01\pm0.17$ & $36.57\pm0.85$ & $21.88\pm2.24$ \\
ResVAE & Aux-block & $-2.543\pm1.004$ & $-5.314\pm0.883$ & $4.90\pm0.75$ & $0.85\pm0.18$ & $20.03\pm1.11$ & $10.38\pm1.09$ \\
\bottomrule
\end{tabular}%
}
\end{table}

For learned-reader transfer, every ordered source-head/target-encoder combination is evaluated ($5\times5$ pairs in each architecture direction and view).  To avoid treating the five source heads for one target as independent replicates, Table~\ref{tab:cross-architecture-adapter-app} first averages source heads within each target and then reports mean $\pm$ sample s.d. over the five target seeds.  Both adapters use $20{,}000$ paired feature examples drawn only from the common training subset; neither validation nor test examples are used for adapter fitting.

\begin{table}[htbp]
\centering
\scriptsize
\setlength{\tabcolsep}{2.1pt}
\caption{Cross-architecture learned-reader reuse and recovery.  Each accuracy/$G$ pair is target-aggregated over five seeds as described in the text.}
\label{tab:cross-architecture-adapter-app}
\resizebox{\linewidth}{!}{%
\begin{tabular}{llrrrrrr}
\toprule
Transfer & View & no-adapter acc. & no-adapter $G$ & Proc. acc. & Proc. $G$ & affine acc. & affine $G$ \\
\midrule
Conv$\to$Res & coarse & $5.11\pm0.50$ & $-0.993\pm0.404$ & $17.22\pm2.41$ & $0.017\pm0.282$ & $21.81\pm1.27$ & $0.458\pm0.033$ \\
Conv$\to$Res & fine & $0.91\pm0.14$ & $-0.816\pm0.138$ & $9.92\pm0.57$ & $0.489\pm0.045$ & $14.74\pm0.64$ & $0.777\pm0.036$ \\
Res$\to$Conv & coarse & $4.98\pm0.50$ & $-0.120\pm0.069$ & $21.69\pm2.90$ & $0.323\pm0.021$ & $24.43\pm0.40$ & $0.317\pm0.003$ \\
Res$\to$Conv & fine & $0.96\pm0.04$ & $-0.125\pm0.028$ & $10.33\pm0.86$ & $0.528\pm0.052$ & $10.47\pm0.15$ & $0.501\pm0.004$ \\
\bottomrule
\end{tabular}%
}
\end{table}

At the raw pair level, $98/100$ no-adapter cross-architecture margins are negative; the two positive values are only $0.0046$ and $0.0232$.  Orthogonal Procrustes makes $89/100$ positive, while ridge-affine makes all $100/100$ positive.  The asymmetric and incomplete Procrustes recovery is evidence for a broader coordinate/interface mismatch, not for a pure rotation.  Conversely, positive fixed-witness margins for all ten Conv/Res RST runs show that two independently optimized producer architectures can satisfy the same predeclared interface; this is evidence for cross-architecture realizability of the specified interface, not a universal architecture-invariance theorem.

\subsubsection{Controlled gauge and selective block interventions}
\label{app:functional-gauge-intervention}
For a controlled coordinate-gauge sanity check, rotate an RST visible block as $X'=XQ$.  Keeping the old fixed witness makes accuracy collapse to $4.81\%$ (coarse) and $1.00\%$ (fine), whereas transforming the readout as $V'=VQ$ restores the original $43.84\%$ and $33.58\%$ accuracies.  The adapted logits match the originals to at most $4.8\times10^{-6}$ across the five seeds.  This is a controlled algebraic check of the separation proposition, not a claim that natural SGD produces an exact orthogonal rotation.

Zero/swap interventions audit the block contract.  Zeroing the coarse block drives its margin to $5.8\times10^{-8}$ while leaving the fine margin at $1.477\pm0.026$; zeroing the fine block drives its margin to $-4.6\times10^{-7}$ while leaving the coarse margin at $1.057\pm0.027$.  Swapping a target block with a donor preserves donor-interface accuracy ($43.84\pm1.30\%$ coarse; $33.58\pm0.56\%$ fine), destroys agreement with the recipient on that route ($4.84\pm0.19\%$; $0.88\pm0.04\%$), and leaves the orthogonal non-target margin numerically unchanged.  These are latent-interface interventions, not decoder-causality evidence.

\FloatBarrier
\section{Full empirical evidence and reproducibility}
\label{app:full-evidence}
\mainback{sec:main-experiments}{Section~\ref*{sec:main-experiments} (Experimental evidence)}
This block contains the complete tables, protocols, and implementation details underlying the compact main-text results.  It is intended as the first stop for numerical verification of reported means, seed stability, matched controls, and reproducibility.

\subsection{CIFAR-100 routing, anti-collapse, and efficiency studies}
\label{app:new-cifar100}
\paragraph{Selective routing and composition.}
The CIFAR-100 coarse labels define $K_c=20$ and the fine labels define $K_f=100$.  Their regular-simplex blocks occupy $19$ and $99$ mutually orthogonal coordinates of a $d_z=192$ code, leaving $74$ common witness-null coordinates.  The Teacher smoothing parameter is $0.02$, witness gain is $5$, batch size is $256$, and the stress stage uses $\beta_{\mathrm{KL}}=5$.  The five variants are baseline, coarse-only, fine-only, joint, and joint with explicit null routing.  Every reported main-table value is mean$\pm$sample standard deviation over seeds $0$--$4$.

\paragraph{Matched anti-collapse comparison.}
The matched comparison uses the same convolutional backbone, latent dimension $192$, batch size $256$, 55 epochs, evaluation every five epochs, AdamW learning rate $2\times10^{-4}$, weight decay $10^{-4}$, KL weight $1$ with five-epoch warmup, and AMP.  InfoVAE uses a multi-scale IMQ-MMD estimator with $\alpha=0$, $\lambda=1000$, and scales $\{0.1,0.2,0.5,1,2,5,10\}$.  Before freezing this baseline, an MMD gradient sanity test verified that optimization reduced a large translated-distribution MMD to near zero and that $\mathrm{MMD}(x,x)=0$ numerically.  An initial $\alpha=1$ diagnostic configuration was rejected before formal multi-seed reporting because, for this simple decoder and estimator scaling, conditional KL and aggregate-posterior MMD became unstable.  Hyperparameters for the reported InfoVAE were fixed from its own optimization stability rather than from RST certificate margins.  The $\delta$ control fixes posterior variance so that the zero-mean conditional KL equals 8 nats; nonzero means can only increase that KL.  It is an architecture-matched committed-rate control, not an exact reproduction of the original autoregressive $\delta$-VAE architecture.  RST-joint uses coarse and fine alignment weights of 2.

\paragraph{Auxiliary-supervision control.}
We add two learned-head controls to separate the supervision signal from the prescribed simplex route.  Both use the same ConvVAE, latent dimension $192$, smoothed coarse/fine Teachers, $55$ epochs, batch size $256$, five seeds, KL schedule, and weights $\lambda_c=\lambda_f=2$ as the joint RST comparison.  Instead of a fixed simplex witness, the auxiliary loss is computed through trainable linear softmax heads.  Aux-full reads all $192$ latent-mean coordinates.  Aux-block restricts the coarse head to the $19$-dimensional coarse block and the fine head to the $99$-dimensional fine block, matching the RST block allocation.  The frozen simplex witnesses are excluded from control training and are used only after training for independent certificate and fixed-route-accuracy audits.  We interpret high learned-head accuracy with nonpositive fixed-route margins as decodability without certified realization on the prescribed route; the audit margin is not used for hyperparameter selection.

\paragraph{Efficiency environment.}
The 20 anti-collapse runs report system metadata from the run records.  The profiling environment was AutoDL vGPU-32GB $\times1$; CUDA reported NVIDIA GeForce RTX 4080 with $31.473$ GiB visible memory.  Driver version was 580.105.08; the host utility reported CUDA 13.0 while the PyTorch binary was built against CUDA 12.8.  Python was 3.12.3, PyTorch 2.8.0+cu128, torchvision 0.23.0+cu128, and cuDNN 9.10.2.  The host had Intel Xeon Platinum 8260 CPUs and 503 GiB system RAM.  Peak memory in Table~\ref{tab:new-exp3} is $\texttt{torch.cuda.max\_memory\_allocated}$, not total driver-level GPU memory usage.

\paragraph{Interpretive discipline.}
A positive $G_c$ or $G_f$ is sufficient evidence for the corresponding Teacher-relative information route.  A non-positive value is not evidence of literal mutual-information zero.  Active units are reported as a generic latent-use diagnostic and are not equated with the certificate.  High common-null mean energy is a geometric routing statistic only; without an explicit decoder-use intervention it is not assigned a semantic interpretation.

\subsubsection{Complete five-seed statistics for the CIFAR-100 studies}
\label{app:new-cifar100-complete-stats}
Tables~\ref{tab:app-exp1-full}--\ref{tab:app-exp2-perseed} report the full five-seed summaries used to construct the main-text tables.  Wide tables are split by inferential role so certificate/interface quantities remain readable without extreme scaling.  These values are not additional hyperparameter selections; they are measurements from the same frozen five-seed runs.

\begin{table}[htbp]
\centering
\footnotesize
\setlength{\tabcolsep}{3pt}
\caption{Five-seed selective routing/composition: certificate and interface diagnostics.  ``Null frac.'' is the posterior-mean energy fraction in the common witness-null complement; it is geometric and is not a decoder-use claim.}
\label{tab:app-exp1-full}
\begin{tabular}{lrrrrr}
\toprule
Method & $G_c$ & $G_f$ & Acc. $c$ & Acc. $f$ & Null frac.\\
\midrule
VAE & $-1.660\pm0.964$ & $-1.728\pm0.324$ & $0.0545\pm0.0053$ & $0.0085\pm0.0018$ & $0.420\pm0.078$\\
RST-coarse & $+0.619\pm0.017$ & $-2.519\pm0.416$ & $0.3454\pm0.0091$ & $0.0088\pm0.0018$ & $0.458\pm0.081$\\
RST-fine & $-2.302\pm1.167$ & $+0.579\pm0.013$ & $0.0536\pm0.0026$ & $0.2099\pm0.0062$ & $0.729\pm0.135$\\
RST-joint & $+0.686\pm0.019$ & $+0.655\pm0.019$ & $0.3694\pm0.0083$ & $0.2254\pm0.0093$ & $0.948\pm0.004$\\
RST-joint-null & $+0.693\pm0.017$ & $+0.632\pm0.014$ & $0.3770\pm0.0080$ & $0.2359\pm0.0114$ & $0.954\pm0.005$\\
\bottomrule
\end{tabular}
\end{table}

\begin{table}[htbp]
\centering
\footnotesize
\setlength{\tabcolsep}{3pt}
\caption{Five-seed selective routing/composition: reconstruction and generic latent-use diagnostics.  Missing MSE entries were not archived for the routed variants.}
\label{tab:app-exp1-recon}
\begin{tabular}{lrrrrr}
\toprule
Method & Rec. & MSE & KL & PSNR & AU\\
\midrule
VAE & $1749.526\pm1.968$ & $0.018024\pm0.000250$ & $20.614\pm0.318$ & $17.442\pm0.060$ & $36.2\pm2.3$\\
RST-coarse & $1748.231\pm2.853$ & -- & $21.249\pm0.846$ & $17.488\pm0.101$ & $49.2\pm4.1$\\
RST-fine & $1749.124\pm1.439$ & -- & $20.819\pm0.375$ & $17.457\pm0.043$ & $39.4\pm1.5$\\
RST-joint & $1746.727\pm2.322$ & -- & $22.029\pm0.907$ & $17.536\pm0.077$ & $57.8\pm4.8$\\
RST-joint-null & $1748.554\pm2.366$ & -- & $21.483\pm0.514$ & $17.484\pm0.069$ & $52.4\pm6.6$\\
\bottomrule
\end{tabular}
\end{table}

\begin{table}[htbp]
\centering
\footnotesize
\setlength{\tabcolsep}{3pt}
\caption{Matched anti-collapse comparison: prescribed-view certificates, fixed-interface accuracy, and activity.}
\label{tab:app-exp2-full-a}
\begin{tabular}{lrrrrr}
\toprule
Method & $G_c$ & $G_f$ & Acc. $c$ & Acc. $f$ & AU\\
\midrule
VAE & $-3.344\pm1.131$ & $-4.162\pm0.396$ & $0.0519\pm0.0022$ & $0.0084\pm0.0010$ & $167.8\pm2.8$\\
InfoVAE & $-3.435\pm0.913$ & $-4.650\pm0.340$ & $0.0479\pm0.0025$ & $0.0105\pm0.0013$ & $172.2\pm10.3$\\
$\delta$-control & $-4.718\pm0.368$ & $-5.944\pm0.193$ & $0.0506\pm0.0024$ & $0.0094\pm0.0013$ & $192.0\pm0.0$\\
RST-joint & $+1.057\pm0.027$ & $+1.477\pm0.026$ & $0.4385\pm0.0130$ & $0.3358\pm0.0056$ & $191.8\pm0.4$\\
\bottomrule
\end{tabular}
\end{table}

\begin{table}[htbp]
\centering
\small
\setlength{\tabcolsep}{6pt}
\caption{Matched anti-collapse comparison: prior diagnostics.  These columns are diagnostic and are not themselves certificates.}
\label{tab:app-exp2-diag}
\begin{tabular}{lrrr}
\toprule
Method & KL & min-KL & MMD\\
\midrule
VAE & $55.155\pm0.866$ & $27.552\pm0.628$ & $0.0590\pm0.0040$\\
InfoVAE & $54.027\pm0.375$ & $28.061\pm0.965$ & $0.0733\pm0.0075$\\
$\delta$-control & $67.734\pm2.326$ & $31.190\pm1.287$ & $0.1276\pm0.0025$\\
RST-joint & $56.885\pm0.817$ & $28.250\pm1.190$ & $0.0621\pm0.0033$\\
\bottomrule
\end{tabular}
\end{table}

\begin{table}[htbp]
\centering
\small
\setlength{\tabcolsep}{6pt}
\caption{Matched anti-collapse comparison: reconstruction and null-routing diagnostics.}
\label{tab:app-exp2-recon-diag}
\begin{tabular}{lrrr}
\toprule
Method & PSNR & MSE & Null frac.\\
\midrule
VAE & $20.022\pm0.042$ & $0.009951\pm0.000096$ & $0.434\pm0.039$\\
InfoVAE & $19.891\pm0.072$ & $0.010256\pm0.000171$ & $0.385\pm0.056$\\
$\delta$-control & $19.597\pm0.105$ & $0.010975\pm0.000267$ & $0.385\pm0.007$\\
RST-joint & $20.108\pm0.102$ & $0.009756\pm0.000231$ & $0.926\pm0.001$\\
\bottomrule
\end{tabular}
\end{table}

\begin{table}[htbp]
\centering\scriptsize
\caption{Complete five-seed reconstruction and efficiency summary for the matched anti-collapse comparison.  Peak values are PyTorch allocator measurements.}
\label{tab:app-exp2-full-b}
\resizebox{\linewidth}{!}{%
\begin{tabular}{lrrrrrr}
\toprule
Method & Rec. & Runtime (s) & Throughput (img/s) & Peak alloc. (GiB) & Peak reserved (GiB) & Params\\
\midrule
VAE & $1686.795\pm0.811$ & $98.43\pm2.32$ & $29226\pm707$ & $0.2086\pm0.0002$ & $0.2813\pm0.0000$ & 3,682,435\\
InfoVAE & $1689.348\pm1.360$ & $137.03\pm3.08$ & $20715\pm464$ & $0.2095\pm0.0002$ & $0.2656\pm0.0000$ & 3,682,435\\
$\delta$-control & $1696.112\pm2.148$ & $97.91\pm2.26$ & $29410\pm699$ & $0.1981\pm0.0010$ & $0.2645\pm0.0070$ & 3,682,435\\
RST-joint & $1685.473\pm1.965$ & $101.25\pm3.73$ & $28371\pm1086$ & $0.2090\pm0.0002$ & $0.2656\pm0.0000$ & 3,682,435\\
\bottomrule
\end{tabular}%
}
\end{table}

\begin{table}[htbp]
\centering\scriptsize
\caption{Per-seed certificate audit for the matched anti-collapse comparison.  The $(-,-)$ versus $(+,+)$ distinction is stable in every seed.}
\label{tab:app-exp2-perseed}
\begin{tabular}{llrrrrr}
\toprule
Method & seed & $G_c$ & $G_f$ & AU & KL & PSNR\\
\midrule
VAE & 0 & -4.5005 & -4.1361 & 168 & 55.065 & 20.0184\\
VAE & 1 & -2.6751 & -4.0955 & 165 & 54.750 & 20.0238\\
VAE & 2 & -3.7656 & -3.6560 & 165 & 54.175 & 20.0427\\
VAE & 3 & -4.0600 & -4.7680 & 171 & 55.266 & 20.0681\\
VAE & 4 & -1.7178 & -4.1520 & 170 & 56.518 & 19.9555\\
InfoVAE & 0 & -3.8727 & -4.3208 & 174 & 53.749 & 19.8620\\
InfoVAE & 1 & -1.9744 & -4.4195 & 162 & 54.168 & 19.8630\\
InfoVAE & 2 & -3.1825 & -4.6296 & 173 & 53.770 & 19.9777\\
InfoVAE & 3 & -4.3325 & -5.1959 & 188 & 54.626 & 19.9508\\
InfoVAE & 4 & -3.8128 & -4.6851 & 164 & 53.824 & 19.8000\\
$\delta$ & 0 & -4.6535 & -6.0753 & 192 & 67.295 & 19.7004\\
$\delta$ & 1 & -4.6218 & -6.0239 & 192 & 68.301 & 19.6351\\
$\delta$ & 2 & -4.9686 & -5.6838 & 192 & 66.323 & 19.5386\\
$\delta$ & 3 & -4.1934 & -5.8007 & 192 & 65.349 & 19.4435\\
$\delta$ & 4 & -5.1543 & -6.1369 & 192 & 71.401 & 19.6680\\
RST-joint & 0 & +1.0341 & +1.4668 & 192 & 57.526 & 20.0921\\
RST-joint & 1 & +1.0238 & +1.4417 & 192 & 55.547 & 19.9384\\
RST-joint & 2 & +1.0652 & +1.4813 & 192 & 57.260 & 20.1821\\
RST-joint & 3 & +1.0858 & +1.5122 & 192 & 56.679 & 20.1809\\
RST-joint & 4 & +1.0758 & +1.4825 & 191 & 57.415 & 20.1486\\
\bottomrule
\end{tabular}
\end{table}

\subsubsection{Breadth evidence moved from the main text}
\label{app:main-breadth}
\begin{table}[htbp]
\centering
\small
\setlength{\tabcolsep}{3.0pt}
\caption{Evidence beyond five-dataset boundary crossing.  AU denotes active units.}
\label{tab:rst-breadth-app}
\begin{tabular}{>{\raggedright\arraybackslash}p{0.18\linewidth}>{\raggedright\arraybackslash}p{0.28\linewidth}>{\raggedright\arraybackslash}p{0.45\linewidth}}
\toprule
Test & Setting & Result \\
\midrule
Strong-KL robustness & $\beta_{\rm KL}=4$ on CIFAR-10/100 & VAE $-0.637/-0.547$ versus RST-$\alpha$-logit-prefit $+1.317/+0.853$ \\
Witness-null routing & routed CIFAR-10/100, 5 seeds & $G=0.276/0.102$, logit residual $0.001$, reconstruction close to VAE; decoder use is not inferred \\
Standard anti-collapse & warm-up, cyclical annealing, free bits & all remain certificate-negative on CIFAR-10/100 ($G\approx-1.63$ to $-2.26$) \\
Teacher correspondence & degraded, shuffled, random & margin decreases with degradation; random Dirichlet Teachers are near zero ($-0.004/-0.005$) \\
Route re-instantiation & moving-shape blocks; log-variance statistic & VAE $G=(-3.707,-2.422)$ versus Block-RST $(0.909,0.468)$; variance check $G=0,0.240,2.728,2.394$ \\
Optimization escape & CIFAR-10/100/Tiny-ImageNet & final $G\approx2.0$, $1.6$--$1.7$, $1.31$--$1.43$ \\
Preservation & CIFAR-100 strong-KL refinement & unconstrained $G_\tau=-0.671$, AU$=0$; certified refinement keeps $G_\tau\ge2.735$, AU$=64$ \\
\bottomrule
\end{tabular}
\end{table}

\subsection{Experimental protocol and reproducibility details}
\label{app:protocol}
\label{app:reproducibility-details}
This section restores the implementation details used by the reported experiments.  The theorem-relevant distinction is architectural rather than cosmetic: reconstruction uses a sampled VAE latent, whereas certification reads only the designated code/statistic through a fixed code-only witness.

\subsubsection{Architecture and protected routes}
All image experiments use the standard-Gaussian VAE interface
\[
q_\phi(z\mid x)=\mathcal N\!\left(u_\phi(x),\operatorname{diag}\sigma_\phi^2(x)\right),
\qquad p(z)=\mathcal N(0,I),
\]
with a convolutional CIFAR-scale encoder/decoder and a $128$-dimensional latent.  The reconstruction branch uses samples $z\sim q_\phi(z\mid x)$; the certificate branch is deterministic and evaluates the posterior mean $u_\phi(x)$ unless a route-local re-instantiation is stated explicitly.  Tiny-ImageNet-200 is resized to $32\times32$ and uses the same CIFAR-scale VAE interface and witness dimension.  The theorem-relevant components are summarized below.

\begin{center}
\small
\setlength{\tabcolsep}{4.5pt}
\renewcommand{\arraystretch}{1.08}
\begin{tabular}{p{0.19\linewidth}p{0.29\linewidth}p{0.42\linewidth}}
\toprule
Component & Map & Role in the reported runs \\
\midrule
Encoder & $x\mapsto (u_\phi(x),\log\sigma_\phi^2(x))$ & Produces the sampled VAE posterior and the deterministic protected mean-code route. \\
Decoder & $z\mapsto \hat x$ & Standard reconstruction path; it is not an input to the certificate. \\
Teacher & $x\mapsto T_x$ & Smoothed PCA--GMM responsibilities in the image experiments; selected before certified training and then frozen. \\
Simplex witness & $u\mapsto \operatorname{softmax}(\beta Vu+\log\bar T)$ & Fixed code-only Student; $\beta=5.0$ in the main image runs. \\
witness-null route & $u=\alpha u_T+Nh_\phi$, $VN=0$ & Keeps the witness-visible component controlled while allowing witness-invisible residual information. \\
\bottomrule
\end{tabular}
\end{center}
The exact layer definitions and run configurations are part of the code/configuration package; the certificate itself depends only on the protected-route separation above, not on a particular convolutional backbone.

\subsubsection{Teacher construction and search}
The image Teachers are constructed from PCA features followed by a diagonal-covariance GMM.  Responsibilities are smoothed to full support, candidate Teachers are evaluated before VAE training, and the selected Teacher is then frozen for every certified run using that Teacher.  The search is designed to trade off an informative but not excessively sharp view.  In the notation of the earlier implementation, the candidate score uses the ingredients
\[
I_T(\omega),\qquad
\frac{K-1}{2K\beta^2}\mathcal E_T(\omega),\qquad
\mathrm{Bal}(\bar T_\omega),\qquad
\mathrm{Fit}(\omega),\qquad
\mathrm{Instab}(\omega),
\]
corresponding respectively to Teacher information, analytic mean-prior energy, component balance, GMM fit, and instability across seeds or mild perturbations.  A representative scalarization is
\[
\mathrm{Score}(\omega)=
\lambda_I I_T(\omega)
-\lambda_E\frac{K-1}{2K\beta^2}\mathcal E_T(\omega)
+\lambda_B\mathrm{Bal}(\bar T_\omega)
+\lambda_F\mathrm{Fit}(\omega)
-\lambda_S\mathrm{Instab}(\omega),
\]
with the concrete search weights treated as implementation choices rather than theorem assumptions.  PCA--GMM is only a reproducible way to instantiate a fixed nontrivial Teacher; the certificate theorem requires the final Teacher/route pair, not this search procedure.

The frozen Teachers used in the five-dataset benchmark have $K=50$ and the following theorem-facing statistics:
\begin{center}
\small
\setlength{\tabcolsep}{5pt}
\begin{tabular}{lccc}
\toprule
Dataset & $K$ & $I_T$ & $\mathcal E_T$ \\
\midrule
CIFAR-10 & 50 & 2.7275 & 53.5416 \\
CIFAR-100 & 50 & 2.7349 & 57.9823 \\
Fashion-MNIST & 50 & 1.8727 & 24.8682 \\
SVHN & 50 & 2.7538 & 53.0375 \\
Tiny-ImageNet-200 & 50 & 2.6080 & 62.1318 \\
\bottomrule
\end{tabular}
\end{center}
For Tiny-ImageNet-200, the frozen final Teacher uses 64 PCA dimensions, a diagonal GMM, temperature $0.7$, probability smoothing $\epsilon=0.1$, and GMM seed 1. The remaining per-dataset search-grid metadata are preserved in the frozen Teacher artifacts/configuration package; these search choices are implementation details rather than theorem assumptions.

\subsubsection{Training and reporting protocol}
Unless a control explicitly changes one of these quantities, the main image runs use 80 training epochs, $d_z=128$, witness gain $\beta=5.0$, $\beta_{\mathrm{KL}}=1.0$, and $\lambda_{\mathrm{TS}}=1.0$.  Analytic-prefit variants use five prefit epochs.  The KL-pressure stress test sets $\beta_{\mathrm{KL}}=4.0$.  Main five-dataset summaries use five random seeds; the KL-pressure and counterfactual-control blocks use three seeds.  Multi-seed tables report the final training epoch and sample standard deviation. For the reported image benchmark tables, $\bar T$, $I_T$, and $L_{\mathrm{TS}}$ are computed self-consistently on the same empirical training split on which the frozen Teacher responsibilities are stored. Accordingly, the reported $G_T$ values are exact certificates for that empirical training distribution, not claims of held-out population generalization; any separate held-out diagnostic is labeled as such. Reconstruction, KL, active units, effective rank, MMD/prior checks where available, and visible-path residuals are diagnostics; $G_T=I_T-L_{\mathrm{TS}}$ remains the certificate quantity.

\subsubsection{Gradient-dominance diagnostics}
To distinguish the function-space escape field from its realized parameter-space update, define
\[
g_{\mathrm{TS}}=\nabla_\phi L_{\mathrm{TS}},\qquad
 g_{\mathrm{VAE}}=\nabla_\phi(\mathcal L_{\mathrm{rec}}+\beta_{\mathrm{KL}}\mathcal L_{\mathrm{KL}}),
\]
and monitor
\[
\Gamma_{\mathrm{grad}}=
\frac{\lambda_{\mathrm{TS}}\|g_{\mathrm{TS}}\|}{\|g_{\mathrm{VAE}}\|+10^{-12}},
\qquad
\cos(g_{\mathrm{TS}},g_{\mathrm{VAE}}).
\]
The ratio measures relative scale, while the cosine diagnoses agreement or opposition between the two parameter-space gradients.  Early CIFAR-10 runs give mean $\Gamma_{\mathrm{grad}}$ values of about $0.30$--$0.44$ across seeds, with many collapse-neighborhood steps above one.  The cosine is used as a conflict diagnostic: a strongly negative value warns that the VAE objective opposes the witness update, while a non-opposing value indicates that the two signals are not cancelling by direction.  These measurements support the local optimization interpretation only; they are neither part of the certificate nor an optimizer-independent convergence theorem.

\subsection{Complete diagnostic tables}
\label{app:complete-diagnostic-tables}

The main text uses slim summaries so that the certificate message is readable.  This appendix restores the diagnostic columns omitted from the main CIFAR, five-dataset, KL-pressure, and counterfactual results.  These diagnostics are not themselves the certificate.  They are included to make the experimental record complete and auditable.

\begin{table}[htbp]
\centering
\scriptsize
\setlength{\tabcolsep}{3.0pt}
\renewcommand{\arraystretch}{1.08}
\caption{Complete diagnostics for the main CIFAR-10 and CIFAR-100 runs.}
\label{tab:app-full-main-cifar}
\resizebox{\linewidth}{!}{%
\begin{tabular}{llrccccccc}
\toprule
Dataset & Method & Seeds & $G_T\uparrow$ & $L_{\mathrm{TS}}\downarrow$ & Rec$\downarrow$ & KL & Active units & Eff.\ rank & Path residual$\downarrow$ \\
\midrule
CIFAR-10 & VAE & 5 & $-1.987 \pm 0.360$ & $4.714 \pm 0.360$ & $50.547 \pm 0.675$ & $22.922 \pm 0.911$ & $50.8 \pm 6.6$ & $33.19 \pm 0.31$ & $205.974 \pm 39.474$ \\
CIFAR-10 & RST & 5 & $1.885 \pm 0.033$ & $0.842 \pm 0.033$ & $50.710 \pm 0.450$ & $23.048 \pm 0.406$ & $94.4 \pm 3.6$ & $38.06 \pm 0.14$ & $45.505 \pm 9.327$ \\
CIFAR-10 & RST-prefit & 5 & $1.914 \pm 0.029$ & $0.814 \pm 0.029$ & $50.794 \pm 0.313$ & $22.818 \pm 0.299$ & $90.6 \pm 3.6$ & $38.58 \pm 0.15$ & $68.555 \pm 9.314$ \\
CIFAR-10 & RST-alpha-prefit & 5 & $1.841 \pm 0.062$ & $0.886 \pm 0.062$ & $50.416 \pm 0.495$ & $23.342 \pm 0.480$ & $95.0 \pm 4.2$ & $38.15 \pm 0.25$ & $65.776 \pm 29.899$ \\
CIFAR-10 & RST-alpha-logit-prefit & 5 & $1.836 \pm 0.041$ & $0.891 \pm 0.041$ & $50.511 \pm 0.514$ & $23.324 \pm 0.506$ & $91.4 \pm 2.3$ & $38.33 \pm 0.21$ & $75.930 \pm 19.893$ \\
CIFAR-10 & RST-nullspace-route & 5 & $0.276 \pm 0.000$ & $2.452 \pm 0.000$ & $50.191 \pm 0.294$ & $23.043 \pm 0.288$ & $44.6 \pm 3.3$ & $33.06 \pm 0.18$ & $0.001 \pm 0.000$ \\
CIFAR-100 & VAE & 5 & $-2.074 \pm 0.242$ & $4.809 \pm 0.242$ & $50.080 \pm 0.507$ & $23.202 \pm 0.497$ & $50.6 \pm 3.8$ & $32.16 \pm 0.25$ & $215.797 \pm 48.189$ \\
CIFAR-100 & RST & 5 & $1.556 \pm 0.021$ & $1.179 \pm 0.021$ & $49.892 \pm 0.248$ & $23.784 \pm 0.255$ & $92.8 \pm 4.0$ & $38.16 \pm 0.11$ & $42.849 \pm 8.226$ \\
CIFAR-100 & RST-prefit & 5 & $1.692 \pm 0.038$ & $1.043 \pm 0.038$ & $50.242 \pm 0.736$ & $23.448 \pm 0.396$ & $89.8 \pm 3.3$ & $38.42 \pm 0.28$ & $36.692 \pm 10.508$ \\
CIFAR-100 & RST-alpha-prefit & 5 & $1.488 \pm 0.056$ & $1.247 \pm 0.056$ & $50.013 \pm 0.319$ & $23.564 \pm 0.268$ & $100.4 \pm 4.3$ & $37.74 \pm 0.23$ & $37.024 \pm 9.471$ \\
CIFAR-100 & RST-alpha-logit-prefit & 5 & $1.542 \pm 0.039$ & $1.193 \pm 0.039$ & $50.056 \pm 0.412$ & $23.547 \pm 0.520$ & $94.2 \pm 2.6$ & $38.02 \pm 0.28$ & $36.171 \pm 6.450$ \\
CIFAR-100 & RST-nullspace-route & 5 & $0.102 \pm 0.000$ & $2.633 \pm 0.000$ & $49.768 \pm 0.450$ & $23.361 \pm 0.368$ & $49.0 \pm 2.3$ & $31.94 \pm 0.24$ & $0.001 \pm 0.000$ \\
\bottomrule
\end{tabular}%
}
\end{table}

\begin{table}[htbp]
\centering
\scriptsize
\setlength{\tabcolsep}{3.0pt}
\renewcommand{\arraystretch}{1.08}
\caption{Complete diagnostics for the five-dataset extension corresponding to Table~\ref{main:tab-five}.}
\label{tab:app-full-five-datasets}
\resizebox{\linewidth}{!}{%
\begin{tabular}{llrccccccc}
\toprule
Dataset & Method & Seeds & $G_T\uparrow$ & $L_{\mathrm{TS}}\downarrow$ & Rec$\downarrow$ & KL & Active units & Eff.\ rank & Path residual$\downarrow$ \\
\midrule
CIFAR-10 & VAE & 5 & $-1.987 \pm 0.360$ & $4.714 \pm 0.360$ & $50.547 \pm 0.675$ & $22.922 \pm 0.911$ & $50.800 \pm 6.573$ & $33.186 \pm 0.307$ & $205.974 \pm 39.474$ \\
CIFAR-10 & RST-alpha-prefit & 5 & $1.841 \pm 0.062$ & $0.886 \pm 0.062$ & $50.416 \pm 0.495$ & $23.342 \pm 0.480$ & $95.000 \pm 4.183$ & $38.148 \pm 0.254$ & $65.776 \pm 29.899$ \\
CIFAR-10 & RST-alpha-logit-prefit & 5 & $1.836 \pm 0.041$ & $0.891 \pm 0.041$ & $50.511 \pm 0.514$ & $23.324 \pm 0.506$ & $91.400 \pm 2.302$ & $38.328 \pm 0.211$ & $75.930 \pm 19.893$ \\
CIFAR-10 & RST-nullspace-route & 5 & $0.276 \pm 0.000$ & $2.452 \pm 0.000$ & $50.191 \pm 0.294$ & $23.043 \pm 0.288$ & $44.600 \pm 3.286$ & $33.060 \pm 0.175$ & $0.001 \pm 0.000$ \\
CIFAR-100 & VAE & 5 & $-2.074 \pm 0.242$ & $4.809 \pm 0.242$ & $50.080 \pm 0.507$ & $23.202 \pm 0.497$ & $50.600 \pm 3.847$ & $32.162 \pm 0.253$ & $215.797 \pm 48.189$ \\
CIFAR-100 & RST-alpha-prefit & 5 & $1.488 \pm 0.056$ & $1.247 \pm 0.056$ & $50.013 \pm 0.319$ & $23.564 \pm 0.268$ & $100.400 \pm 4.336$ & $37.742 \pm 0.229$ & $37.024 \pm 9.471$ \\
CIFAR-100 & RST-alpha-logit-prefit & 5 & $1.542 \pm 0.039$ & $1.193 \pm 0.039$ & $50.056 \pm 0.412$ & $23.547 \pm 0.520$ & $94.200 \pm 2.588$ & $38.021 \pm 0.277$ & $36.171 \pm 6.450$ \\
CIFAR-100 & RST-nullspace-route & 5 & $0.102 \pm 0.000$ & $2.633 \pm 0.000$ & $49.768 \pm 0.450$ & $23.361 \pm 0.368$ & $49.000 \pm 2.345$ & $31.938 \pm 0.243$ & $0.001 \pm 0.000$ \\
Fashion-MNIST & VAE & 5 & $-0.277 \pm 0.328$ & $2.150 \pm 0.328$ & $220.245 \pm 0.190$ & $15.546 \pm 0.131$ & $15.000 \pm 2.000$ & $12.577 \pm 0.301$ & $6.018 \pm 7.715$ \\
Fashion-MNIST & RST-alpha-prefit & 5 & $1.503 \pm 0.060$ & $0.370 \pm 0.060$ & $220.833 \pm 0.391$ & $15.879 \pm 0.215$ & $19.800 \pm 0.447$ & $13.958 \pm 0.251$ & $100.903 \pm 20.845$ \\
Fashion-MNIST & RST-alpha-logit-prefit & 5 & $1.626 \pm 0.054$ & $0.247 \pm 0.054$ & $220.474 \pm 0.167$ & $15.830 \pm 0.264$ & $22.200 \pm 2.168$ & $14.411 \pm 0.199$ & $46.225 \pm 24.271$ \\
Fashion-MNIST & RST-nullspace-route & 5 & $0.103 \pm 0.000$ & $1.770 \pm 0.000$ & $220.466 \pm 0.267$ & $15.440 \pm 0.232$ & $13.600 \pm 0.894$ & $12.589 \pm 0.037$ & $0.000 \pm 0.000$ \\
SVHN & VAE & 5 & $-1.088 \pm 0.133$ & $3.842 \pm 0.133$ & $22.501 \pm 0.388$ & $11.598 \pm 0.264$ & $23.000 \pm 2.449$ & $18.198 \pm 0.173$ & $109.389 \pm 19.873$ \\
SVHN & RST-alpha-prefit & 5 & $1.943 \pm 0.026$ & $0.811 \pm 0.026$ & $22.326 \pm 0.360$ & $12.219 \pm 0.313$ & $68.600 \pm 2.702$ & $24.928 \pm 0.120$ & $67.584 \pm 6.953$ \\
SVHN & RST-alpha-logit-prefit & 5 & $1.967 \pm 0.029$ & $0.786 \pm 0.029$ & $22.292 \pm 0.225$ & $12.347 \pm 0.120$ & $68.200 \pm 1.483$ & $25.262 \pm 0.321$ & $69.967 \pm 7.972$ \\
SVHN & RST-nullspace-route & 5 & $0.106 \pm 0.000$ & $2.648 \pm 0.000$ & $22.189 \pm 0.300$ & $11.904 \pm 0.302$ & $23.400 \pm 1.517$ & $17.984 \pm 0.119$ & $0.002 \pm 0.000$ \\
Tiny-ImageNet-200 & VAE & 5 & $-2.279 \pm 0.360$ & $4.887 \pm 0.360$ & $50.341 \pm 0.445$ & $23.911 \pm 0.505$ & $50.600 \pm 2.408$ & $35.213 \pm 0.264$ & $245.728 \pm 45.396$ \\
Tiny-ImageNet-200 & RST-alpha-prefit & 5 & $1.321 \pm 0.036$ & $1.287 \pm 0.036$ & $50.318 \pm 0.507$ & $24.274 \pm 0.454$ & $100.000 \pm 3.000$ & $41.309 \pm 0.530$ & $33.359 \pm 1.538$ \\
Tiny-ImageNet-200 & RST-alpha-logit-prefit & 5 & $1.317 \pm 0.101$ & $1.291 \pm 0.101$ & $50.406 \pm 0.346$ & $24.159 \pm 0.375$ & $98.400 \pm 1.517$ & $41.736 \pm 0.280$ & $42.920 \pm 14.856$ \\
Tiny-ImageNet-200 & RST-nullspace-route & 5 & $0.102 \pm 0.000$ & $2.506 \pm 0.000$ & $50.113 \pm 0.498$ & $24.059 \pm 0.508$ & $46.000 \pm 1.871$ & $35.215 \pm 0.086$ & $0.001 \pm 0.000$ \\
\bottomrule
\end{tabular}%
}
\end{table}

\begin{table}[htbp]
\centering
\scriptsize
\setlength{\tabcolsep}{3.2pt}
\renewcommand{\arraystretch}{1.08}
\caption{Complete diagnostics for the KL-pressure stress test with $\beta_{\mathrm{KL}}=4.0$.}
\label{tab:app-full-stress}
\resizebox{\linewidth}{!}{%
\begin{tabular}{llrccccccc}
\toprule
Dataset & Method & Seeds & $G_T\uparrow$ & $L_{\mathrm{TS}}\downarrow$ & Rec$\downarrow$ & KL & Active units & Eff.\ rank & Path residual$\downarrow$ \\
\midrule
CIFAR-10 & VAE & 3 & $-0.637 \pm 0.274$ & $3.364 \pm 0.274$ & $82.668 \pm 0.801$ & $6.980 \pm 0.201$ & $13.333 \pm 1.528$ & $11.092 \pm 0.077$ & $53.699 \pm 22.029$ \\
CIFAR-10 & RST-alpha-prefit & 3 & $1.243 \pm 0.024$ & $1.484 \pm 0.024$ & $82.174 \pm 1.005$ & $7.190 \pm 0.254$ & $53.667 \pm 1.528$ & $14.183 \pm 0.223$ & $38.268 \pm 9.246$ \\
CIFAR-10 & RST-alpha-logit-prefit & 3 & $1.317 \pm 0.004$ & $1.410 \pm 0.004$ & $81.714 \pm 0.954$ & $7.311 \pm 0.212$ & $55.000 \pm 1.000$ & $14.162 \pm 0.099$ & $33.969 \pm 0.879$ \\
CIFAR-10 & RST-nullspace-route & 3 & $0.159 \pm 0.100$ & $2.568 \pm 0.100$ & $82.352 \pm 0.492$ & $7.008 \pm 0.096$ & $11.000 \pm 0.000$ & $11.109 \pm 0.067$ & $0.000 \pm 0.000$ \\
CIFAR-100 & VAE & 3 & $-0.547 \pm 0.051$ & $3.282 \pm 0.051$ & $81.899 \pm 0.319$ & $7.580 \pm 0.105$ & $11.000 \pm 1.000$ & $11.137 \pm 0.027$ & $51.976 \pm 9.911$ \\
CIFAR-100 & RST-alpha-prefit & 3 & $0.770 \pm 0.108$ & $1.965 \pm 0.108$ & $81.867 \pm 1.265$ & $7.686 \pm 0.287$ & $36.333 \pm 1.528$ & $13.326 \pm 0.210$ & $24.504 \pm 19.293$ \\
CIFAR-100 & RST-alpha-logit-prefit & 3 & $0.853 \pm 0.013$ & $1.882 \pm 0.013$ & $82.298 \pm 0.503$ & $7.545 \pm 0.089$ & $37.000 \pm 6.928$ & $13.233 \pm 0.129$ & $13.219 \pm 0.644$ \\
CIFAR-100 & RST-nullspace-route & 3 & $0.101 \pm 0.000$ & $2.634 \pm 0.000$ & $82.246 \pm 0.526$ & $7.516 \pm 0.125$ & $14.000 \pm 2.646$ & $11.000 \pm 0.115$ & $0.000 \pm 0.000$ \\
\bottomrule
\end{tabular}%
}
\end{table}

\begin{table}[htbp]
\centering
\scriptsize
\setlength{\tabcolsep}{3.2pt}
\renewcommand{\arraystretch}{1.08}
\caption{Complete diagnostics for baseline and teacher counterfactual controls.}
\label{tab:app-full-counterfactual-controls}
\resizebox{\linewidth}{!}{%
\begin{tabular}{lll r cccccc}
\toprule
Dataset & Block & Condition & Seeds & $G_T\uparrow$ & $L_{\mathrm{TS}}\downarrow$ & Rec$\downarrow$ & KL & Active units & Eff.\ rank \\
\midrule
CIFAR-10 & baseline & VAE+KL-warmup & 3 & $-1.628 \pm 0.096$ & $4.355 \pm 0.096$ & $50.135 \pm 0.534$ & $23.053 \pm 0.492$ & $42.333 \pm 0.577$ & $34.733 \pm 0.458$ \\
CIFAR-10 & baseline & VAE+cyclical-annealing & 3 & $-1.962 \pm 0.137$ & $4.690 \pm 0.137$ & $49.909 \pm 0.402$ & $23.622 \pm 0.240$ & $50.667 \pm 1.155$ & $37.456 \pm 0.440$ \\
CIFAR-10 & baseline & VAE+free-bits & 3 & $-2.256 \pm 0.260$ & $4.983 \pm 0.260$ & $53.151 \pm 1.001$ & $22.205 \pm 0.752$ & $128.000 \pm 0.000$ & $36.338 \pm 0.890$ \\
CIFAR-10 & teacher & degraded\_rho0.5 & 3 & $0.540 \pm 0.005$ & $0.384 \pm 0.005$ & $50.154 \pm 0.225$ & $23.012 \pm 0.244$ & $50.000 \pm 3.000$ & $35.067 \pm 0.044$ \\
CIFAR-10 & teacher & degraded\_rho0.75 & 3 & $0.147 \pm 0.001$ & $0.177 \pm 0.001$ & $50.158 \pm 0.638$ & $23.024 \pm 0.612$ & $38.667 \pm 2.517$ & $34.016 \pm 0.285$ \\
CIFAR-10 & teacher & random\_dirichlet & 3 & $-0.004 \pm 0.001$ & $0.372 \pm 0.001$ & $50.238 \pm 0.423$ & $23.137 \pm 0.509$ & $45.000 \pm 4.359$ & $33.112 \pm 0.075$ \\
CIFAR-10 & teacher & shuffled & 3 & $0.159 \pm 0.022$ & $2.569 \pm 0.022$ & $50.131 \pm 0.167$ & $23.200 \pm 0.260$ & $42.000 \pm 1.732$ & $34.256 \pm 0.273$ \\
CIFAR-100 & baseline & VAE+KL-warmup & 3 & $-2.227 \pm 0.483$ & $4.961 \pm 0.483$ & $49.255 \pm 0.039$ & $23.800 \pm 0.289$ & $41.667 \pm 2.082$ & $33.868 \pm 0.184$ \\
CIFAR-100 & baseline & VAE+cyclical-annealing & 3 & $-2.218 \pm 0.273$ & $4.953 \pm 0.273$ & $49.235 \pm 0.799$ & $23.985 \pm 0.806$ & $51.667 \pm 0.577$ & $36.900 \pm 0.240$ \\
CIFAR-100 & baseline & VAE+free-bits & 3 & $-2.174 \pm 0.099$ & $4.909 \pm 0.099$ & $53.343 \pm 0.268$ & $22.127 \pm 0.291$ & $128.000 \pm 0.000$ & $34.406 \pm 0.279$ \\
CIFAR-100 & teacher & degraded\_rho0.5 & 3 & $0.409 \pm 0.004$ & $0.513 \pm 0.004$ & $50.230 \pm 0.171$ & $22.987 \pm 0.115$ & $51.000 \pm 5.000$ & $34.190 \pm 0.131$ \\
CIFAR-100 & teacher & degraded\_rho0.75 & 3 & $0.102 \pm 0.004$ & $0.221 \pm 0.004$ & $49.971 \pm 0.502$ & $23.304 \pm 0.353$ & $47.333 \pm 3.215$ & $32.846 \pm 0.206$ \\
CIFAR-100 & teacher & random\_dirichlet & 3 & $-0.005 \pm 0.000$ & $0.373 \pm 0.000$ & $50.568 \pm 0.528$ & $22.813 \pm 0.443$ & $50.000 \pm 4.583$ & $31.547 \pm 0.414$ \\
CIFAR-100 & teacher & shuffled & 3 & $0.138 \pm 0.011$ & $2.597 \pm 0.011$ & $50.007 \pm 0.368$ & $23.395 \pm 0.588$ & $53.000 \pm 4.359$ & $32.840 \pm 0.218$ \\
\bottomrule
\end{tabular}%
}
\end{table}

\FloatBarrier
\section{Mechanistic and numerical audits}
\label{app:mechanistic-audits}
\mainback{sec:main-theory}{Section~\ref*{sec:main-theory} (theory and optimization diagnostics)}
This block collects implementation-facing checks that connect the analytic objects to optimization and diagnostics: gradient flow, the closed-form path audit, code sanity, failed-certificate diagnosis, decoder-side diagnostics, escape, and witness-weight behavior.  These audits support interpretation; they do not enlarge the theorem claims.

\subsection{Gradient-flow schematic}
\label{app:gradient-flow-schematic}
Figure~\ref{fig:gradient-flow} separates forward computation from optimization signals.  The Teacher--Student loss compares the frozen Teacher target with the fixed code-only witness and sends its gradient only through the encoder-side mean-code route.  Reconstruction trains the encoder and decoder, while KL regularizes the posterior parameters.  This figure is explanatory rather than an additional empirical claim; the parameter-space caveat and gradient diagnostics are stated in the main text and later in the appendix.
\begin{figure}[htbp]
\centering
\resizebox{0.94\linewidth}{!}{%
\begin{tikzpicture}[
  x=1cm,y=1cm,
  font=\small,
  >=Latex,
  box/.style={draw, rounded corners=3pt, align=center, inner xsep=5pt, inner ysep=5pt, minimum height=0.82cm, fill=white},
  widebox/.style={box, minimum width=3.45cm},
  smallbox/.style={box, minimum width=1.05cm},
  teacherbox/.style={box, dashed, dash pattern=on 4pt off 3pt, minimum width=2.35cm},
  arr/.style={->, line width=0.55pt},
  gradheavy/.style={->, line width=1.85pt},
  every node/.style={text=black}
]
% nodes -- unchanged from v36 except Recon. loss
\node[teacherbox] (teacher) at (0,3.35) {Fixed teacher\\$T_x$};
\node[widebox, minimum width=3.85cm] (align) at (5.1,3.65) {Alignment\\$\mathcal{L}_{TS}=\mathrm{KL}\!\left(T_x\,\Vert\,S(u_\phi(x))\right)$};
\node[widebox, minimum width=4.05cm] (witness) at (5.1,2.05) {Fixed witness\\$S(u_\phi(x))$\\$=\operatorname{softmax}\!\left(\beta V u_\phi(x)+\log\bar T\right)$};
\node[smallbox] (x) at (0,0) {$x$};
\node[widebox, minimum width=1.75cm] (enc) at (2.45,0) {Encoder\\$q_\phi(z\mid x)$};
\node[widebox, minimum width=2.25cm] (mean) at (5.1,0) {Mean code\\$u_\phi(x)$};
\node[widebox, minimum width=2.25cm] (logv) at (5.1,-1.55) {Log variance\\$\log\sigma_\phi^2(x)$};
\node[widebox, minimum width=1.85cm] (sample) at (8.15,0) {Sample\\$z\sim q_\phi$};
\node[widebox, minimum width=1.85cm] (dec) at (10.75,0) {Decoder\\$p_\theta(x\mid z)$};
\node[smallbox] (xhat) at (12.95,0) {$\hat x$};
\node[widebox, minimum width=2.55cm] (kl) at (5.1,-3.0) {KL to $\mathcal{N}(0,I)$};
\node[widebox, minimum width=2.7cm, minimum height=1.0cm] (recon) at (5.1,-4.52)
{Recon. loss};

% forward computation -- unchanged away from Recon. loss
\draw[arr] (x.east) -- (enc.west);
\draw[arr] (x.north) -- (teacher.south);
\draw[arr] (teacher.east) -- (align.west);
\draw[arr] (witness.north) -- (align.south);
\draw[arr] (mean.north) -- (witness.south);
\draw[arr] (enc.east) -- (mean.west);
\draw[arr] (enc.south east) -- (logv.west);
\draw[arr] (mean.east) -- (sample.west);
\draw[arr] (logv.east) -- (sample.south west);
\draw[arr] (sample.east) -- (dec.west);
\draw[arr] (dec.east) -- (xhat.west);
\draw[arr] (mean.south east) .. controls +(1.35,-0.55) and +(1.20,0.70) .. (kl.north east);
\draw[arr] (logv.south) -- (kl.north);

% x and xhat feed Recon. loss
\coordinate (xlow)    at (0,-4.42);
\coordinate (xhatlow) at (12.95,-4.42);
\draw[arr] (x.south)    -- (xlow)    -- (recon.west);
\draw[arr] (xhat.south) -- (xhatlow) -- (recon.east);

% gradients
\draw[gradheavy] (align.south west) .. controls +(-1.35,-0.8) and +(0.2,1.25) .. (enc.north);

% KL -> Encoder: rerouted to avoid overlap with Recon. loss -> Encoder.
% First control point is relative to kl.west; second is relative to the endpoint.
\coordinate (encSouthRight) at ($(enc.south)+(0.28,0)$);
\draw[gradheavy] (kl.west)
  .. controls +(-0.95,-0.10) and +(0.70,-1.05) .. (encSouthRight);

% Recon. loss -> Encoder / Decoder: unchanged from the user's preferred layout.
\coordinate (encSouthLeft) at ($(enc.south)+(-0.22,0)$);
\draw[gradheavy] (recon.north west)
  .. controls +(-1.10,0.75) and +(0.45,-1.15) .. (encSouthLeft);
\draw[gradheavy] (recon.north east)
  .. controls +(0.55,1.25) and +(0.05,-2.25) .. (dec.south);
\end{tikzpicture}%
}
\caption{Gradient-flow view of RST. Thin arrows show forward computation; thick arrows show optimization signals. The fixed Teacher and witness prevent the certified-stage alignment loss from being reduced by Teacher--Student co-adaptation.  The diagram is restored to the v15 layout.}
\label{fig:gradient-flow}
\end{figure}

\subsection{Analytic margin--energy path audit: full CIFAR-100 values}
\label{app:analytic-margin-energy-audit}
\mainback{sec:main-theory}{Section~\ref*{sec:main-theory} and Figure~\ref*{fig:analytic-margin-energy-audit}}
This appendix records every sampled point used in Figure~\ref{fig:analytic-margin-energy-audit}.  It is a deterministic evaluation of the closed-form simplex construction on the 10,000-image CIFAR-100 test split, with $\epsilon=0.02$, $\beta=5$, and $d_z=192$.  ``Slack'' denotes $G(\alpha)-\alpha I_T$.  The $-1.49\times10^{-8}$ coarse endpoint slack is floating-point roundoff at a theoretical equality.

\begin{table}[htbp]
\centering
\scriptsize
\setlength{\tabcolsep}{5pt}
\caption{Coarse Teacher ($K=20$): $I_T=2.8456666$, $E(1)=0.8565164$.}
\begin{tabular}{rrrrr}
\toprule
$\alpha$ & $G(\alpha)$ & $\alpha I_T$ & slack & $E(\alpha)$\\
\midrule
0.00 & -0.0000000 & 0.0000000 & -0.0000000 & 0.0000000\\
0.05 & 0.3175338 & 0.1422833 & 0.1752505 & 0.0021413\\
0.10 & 0.6273866 & 0.2845667 & 0.3428199 & 0.0085652\\
0.20 & 1.2132628 & 0.5691333 & 0.6441295 & 0.0342607\\
0.40 & 2.1512680 & 1.1382666 & 1.0130013 & 0.1370426\\
0.60 & 2.6512941 & 1.7074000 & 0.9438941 & 0.3083459\\
0.80 & 2.8170229 & 2.2765333 & 0.5404896 & 0.5481705\\
1.00 & 2.8456666 & 2.8456666 & -0.0000000 & 0.8565164\\
\bottomrule
\end{tabular}
\end{table}

\begin{table}[htbp]
\centering
\scriptsize
\setlength{\tabcolsep}{5pt}
\caption{Fine Teacher ($K=100$): $I_T=4.4169271$, $E(1)=1.4153098$.}
\begin{tabular}{rrrrr}
\toprule
$\alpha$ & $G(\alpha)$ & $\alpha I_T$ & slack & $E(\alpha)$\\
\midrule
0.00 & 0.0000000 & 0.0000000 & 0.0000000 & 0.0000000\\
0.05 & 0.4111677 & 0.2208464 & 0.1903214 & 0.0035383\\
0.10 & 0.8195940 & 0.4416927 & 0.3779013 & 0.0141531\\
0.20 & 1.6220519 & 0.8833854 & 0.7386665 & 0.0566124\\
0.40 & 3.0774768 & 1.7667708 & 1.3107059 & 0.2264496\\
0.60 & 4.0313523 & 2.6501562 & 1.3811960 & 0.5095115\\
0.80 & 4.3657539 & 3.5335417 & 0.8322122 & 0.9057983\\
1.00 & 4.4169271 & 4.4169271 & 0.0000000 & 1.4153098\\
\bottomrule
\end{tabular}
\end{table}

The numerical audits are: inverse maximum error $6.966\times10^{-9}$ (coarse) and $7.688\times10^{-9}$ (fine); maximum geometric-path error $9.992\times10^{-16}$ and $1.332\times10^{-15}$; maximum quadratic-energy-law error $1.110\times10^{-16}$ and $0$; minimum lower-bound slack $-1.490\times10^{-8}$ and $7.843\times10^{-9}$.  These numbers audit implementation of exact identities and should not be interpreted as statistical confidence intervals or learned-model generalization results.

\subsection{Teacher-code encoder sanity check}
\label{app:teacher-code-sanity}

The teacher-code interpretation can be checked without a full VAE by training a small encoder to predict the analytic teacher code from vector inputs.  This diagnostic is not used as a main empirical claim; it records whether the map \(x\mapsto\uT(x)\) is learnable only when the teacher is aligned with the input.

In a synthetic vector replay setting, we constructed a five-component risk teacher from an input feature and trained an encoder to predict the corresponding teacher code.  On a held-out validation split, the aligned teacher achieved
\[
I_T=1.429147,\qquad
L_{\mathrm{TS}}=0.153129,\qquad
G_T=1.276019 .
\]
We then replaced the teacher by two negative controls.  A random-label teacher gave
\[
I_T=1.599293,\qquad
L_{\mathrm{TS}}=1.690099,\qquad
G_T=-0.090807 ,
\]
and a shuffled version of the original risk teacher gave
\[
I_T=1.429147,\qquad
L_{\mathrm{TS}}=1.536108,\qquad
G_T=-0.106961 .
\]
Thus the positive held-out margin appears for the input-aligned teacher but not for random or sample-shuffled teachers.  This supports the intended interpretation of teacher-code learning: the encoder can learn a readable teacher-code representation when the teacher view is a genuine function of the input, while arbitrary or misaligned pseudo-teachers fail the held-out certificate.

\subsection{Non-positive margins, route feasibility, and decoder-side diagnostics}
\label{app:nonpositive-teacher-path}
\mainback{fig:rst-main-architecture}{Figure~\ref*{fig:rst-main-architecture} (route and decoder diagnostics)}

The boundary case
\[
L_{\mathrm{TS}}\ge I_T,\qquad \uphi(x)\not\equiv \mathrm{constant}
\]
is nonconstant but not Teacher-certified on the prescribed route.  It may encode other information, or Teacher information in coordinates unreadable by the fixed witness.

\subsubsection{Three-point geometry of the origin, an intermediate code, and the Teacher code}

Let
\[
S(u)=\softmax(\log\bar T+\beta Vu),
\qquad
W=\beta V,
\]
and let \(\uT(x)\) be an analytic teacher deterministic code satisfying \(S(\uT(x))=T_x\). The three relevant points are the constant baseline \(0\), the current deterministic code \(u(x)=\uphi(x)\), and the teacher-perfect deterministic code \(\uT(x)\). With
\[
a_u(x)=\log\bar T+Wu(x),\qquad
a_T(x)=\log\bar T+W\uT(x),
\]
and \(A(a)=\log\sum_k e^{a_k}\), the log-sum-exp Bregman divergence is
\[
D_A(a_u,a_T)=A(a_u)-A(a_T)-\nabla A(a_T)^\top(a_u-a_T).
\]
Since \(\nabla A(a_T)=T_x\), we have the exact identity
\[
D_A(a_u,a_T)=\KL(T_x\|S(u(x))).
\]
Therefore
\[
\E_x D_A(a_u,a_T)=L_{\mathrm{TS}}(u),
\qquad
\E_x D_A(a_0,a_T)=I_T,
\]
where \(a_0=\log\bar T\). Consequently,
\[
L_{\mathrm{TS}}(u)>I_T
\quad\Longleftrightarrow\quad
\E_x D_A(a_u,a_T)>\E_x D_A(a_0,a_T).
\]
Thus the unfavorable regime means that \(u\) is, on average, farther from the teacher deterministic code than the constant baseline in the witness-Bregman geometry.

There is also a useful three-point decomposition:
\[
L_{\mathrm{TS}}(u)-I_T
=
\underbrace{\E_x\KL(\bar T\|S(u(x)))}_{\text{departure cost from }0}
-
\underbrace{\E_x\langle T_x-\bar T,Wu(x)\rangle}_{\text{teacher-alignment gain}}.
\]
This identity explains why \(\uphi\neq\mathrm{constant}\) is not enough. The code may leave the origin, but if its teacher-alignment gain is smaller than the cost of leaving the constant baseline, then \(L_{\mathrm{TS}}\ge I_T\).

If one replaces KL by Jensen--Shannon divergence, then the symmetric distance \(d_{\mathrm{JS}}(P,Q)=\sqrt{\mathrm{JS}(P,Q)}\) gives an ordinary metric geometry among \(T_x\), \(\bar T\), and \(S(u)\). It can be used as a diagnostic: one asks whether \(d_{\mathrm{JS}}(T_x,S(u))<d_{\mathrm{JS}}(T_x,\bar T)\). However, the KL/log-sum-exp Bregman form is more useful for the present theory because it gives the exact alignment-gain decomposition and the escape gradient.

\subsubsection{Route-feasibility diagnostics}
\label{app:route-feasibility}
Two diagnostics separate a failed prescribed certificate from a broader representation failure.  First, freeze the encoder and fit a flexible probe $P_\psi(c\mid \uphi(x))$ with
\[
L_{\mathrm{probe}}=\E_x\KL\!\left(T_x\|P_\psi(\cdot\mid\uphi(x))\right).
\]
If $L_{\mathrm{probe}}<I_T$ while $G_T\le0$, the code contains Teacher-view information that is readable by another route, so the failure is specific to the fixed witness.  Second, freeze the Teacher, witness geometry, and decoder and optimize only the encoder-side witness route,
\[
\min_\phi\;L_{\mathrm{TS}}=\min_\phi\;\E_x\KL\!\left(T_x\|S(\uphi(x))\right).
\]
If this crosses $I_T$, the route is representationally feasible and the original failure is more plausibly due to objective conflict or insufficient alignment.  The frozen-probe experiments in Appendix~\ref{app:functional-interface-full} instantiate the first case directly: both Aux controls have $G_{\rm probe}>0$ while $G_{\rm pre}<0$ on both views, and cross-seed and cross-architecture adaptation tests further localize the mismatch to producer--consumer coordinates.  If neither diagnostic beats $I_T$, one should inspect Teacher sharpness/stability, witness gain, latent or block dimension, and encoder capacity.  Thus $G_T\le0$ is best treated as a failed sufficient certificate plus a route-feasibility diagnostic, not as a collapse verdict.

\subsubsection{Safe Teacher paths and the role of the path scale}

For \(0\le\alpha\le1\), the path
\[
u_\alpha(x)=\alpha \uT(x)
\]
moves from the constant baseline to the teacher-perfect deterministic code. Along this path,
\[
L_{\mathrm{TS}}(0)=I_T,\qquad L_{\mathrm{TS}}(\uT)=0,
\]
and convexity in logits gives
\[
L_{\mathrm{TS}}(\alpha \uT)\le (1-\alpha)I_T.
\]
Thus any sufficiently positive \(\alpha\) gives a positive certificate margin. For a target margin $0<\tau<I_T$, one can choose the fixed safety scale
\[
\alpha_\star(\tau)=\inf\{\alpha\in[0,1]:\; L_{\mathrm{TS}}(\alpha \uT)\le I_T-\tau\},
\]
subject also to a prior-cost budget such as
\[
\frac12\alpha^2\E_x\|\uT(x)\|^2\le\kappa.
\]
This makes \(\alpha\) a fixed analytic design parameter or a prefit scale, not a free parameter that is minimized jointly with the VAE objective.

If the trained code is decomposed as
\[
\uphi(x)=\alpha \uT(x)+r(x),
\]
the relevant off-path residual is not \(\|r(x)\|\) but its witness-logit projection
\[
e(x)=W(\uphi(x)-\alpha \uT(x)).
\]
The quantity
\[
E_e=\E_x\|e(x)\|^2
\]
is the teacher-path deviation that the witness can see. If \(E_e\) is small relative to the margin already gained by \(\alpha \uT\), then \(L_{\mathrm{TS}}<I_T\) is preserved. In particular, a sufficient tube condition is
\[
Q_\alpha\sqrt{E_e}+\frac14E_e<\Delta_\alpha,
\]
where
\[
\Delta_\alpha=I_T-L_{\mathrm{TS}}(\alpha \uT),
\qquad
Q_\alpha=\sqrt{\E_x\|T_x-S(\alpha \uT(x))\|^2}.
\]
This relation gives a non-vacuous training target: keep the witness-visible deviation from the teacher path below the safety tube determined by \(\Delta_\alpha\).

\subsubsection{A structural Teacher-path guarantee with a certificate-invariant residual route}

A direct structural way to keep the witness loss independent of residual code variation is to separate the Teacher-visible route from a witness-null residual route. Let \(N\) be a basis for the null space of \(W=\beta V\), so
\[
WN=0.
\]
Define the deterministic encoder code as
\[
\uphi(x)=\alpha \uT(x)+Nh_\phi(x),
\]
where \(h_\phi(x)\) is an arbitrary learnable encoder function. Then
\[
W\uphi(x)=\alpha W\uT(x),
\]
and hence
\[
S(\uphi(x))=S(\alpha \uT(x)).
\]
Therefore, if \(\alpha\) is chosen so that \(L_{\mathrm{TS}}(\alpha \uT)<I_T\), then the teacher-witness certificate is guaranteed regardless of the behavior of \(h_\phi(x)\). The role of \(h_\phi(x)\) is to provide a witness-invisible route that remains available to the decoder for factors not represented by the Teacher view.  Certificate invariance alone does not establish which factors the decoder actually places there.

This construction is useful when the latent dimension exceeds the witness rank. If the witness uses \(K\) simplex components, the teacher-visible row space has dimension at most \(K-1\). When \(d_z>K-1\), the null space can carry additional code variation without changing the witness output. Choosing \(\uT(x)\) as the minimum-norm teacher deterministic code in the row space of \(W\) makes \(\uT(x)\) orthogonal to \(Nh_\phi(x)\), so the mean KL cost decomposes into a Teacher-visible cost and a null-route cost; neither term is free with respect to the VAE prior.

\subsubsection{S--R route and decoder usage}
\label{app:decoder-route}

The teacher-path guarantee above certifies a \(T\to S\) route: the teacher view enters the latent witness. It does not by itself prove that the decoder uses that route. A powerful decoder might reconstruct primarily through \(Nh_\phi(x)\) and ignore the protected teacher block. To test decoder usage, one can add an \(S\to R\) or \(T\to R\) reconstruction-side diagnostic, or a swap-based intervention.

Let \(\hat x=D_\theta(z)\), with \(z\sim q_\phi(z\mid x)\), and define the reconstruction-side posterior
\[
R_{\hat x}=T_{\omega^\star}(c\mid \hat x),
\]
obtained by applying the frozen Teacher to the reconstruction. The two optional direct diagnostics are
\[
L_{\mathrm{SR}}
=
\E_{x,z}\KL\left(S(\uphi(x))\|R_{D_\theta(z)}\right),
\qquad z\sim q_\phi(z\mid x),
\]
and
\[
L_{\mathrm{TR}}
=
\E_{x,z}\KL\left(T_x\|R_{D_\theta(z)}\right),
\qquad z\sim q_\phi(z\mid x).
\]
Here \(L_{\mathrm{TR}}\) tests whether the reconstruction preserves the Teacher view, whereas \(L_{\mathrm{SR}}\) tests consistency between the Student witness and the reconstruction-side posterior. These quantities are diagnostics and do not enter the direct certificate \(G_T=I_T-L_{\mathrm{TS}}\).

A stronger intervention-based diagnostic is the cross-swap route. For two samples \(x_i,x_j\), decode
\[
\hat x_{i\leftarrow j}
=
D_\theta\!\left(\alpha \uT(x_j)+Nh_\phi(x_i)\right).
\]
If the decoder uses the Teacher-visible route, the reconstruction-side posterior should follow the swapped Teacher block:
\[
R_{\hat x_{i\leftarrow j}}\approx T_{x_j}.
\]
This gives the intervention loss
\[
L_{\mathrm{TR\text{-}swap}}
=
\E_{i,j}\KL\!\left(
T_{x_j}\|
R_{D_\theta(\alpha \uT(x_j)+Nh_\phi(x_i))}
\right).
\]
Thus \(T\to S\) certifies that the Teacher view enters the latent route, while \(S\to R\), \(T\to R\), and swap-based diagnostics test whether Reconstruction preserves and uses that route. This distinction is essential for decoder bypass and decoder ignorance.

\subsubsection{Practical use}
The simplest procedure chooses \(\alpha\) from the analytic Teacher path, optionally prefits the encoder toward \(\alpha\uT(x)\) in witness-logit space, and then trains \(\mathcal L_{\mathrm{VAE}}+\lambda_{\mathrm{TS}}L_{\mathrm{TS}}\).  A stronger routed extension uses \(\uphi(x)=\alpha\uT(x)+Nh_\phi(x)\) and adds \(S\to R\), \(T\to R\), or swap diagnostics only when decoder use must be tested.  Adaptive weights/scales are possible but are not part of the main empirical method.

\subsection{Derivation of the escape field at constant collapse}
\label{app:escape-gradient-derivation}
\mainback{sec:main-theory}{Section~\ref*{sec:main-theory} (departure/escape result)}
For one sample, let
\[
a_k(x)=\beta v_k^\top u(x)+\log\bar T_k,\qquad S(x)=\softmax(a(x)).
\]
Because
\[
\KL(T_x\|S(x))=\mathrm{const}-T_x^\top a(x)+\log\sum_j e^{a_j(x)},
\]
we have the standard softmax-gradient identity
\[
\nabla_{a(x)}\KL(T_x\|S(x))=S(x)-T_x,
\]
and therefore
\[
\nabla_u\ell_x=\beta\sum_{k=1}^K\bigl(S_k(x)-T_{x,k}\bigr)v_k,
\qquad
-\nabla_u\ell_x=\beta\sum_{k=1}^K\bigl(T_{x,k}-S_k(x)\bigr)v_k.
\]
At an arbitrary constant deterministic code $u(x)\equiv u_0$, write $s_0=S(u_0)$.  The negative functional gradient is
\[
F_T^{u_0}(x)=\beta\sum_{k=1}^K\bigl(T_{x,k}-s_{0,k}\bigr)v_k.
\]
If $F_T^{u_0}(x)=0$ almost everywhere, then affine independence of the simplex vertices, together with $\sum_k(T_{x,k}-s_{0,k})=0$, implies $T_x=s_0$ almost everywhere.  Averaging gives $s_0=\bar T$, hence $I_T=\E\KL(T_X\|\bar T)=0$.  Consequently,
\[
I_T>0\quad\Longrightarrow\quad F_T^{u_0}\not\equiv0
\]
for every exact constant code $u_0$: no such input-independent constant mean is a function-space stationary point of the alignment loss.

At the centered origin, $S(0)=\bar T$, so
\[
F_T(x)=\beta\sum_{k=1}^K(T_{x,k}-\bar T_k)v_k.
\]
The field is sample-dependent even though $\E_XF_T(X)=0$.  Thus a common translation is not the escape direction; a perturbation $u_\varepsilon(x)=u_0+\varepsilon F_T^{u_0}(x)$ has first-order derivative
\[
\left.\frac{d}{d\varepsilon}L_{\rm TS}(u_\varepsilon)\right|_{\varepsilon=0}
=-\E\|F_T^{u_0}(X)\|_2^2<0
\]
whenever $I_T>0$.  This is the function-space departure statement used in the main text; parameter-space realization still depends on the encoder Jacobian and is checked empirically rather than asserted as an optimizer-independent theorem.

\subsection{Gradient dominance and practical variants}
\label{app:gradient-practical}

At the prior-centered collapsed origin $u=0$, the standard-Gaussian mean-KL gradient with respect to the deterministic code is zero. If the reconstruction gradient transverse to the collapse manifold has norm at most $B$, and the RST escape field has norm at least $\kappa$, then choosing $\lambda_{\mathrm{TS}}\kappa>B$ makes the RST transverse component dominant. In practice, $B$ and $\kappa$ can be estimated with batch gradient norms, and possible cancellation can be monitored with gradient angles.

Analytic prefit can train the deterministic encoder code toward $\uT(x)$ or toward the scaled target $\alpha_\star(\tau) \uT(x)$ before full VAE training. A margin-constrained variant can also be used in implementation,
\[
\left[L_{\mathrm{TS}}-(I_T-\tau)\right]_+^2.
\]
This implementation device can maintain a chosen margin, while the certificate theorem itself depends only on $G_T=I_T-L_{\mathrm{TS}}$.

\subsection{Witness weights and escape training}
\label{app:witness-weights}
With several protected views, use
\[
\mathcal L=\mathcal L_{\mathrm{VAE}}+\sum_{a\in\mathcal A}\lambda_a
\E_x\KL\!\left(T_x^a\|S_a(H_a(x))\right).
\]
Separate $\lambda_a$ values accommodate different witness scales or selectively protect mean, variance, block, hierarchy, decoder, or temporal routes.  Multiple witnesses can supply additional transverse directions at a collapsed/low-margin state, but only when their gradients are not cancelled by the VAE objective or by each other; this is not an optimizer-independent escape guarantee.

In practice, fixed weights, warm-up/prefit, gradient-norm balancing, and cosine-angle diagnostics are sufficient controls.  A constrained variant can treat $G_a\ge\tau_a$ as a margin constraint and adapt $\lambda_a$ as a Lagrange multiplier, but the reported experiments deliberately use fixed weights and final margins so that the empirical claim remains about the prescribed code-route certificate.

\FloatBarrier
\section{Extensions, limitations, and re-instantiations}
\label{app:extensions-scope}
\mainback{sec:main-discussion}{Section~\ref*{sec:main-discussion} (scope and limitations)}
This final block separates theorem-level re-instantiations from prospective uses.  Each extension inherits only the route-local certificate under its stated assumptions; empirical status and out-of-scope claims are made explicit.

\subsection{Scope and evidence level of the re-instantiation principle}
\label{app:collapse-taxonomy}
Corollary~\ref{cor:main-reinstantiation} is a statement about a \emph{chosen view/representation pair}, not a claim that one witness solves every phenomenon called ``collapse.''  For a representation/statistic $Z_m$ that satisfies the stated Markov condition, the theorem supplies an exact input-independence boundary and a sufficient information margin.  What it does not supply automatically is an appropriate Teacher for every failure mode, a causal decoder-use test, an aggregate-distribution objective, or empirical validation of every possible route.  Table~\ref{tab:reinstantiation-scope} separates the mathematical interface from the evidence actually provided in this paper.

\begin{table}[H]
\centering
\small
\caption{Scope of the re-instantiation principle.  ``Theorem gives'' refers to the route-local certificate under its assumptions; the last column records the level of evidence in this paper.}
\label{tab:reinstantiation-scope}
\begin{tabular}{p{0.25\linewidth}p{0.35\linewidth}p{0.31\linewidth}}
\toprule
\textbf{Route / view} & \textbf{What the theorem gives} & \textbf{Evidence in this paper} \\
\midrule
Deterministic latent mean / code & Exact input-independence boundary, information certificate, and---with the simplex witness---closed-form realization, routing, path/cost, escape, preservation, and composition. & Main end-to-end setting; multi-dataset checks and five-seed CIFAR-100 studies. \\
Log-variance statistic $\log\sigma_\phi^2(x)$ & Its own route-specific boundary and margin for a fixed uncertainty-related Teacher. & CIFAR-10 cross-statistic sanity check only; not a general variance-collapse study. \\
Prescribed latent block & A block-local boundary and certificate when the witness reads only that block. & Five-seed coarse/fine block-routing and composition study. \\
Grouped Teacher view & Data-processing relation $I_A\le I_T$ plus a group-level route certificate; sufficiently strong fine-view alignment can imply a coarse-view certificate. & Coarse/fine CIFAR-100 instance plus formal derivation. \\
Multi-level hierarchy & Repeated/grouped certificates and chain-rule structure for explicitly defined hierarchy levels. & Formal consequence; the CIFAR-100 coarse/fine study validates the first nontrivial two-level instance, while higher-depth hierarchies are not separately studied. \\
Decoder-side statistic & A route-local certificate can be attached to a fixed decoder-side statistic generated from $X$. & Diagnostic interface only; no intervention study establishing causal decoder use. \\
Temporal / world-model state & The same certificate can be instantiated for a designated state statistic if the conditional-independence assumptions are appropriate. & Prospective interface only; no rollout, dynamics, causal, or long-horizon validation. \\
Aggregate posterior / generation coverage & Not covered by Corollary~\ref{cor:main-reinstantiation} as a simple sample-wise route without defining a different aggregate-level object. & Outside the present theorem and experiments. \\
Blur / perceptual generation quality & No direct consequence of the route-local information certificate. & Outside scope; requires likelihood, decoder, perceptual, or task-specific analysis. \\
\bottomrule
\end{tabular}
\end{table}

The certificate is route-local, while the constructive simplex geometry is developed most fully for the deterministic mean route.  Table~\ref{tab:reinstantiation-scope} records the evidence level for every extension; mode collapse, aggregate-posterior mismatch, causal decoder use, and world-model dynamics remain outside the present claims.

\subsection{Posterior-statistic witnesses for variance collapse}
\label{app:variance-witness}
For a diagonal Gaussian encoder,
\[
q_\phi(z\mid x)=\mathcal N\!\left(\uphi(x),\operatorname{diag}(\sigma_\phi^2(x))\right),
\]
the posterior mean and log-variance are two statistics of the same latent random variable.  The main certificate acts on $\uphi(x)$.  A constant variance by itself is therefore not a collapse of the learned deterministic representation; full exact posterior collapse would require the relevant posterior statistics to become input-independent together.

The route-local principle can nevertheless be re-instantiated on the uncertainty statistic $\ell_\phi(x)=\log\sigma_\phi^2(x)$.  Given a fixed full-support uncertainty Teacher $T_x^\sigma$ with marginal $\bar T^\sigma$, define
\[
S_\sigma(\ell)=\softmax(\beta_\sigma V_\sigma\ell+\log\bar T^\sigma),
\]
\[
I_T^\sigma=\E_x\KL(T_x^\sigma\|\bar T^\sigma),\qquad
L_{T\sigma}=\E_x\KL\!\left(T_x^\sigma\|S_\sigma(\ell_\phi(x))\right),\qquad
G_T^\sigma=I_T^\sigma-L_{T\sigma}.
\]
By Corollary~\ref{cor:main-reinstantiation}, $G_T^\sigma>0$ certifies that the observed log-variance route is not input-independent with respect to that uncertainty view; when the witness has a constant code producing $\bar T^\sigma$, $I_T^\sigma$ is again the attainable input-independent boundary.  The uncertainty Teacher should encode an uncertainty-related view (for example ambiguity, perturbation instability, occlusion, noise level, or prediction uncertainty), rather than being mechanically identified with the semantic code Teacher.

\paragraph{Cross-statistic sanity check.}
We retain the small CIFAR-10 re-instantiation used to check this extension.  A collapsed log-variance statistic gives $G_T^\sigma=0$; a weak input-dependent statistic gives $0.240$; the analytic statistic gives $2.728$; and a noisy analytic statistic gives $2.394$.  This is a route-local sanity check of the same exact boundary, not evidence that the main VAE experiments solve general posterior-variance collapse.

\paragraph{Joint posterior statistics and scope.}
One may likewise define a joint route $h_\phi(x)=[\uphi(x);\rho\ell_\phi(x)]$ and attach a route-specific witness.  A positive joint margin only certifies that the joint observed statistic is not input-independent (an OR-type statement); it does not imply that both channels are individually non-collapsed.  Channel-wise claims require separate margins, e.g. $G_T^\mu>0$ and $G_T^\sigma>0$.  The main experiments therefore keep the primary certificate on the deterministic mean code and use the variance construction only as a targeted re-instantiation of Corollary~\ref{cor:main-reinstantiation}.

\subsection{Structured and temporal re-instantiations}
\label{app:localized-witnesses}

The route-local theorem applies whenever a fixed Teacher view is paired with a designated statistic or route generated from $X$.  Latent-block certificates are the direct structured case: write $H(x)=(H_1(x),\ldots,H_B(x))$ and assign a fixed Teacher $T_x^{(b)}$ and witness $S_b$ to block $H_b$.  Then $G_b>0$ certifies prescribed Teacher information on that block, which is more specific than an active-unit count but remains limited to the assigned witness view.

Decoder use is a separate question; Appendix~\ref{app:nonpositive-teacher-path} defines the $S\to R$, $T\to R$, and cross-swap diagnostics.

For temporal/world-model states, one may choose $Z_m$ to be a designated state block at time $t$ and define fixed Teachers for object, motion, occupancy, interaction, or risk views.  The same $I_m,L_m,G_m$ boundary then tests whether the selected state representation is input-independent with respect to that view.  This is an audit statement only: rollout quality, interventions, causal dynamics, and long-horizon consistency require separate experiments.

\paragraph{Progressive Teacher extraction (prospective).}
Suppose $m$ Teacher views have already been frozen and separately certified.  A new residual or coarse-to-fine view may be fitted and then frozen as $T^{(m+1)}(c\mid x)$.  If it is full-support and nontrivial, its own $(I_{m+1},L_{m+1},G_{m+1})$ inherits the same exact constant-boundary proof.  Progressive extraction can therefore be organized as a sequence of separately auditable certificates.  What is not proved is automatic semantic discovery: defining a meaningful residual, avoiding redundant Teachers, and establishing stability remain separate problems.

\subsection{Distributions of the sampled latent, aggregate posterior, and Teacher code}
\label{app:distributions}

The VAE encoder specifies a conditional Gaussian posterior,
\[
q_\phi(z\mid x)=\N(\uphi(x),\operatorname{diag}\sigma_\phi^2(x)).
\]
Thus $z\mid x$ is Gaussian. The aggregate posterior,
\[
q_\phi(z)=\int q_\phi(z\mid x)p_{\rm data}(x)\,dx,
\]
is generally a mixture of conditional Gaussians and need not equal a single standard Gaussian. This is compatible with the standard VAE setup: the prior is $p(z)=\N(0,I)$, and the ELBO penalizes each conditional posterior through $\KL(q_\phi(z\mid x)\|p(z))$.

The analytic target $\uT(x)$ is the pushforward of the data distribution through the centered log-odds map. It is generally not Gaussian. For label-smoothed teachers it may take finitely many values; for soft GMM teachers it varies continuously with $x$. The certificate is evaluated through the deterministic path $\uphi(x)\mapsto S(\uphi(x))$, because this path is stable and directly tests input dependence of the deterministic encoder code. Sampling quality and aggregate prior matching should be reported separately when unconditional generation is discussed.

\subsection{Group and hierarchical teacher views}
\label{app:group-hierarchy}

The main theorem applies to a single $K$-component teacher. The same constant-baseline argument can be applied to coarser or hierarchical views derived from the same teacher.

Let $A\in\{0,1\}^{K_A\times K}$ be a fixed grouping matrix whose rows define a partition of the $K$ components, so each original component belongs to exactly one group. Define
\[
T_x^A=A T_x,\qquad \bar T^A=A\bar T.
\]
The information content of this group view is
\[
I_A=\E_x\KL(T_x^A\|\bar T^A).
\]
By the data-processing inequality for KL divergence,
\[
0\le I_A\le I_T.
\]
If $I_A=0$, the group view is constant and gives no group-level certificate. If $I_A>0$, the same simplex construction with $K_A$ vertices gives a group-level witness $S^A$ and a group-level alignment loss
\[
L_A=\E_x\KL(T_x^A\|S^A(z_A(x))).
\]
Then
\[
L_A<I_A
\]
rules out input-independent constant collapse of the group-level code $z_A(x)$.

There are two ways to use this observation. First, one may define a separate latent subspace $z_A$ and a separate group witness. At the collapsed point $z_A(x)\equiv0$, the group-level escape force is
\[
F_A(x)=\beta_A\sum_{b=1}^{K_A}\bigl(T_{x,b}^A-\bar T_b^A\bigr)v_b^A,
\]
which is not identically zero whenever $I_A>0$. This is the direct group analogue of
Proposition~\ref{prop:main-frontier}.

Second, if the group prediction is obtained by merging the original student probabilities, $S^A=A S$, then KL contraction gives
\[
L_A=\E_x\KL(A T_x\|A S(\uphi(x)))\le
\E_x\KL(T_x\|S(\uphi(x)))=L_{\mathrm{TS}}.
\]
Thus, a sufficiently strong original margin automatically certifies group views:
\[
L_{\mathrm{TS}}<I_A
\quad\Longrightarrow\quad
L_A<I_A.
\]
Along the analytic $\alpha$-path, a sufficient group-crossing threshold is
\[
\alpha_A^\star
=
\inf\{\alpha\in[0,1]: L_{\mathrm{TS}}(\alpha)< I_A\}.
\]
This sufficient threshold need not equal the first $\alpha$ for which $L_A(\alpha)<I_A$. As $\alpha$ increases, progressively more group views may cross their own constant baselines. This gives a quantitative but limited sense in which partial collapse is improved: the protected teacher information is expressed at more coarse or hierarchical levels. It does not imply that unprotected latent coordinates outside the witness subspace are used.

A hierarchical teacher is obtained by representing the component index as a sequence of decisions,
\[
C=(B_1,B_2,\ldots,B_m).
\]
For a binary hierarchy, $m$ can be of order $\lceil\log_2 K\rceil$. The information decomposes by the chain rule,
\[
I(X;C)
=
I(X;B_1)+I(X;B_2\mid B_1)+\cdots+I(X;B_m\mid B_{<m}).
\]
This gives a structured interpretation of non-collapse: early levels certify coarse teacher information, while later conditional levels certify finer distinctions inside previously selected groups. In this view, partial non-collapse is not a coordinate-wise statement; it is a statement about which teacher views, from coarse to fine, are carried by the latent representation.

\subsection{Fixed GMM responsibility witnesses}
\label{app:gmm-witness}

The simplex witness can be interpreted as a spherical-GMM responsibility head over the latent space. Let
\[
m_k=a v_k,
\qquad
\Sigma_k=\sigma^2I,
\qquad
\pi_k=\bar T_k.
\]
Then
\[
\frac{\pi_k\N(z;m_k,\Sigma_k)}{\sum_j\pi_j\N(z;m_j,\Sigma_j)}
=\softmax_k\left(\frac{a}{\sigma^2}v_k^\top z+\log\bar T_k\right).
\]
Thus the simplex witness is the most symmetric fixed GMM-responsibility witness, with $\beta=a/\sigma^2$. The latent prior is still $\N(0,I)$; the GMM is used only to define the witness map $S(c\mid z)$.

\end{document}